\documentclass[twoside,11pt]{article}
\newif\ifarxiv
\arxivtrue
\usepackage{amsmath} %
\usepackage{amsthm} %
\let\amsproof\proof
\let\endamsproof\endproof
\ifarxiv
\usepackage[preprint]{jmlr2e} %
\else
\usepackage{jmlr2e} %
\fi
\let\proof\amsproof
\let\endproof\endamsproof
\renewcommand{\qedsymbol}{\BlackBox}
\usepackage{graphicx} %
\usepackage{tikz}
\usetikzlibrary{positioning,arrows.meta,calc,fit,decorations.pathreplacing,backgrounds}
\usepackage{standalone} %
\usepackage{amssymb} %
\usepackage{bm}
\usepackage{todonotes}
\usepackage{enumerate}
\usepackage{titletoc}
\usepackage{tabularx}
\usepackage{float}

\hypersetup{
	colorlinks   = true,
	urlcolor     = blue,
	linkcolor    = blue,
	citecolor   = blue
}
\usepackage[capitalize]{cleveref}

\usepackage{bm}
\usepackage{dsfont}

\newcommand{\ind}{\mathds{1}}
\newcommand{\R}{\mathds{R}}
\newcommand{\N}{\mathds{N}}

\newcommand{\Z}{\mathds{Z}}
\newcommand{\E}{\mathds{E}}
\newcommand{\K}{\mathds{K}}

\providecommand{\s}{}

\newcommand{\ba}{\bm{a}}
\newcommand{\bb}{\bm{b}}
\newcommand{\bc}{\bm{c}}
\newcommand{\bd}{\bm{d}}
\newcommand{\be}{\bm{e}}

\newcommand{\bu}{\bm{u}}
\newcommand{\bv}{\bm{v}}

\newcommand{\bx}{\bm{x}}
\newcommand{\by}{\bm{y}}
\newcommand{\bz}{\bm{z}}

\newcommand{\Bcal}{\mathcal{B}}
\newcommand{\Ccal}{\mathcal{C}}

\newcommand{\Fcal}{\mathcal{F}}
\newcommand{\Gcal}{\mathcal{G}}
\newcommand{\Hcal}{\mathcal{H}}

\newcommand{\Ncal}{\mathcal{N}}

\newcommand{\eps}{\varepsilon}

\newcommand{\balpha}{\bm{\alpha}}
\newcommand{\bbeta}{\bm{\beta}}

\newcommand{\col}{\operatorname{col}}

\newcommand{\ra}{\rangle}

\renewcommand{\bar}{\overline}

\newcommand{\wh}[1]{\widehat{#1}}
\newcommand{\wt}[1]{\widetilde{#1}}

\makeatletter
\def\blfootnote{\gdef\@thefnmark{}\@footnotetext}
\makeatother

\newcommand{\envbreak}{ \, \\[-12pt]}

\usepackage{setspace}

\newtheorem{definition}{Definition}
\newtheorem{lemma}{Lemma}
\newtheorem{proposition}{Proposition}
\newtheorem{remark}{Remark}
\newtheorem{corollary}[theorem]{Corollary}

\numberwithin{theorem}{section}
\numberwithin{lemma}{section}
\numberwithin{proposition}{section}
\numberwithin{remark}{section}
\numberwithin{example}{section}
\numberwithin{definition}{section}
\numberwithin{equation}{section}

\Crefname{theorem}{Theorem}{Theorems}
\Crefname{corollary}{Corollary}{Corollaries}
\Crefname{lemma}{Lemma}{Lemmas}
\Crefname{proposition}{Proposition}{Propositions}
\Crefname{assumption}{Assumption}{Assumptions}
\Crefname{definition}{Definition}{Definitions}
\Crefname{remark}{Remark}{Remarks}
\Crefname{example}{Example}{Examples}

\DeclareMathOperator{\clip}{clip}

\newcommand{\floor}[1]{\mathopen{}\mathclose{\left\lfloor #1 \right\rfloor}}
\newcommand{\ceil}[1]{\mathopen{}\mathclose{\left\lceil #1 \right\rceil}}

\ifarxiv\else
\ShortHeadings{Optimal ReLU Approximation of Smooth Functions}{Nagler}
\fi
\firstpageno{1}

\title{Optimal Approximation of Smooth Functions by ReLU Neural Networks}
\author{\name Thomas Nagler \email t.nagler@lmu.de \\
       \addr Department of Statistics, LMU Munich \\
       Munich Center for Machine Learning (MCML)
       }
\editor{}

\begin{document}
\maketitle
\begin{abstract}%
We study uniform approximation of the H\"older ball $\mathcal H_1^s([0,1]^D)$ by fully connected ReLU networks. Except in the one-dimensional Lipschitz case, existing constructions remain separated from the information-theoretic efficiency limit. This article closes the gap for every $s>0$ and $D\ge1$: networks with width $O(N)$, depth $O(L)$, and parameter magnitude $O(1)$ attain uniform approximation error of order $(N^2L^2\log N)^{-s/D}$. No relation between width and depth is required. The construction uses new encoding and decoding mechanisms that exploit higher-order finite-difference bounds for local Taylor coefficients of the target function. As a statistical consequence, empirical risk minimizers over the corresponding clipped network classes attain the minimax regression rate $n^{-2s/(2s+D)}$ without additional logarithmic factors. The paper also gives a detailed exposition of the key approximation techniques, covering both established and new mechanisms.
\end{abstract}

\begin{keywords}
  ReLU networks, approximation rates, H\"older functions, metric entropy, nonparametric regression
\end{keywords}

\section{Introduction}
\label{sec:introduction}

Approximation theory for neural networks has a long history. The universal approximation theorems of
\citet{cybenko1989approximation} and \citet{hornik1989multilayer} show that shallow neural
networks can approximate every continuous function on a compact set, but they are qualitative.
They do not determine how the network size must grow to achieve a target
accuracy or, conversely, what accuracy can be achieved with a given architecture.
Such quantitative bounds have become one of the central problems in neural-network
approximation \citep{devore2021neural,dahmen2022special}.

This article concerns the most common instance of the problem.
The goal is to construct approximations of smooth functions $f$ belonging to the H\"older ball
$\Hcal_1^s([0,1]^D)$ by ReLU networks with width $N$, depth $L$, and bounded parameters. The worst-case error $\epsilon_{N, L}$ attained by a given architecture quantifies how efficiently the network class uses its capacity.
There is an information-theoretic limit on this efficiency.
Let $\Ncal(\epsilon, \Fcal, \|\cdot\|_\infty)$ be the covering number of a function class $\Fcal$ at accuracy $\epsilon$ in the sup-norm. The classical entropy estimate \citep[e.g.,][Theorem~27.14]{polyanskiyWu2025}
\begin{align*}
    \log \mathcal N\bigl(\epsilon,
        \Hcal_1^s([0,1]^D),\|\cdot\|_\infty\bigr)
    &\asymp \epsilon^{-D/s}
\end{align*}
shows how much information is needed to represent the H\"older ball at
accuracy $\epsilon$. Any function class that uniformly approximates $\Hcal_1^s([0,1]^D)$ with accuracy $\epsilon$ must have entropy of the same order or higher.
The entropy of ReLU
networks with parameters bounded by $B$ is of order $N^2L^2\log (NB)$ at the relevant
scale \citep{ou2024covering}, and hence of order $N^2L^2\log N$ whenever $B$ grows at most polynomially in $N$. Equating the entropy estimates gives the optimal error
\begin{align*}
    \epsilon_{N, L} = (N^2L^2\log N)^{-s/D}.
\end{align*}

Over the last decade, several works have moved increasingly close to this benchmark.
\citet{yarotsky2017error,yarotsky2018optimal} established approximation rates for smooth
functions and the rate $L^{-2s/D}$ for fixed-width, very deep networks when
$0<s\le1$. \citet{yarotsky2020phase} extended this rate for very deep networks to arbitrary
smoothness, and \citet{siegel2023optimal} to Sobolev and Besov
spaces. \citet{schmidt2020nonparametric} constructed sparse
networks, while \citet{kohler2021rate} obtained error bounds of $N^{-2s/D}$ for wide fully connected networks and $L^{-2s/D}$ for fixed-width deep networks. For arbitrary integer smoothness,
\citet{Lu2021} derived simultaneous width--depth bounds with logarithmic overhead in
both width and depth. For $0<s\le1$, \citet{shen2022optimal} removed these architectural
overheads, but did not track parameter magnitudes. \citet{ou2024covering} attained the
entropy-optimal bounded-parameter rate in the special case $s=D=1$. For arbitrary smoothness, \citet{yang2025optimal} obtained error $(N^2L^2)^{-s/D}$ with width $O(N)$ and depth $O(L)$, and untracked parameter magnitudes (but at least exponential in $L$). Measured against the target error $\epsilon_{N, L}$, this leaves root-logarithmic overhead in width. \citet{nagler2026optimal} obtained the same error with parameter magnitudes bounded by $O(N)$, but with the same width overhead and additional depth overhead when $\log L \gg \log N$. The bounds
are summarized in \Cref{tab:approximation-literature}.

It therefore remains unclear whether ReLU networks are an optimal class of approximators
of general smooth functions. We close this gap for every $s>0$ and $D\in\N_+$.

\begin{table}[tp]
    \centering
    \footnotesize
    \setlength{\tabcolsep}{3pt}
    \renewcommand{\arraystretch}{1.15}
    \begin{tabularx}{\textwidth}{@{}>{\raggedright\arraybackslash}p{0.25\textwidth}
        >{\raggedright\arraybackslash}p{0.115\textwidth}
        >{\raggedright\arraybackslash}p{0.075\textwidth}
        >{\raggedright\arraybackslash}p{0.195\textwidth}
        >{\raggedright\arraybackslash}p{0.115\textwidth}
        >{\raggedright\arraybackslash}X@{}}
        \hline
        Reference
        & Smoothness
        & Width
        & Depth
        & Parameter magnitude
        & Uniform error \\
        \hline
        \citet{yarotsky2018optimal}
        & $0<s\le1$
        & $1$
        & $L$
        & Not tracked
        & $L^{-2s/D}$ \\[2pt]
        \citet{yarotsky2020phase}
        & $s>0$
        & $1$
        & $L$
        & Not tracked
        & $L^{-2s/D}$ \\[2pt]
        \citet{schmidt2020nonparametric}
        & $s>0$
        & $N^2$
        & $\log N$
        & $1$ (sparse)
        & $N^{-2s/D}$ \\[2pt]
        \citet{kohler2021rate}
        & $s>0$
        & $N$
        & $\log N$
        & Not tracked
        & $N^{-2s/D}$ \\[2pt]
        \citet{kohler2021rate}
        & $s>0$
        & $1$
        & $L$
        & Not tracked
        & $L^{-2s/D}$ \\[2pt]
        \citet{Lu2021}
        & $s\in\N_+$
        & $N\log N$
        & $L\log L$
        & Not tracked
        & $(N^2L^2)^{-s/D}$ \\[2pt]
        \citet{shen2022optimal}
        & $0<s\le1$
        & $N$
        & $L$
        & Not tracked
        & $(N^2L^2\log N)^{-s/D}$ \\[2pt]
        \citet{yang2025optimal}
        & $s>0$
        & $N$
        & $L$
        & Not tracked
        & $(N^2L^2)^{-s/D}$ \\[2pt]
        \citet{nagler2026optimal}
        & $s>0$
        & $N$
        & $L+L\sqrt{\log L/\log N}$
        & $N$
        & $(N^2L^2)^{-s/D}$ \\[2pt]
        \citet{ou2024covering}
        & $s=D=1$
        & $N$
        & $L$
        & $1$
        & $(N^2L^2\log N)^{-1}$ \\[2pt]
        \hline
        Present work
        & $s>0$
        & $N$
        & $L$
        & $1$
        & $(N^2L^2\log N)^{-s/D}$ \\
        \hline
    \end{tabularx}
    \caption{Overview of quantitative approximation bounds for functions on $[0,1]^D$ with H\"older-smoothness $s > 0$. Width, depth, and parameter magnitude are stated up to constant factors.}
    \label{tab:approximation-literature}
\end{table}

\medskip
\noindent\textit{Main results.}  For any $f \in \Hcal_1^s([0,1]^D)$, we construct an explicit ReLU network $\phi$
with width $O(N)$, depth $O(L)$, and parameter magnitude $O(1)$ such that
\begin{align*}
    \|f-\phi\|_\infty
    &\lesssim \epsilon_{N,L}.
\end{align*}
The result imposes no relation between width and depth. For $s>1$, it is the first bound that attains $\epsilon_{N,L}$ without overhead in width or depth. In particular, it is the first to achieve the rate $(N^2\log N)^{-s/D}$ for fixed depth. The result is optimal in the sense that no ReLU construction of comparable size can achieve a smaller error, and neither can any other function class with similar metric capacity.

Like the current state-of-the-art constructions, our approach
stores and retrieves local Taylor coefficients by bit extraction. At the optimal scale, the network has to store $K^D\asymp N^2L^2\log N$ Taylor coefficients, while its capacity is of order $N^2L^2\log N$ digits. Storing each coefficient on its own with sufficient accuracy costs order $\log K$ digits, which is the source of the logarithmic gaps in previous constructions. The key insight is that Taylor coefficients of smooth functions have additional regularity that makes them more compressible. Our main new ingredient is a \emph{higher-order point-fitting} network: it stores only the quantized higher-order differences of a coefficient sequence, each in a single digit, and reconstructs the values by a segmented discrete Taylor--Newton recursion with bounded weights. To apply it in dimension $D\ge2$, we flatten the coefficient grid into a single sequence that keeps these difference bounds by inserting Hermite interpolation bridges between its rows.
Along the way, we give simplified, self-contained proofs of all the standard
components. In particular, median smoothing needs only $O(D)$ copies of the
network instead of $3^D$, and a reduced-alphabet base-$\zeta$ digit extraction allows storing bounded integers directly in a network with controlled parameter magnitudes.

The approximation-theoretic result also has implications for machine learning and statistics.
We show that any approximate empirical risk minimizer $\wh f_n$ over suitably sized networks
satisfies
\begin{align*}
    \sup_{f_0\in\Hcal_1^s([0,1]^D)}\E_{f_0}\bigl[\|\widehat f_n-f_0\|_{L_2(P)}^2\bigr]
    &\lesssim n^{-2s/(2s+D)}.
\end{align*}
This rate matches the minimax lower bound of
\citet{stone1982optimal} over the H\"older ball and removes the polylogarithmic factors from previous works.

\medskip
\noindent\textit{Outline.}
\Cref{sec:prelim} introduces the notation.
\Cref{sec:approx} states the main approximation and optimality results, and
\cref{sec:regression} gives the regression result. \Cref{sec:proof-sketch-compact} gives a comprehensive overview of the construction and proof ideas. It is deliberately expository and also reviews and simplifies already established techniques.
Detailed proofs are given in the appendices.

\section{Setup and Notation}
\label{sec:prelim}
We use the following notation throughout the paper. For $f \colon S \to \R$, we write
$\|f\|_{\infty,S}=\sup_{\bx\in S}|f(\bx)|$, omitting $S$ when it is clear from
context, and $\| f\|_{L_2(P)}^2 = \int f(\bx)^2 dP(\bx)$ for a measure $P$. The modulus of continuity of $f \in C([0,1]^D)$ is
\begin{align*}
    \omega_f(\delta) = \sup\{|f(\bx)-f(\by)|\colon \bx,\by\in[0,1]^D,
    \|\bx-\by\|_\infty\le\delta\}, \qquad \delta > 0.
\end{align*}
Our approximation results are stated for the following Hölder
classes.

\begin{definition}[Hölder class]
    \label{def:holder}
    Let $S$ be a closed box in $\R^D$, let $s,C>0$, and set $m = \lceil s\rceil-1$.
    For a multi-index $\balpha=(\alpha_1,\ldots,\alpha_D)\in\N_0^D$, write
    $\|\balpha\|_1=\sum_{j=1}^D\alpha_j$ and
    $\partial^{\balpha}=\partial_1^{\alpha_1}\cdots\partial_D^{\alpha_D}$.
    The Hölder class $\Hcal_C^s(S)$ consists of all $m$ times continuously
    differentiable functions $f\colon S\to\R$ that satisfy the following
    bounds:
    \begin{align*}
        \max_{\balpha:\,\|\balpha\|_1\le m}
        \|\partial^{\balpha}f\|_{\infty,S}
        \le C,\qquad
        \max_{\balpha:\,\|\balpha\|_1=m}
        \sup_{\substack{\bx,\by\in S \\ \bx\ne\by}}
        \frac{\big|\partial^{\balpha}f(\bx)
        -\partial^{\balpha}f(\by)\big|}
        {\|\bx-\by\|_\infty^{s - m}}
        \le C.
    \end{align*}
\end{definition}
In particular, every $f\in\Hcal_C^s(S)$ satisfies
$\|f\|_{\infty,S}\le C$ and, when $s$ is an integer, derivatives of order $s-1$ are
required to be Lipschitz continuous.

We consider fully connected feedforward networks with the ReLU activation
function.
\begin{definition}[Class of fully connected neural networks]
    Let $\sigma : \R \to \R$, $\sigma(x)=\max\{x,0\}=(x)_+$ be the ReLU activation function, and for $v \in \R$ denote by
    $\sigma_v(t) = \sigma(t + v)$ the $v$-biased ReLU function.
    For vectors $\bx = (x_1, \dots, x_D)^\top \in \R^D$ and $\bv = (v_1, \dots, v_D)^\top \in \R^D$, we apply $\sigma_{\bv}$ componentwise, i.e.,
    \begin{align*}
        \sigma_{\bv}(\bx) = (\sigma(x_1+v_1), \dots, \sigma(x_D+v_D))^\top = ((x_1+v_1)_+, \dots, (x_D+v_D)_+)^\top.
    \end{align*}
    A function $f : \R^{N_0} \to \R^{N_{L + 1}}$ is called a
    \emph{fully connected neural network} with depth $L \in \N$, and width vector
    $(N_0, N_1, \dots, N_{L + 1}) \in \N^{L+2}$ if it can be written as
    \begin{align}
        \label{eq:FNN}
        f(\bx) = \bv_L +  W_L  \sigma_{\bv_{L-1}} \cdots W_1 \sigma_{\bv_0} W_0 \bx ,
    \end{align}
    where for each $i = 0, \dots, L$,
    $W_i \in \R^{N_{i + 1} \times N_{i}}$ is the weight matrix and
    $\bv_i \in \R^{N_{i + 1}}$ is the bias vector. Let $B_i = \max\{\max_{j,k} |W_{i, j,k}|, \|\bv_i\|_{\infty}\}$ be the maximum parameter magnitude in layer $i$.
    The class of such networks with depth $L$, maximum hidden-layer width
    $N \in \N$, and maximum parameter magnitude $B \in (0,\infty)$ is defined
    as
    \begin{align*}
        \mathcal{F}_{d_0, d_{L + 1}}(N, L, B)
        &= \left\{ f\text{ of the form \eqref{eq:FNN}}:
        \begin{gathered}
            N_0 = d_0,\quad N_{L + 1} = d_{L + 1}, \\
            \max_{1\le i\le L} N_i \le N,\quad
            \max_{0\le i\le L} B_i \le B
        \end{gathered}\right\}.
    \end{align*}
\end{definition}
We write
    $\phi\in\wt\Fcal_{d,d'}(N,L,B)$
if there is a constant $C \in \N_+$ not depending on
$N,L,B$ such that
    $\phi\in\Fcal_{d,d'}(CN,CL,CB).$
The constant may depend on the smoothness and dimension of the target function.

To measure the complexity of function classes, we use the following
standard notation.
\begin{definition}[Covering number and entropy]
    Let $(V,\|\cdot\|)$ be a normed space, $\Fcal\subset V$, and
    $\epsilon>0$. A finite collection
    $\{g_1,\dots,g_M\}\subset V$ is an $\epsilon$-cover of $\Fcal$ if, for
    every $f\in\Fcal$, there is an $i\in\{1,\dots,M\}$ such that
    $\|f-g_i\|\leq\epsilon$. The covering number
    $\mathcal N(\epsilon,\Fcal,\|\cdot\|)$ is the smallest cardinality of such
    a cover. We call $\log\mathcal N(\epsilon,\Fcal,\|\cdot\|)$ the metric entropy.
\end{definition}

We write $f \lesssim g$ if there is $C\in(0,\infty)$ such that
$f\le Cg$ and, similarly, $f\gtrsim g$ if $g\lesssim f$. If $f\lesssim g$ and
$f\gtrsim g$, we write $f\asymp g$. All unqualified logarithms are natural
logarithms. For real numbers $c\le d$, define
$\clip_{[c,d]}(t)=\min\{\max\{t,c\},d\}$. We apply this map coordinate-wise
to vectors and write $\clip_{[c,d]^m}$ in that case.

\section{Main Results}
\label{sec:main-results}

We now formally state our main results: an approximation error bound, its optimality, and the resulting statistical error rates for nonparametric regression.

\subsection{Optimal Approximation}
\label{sec:approx}

The main contribution of this article is a new bound on the approximation error of ReLU neural networks.
\begin{theorem}[Approximation]
    \label{thm:main-approx}
    Let $s>0$ and $D\in\N_+$. Then there are constants $A\in\N_+$ and $C\ge1$, depending only on $s$ and $D$, such
    that, for every $N\ge2$ and
    $L\in\N_+$, the class $\Fcal^{N,L}:=\Fcal_{D, 1}(AN,AL,1)$ satisfies
    \begin{align}
        \label{eq:compact-optimal-approx}
        \sup_{f \in \Hcal_1^s([0, 1]^D)}
        \inf_{\phi \in \Fcal^{N,L}}\|f - \phi\|_\infty
         & \le C\epsilon_{N,L},
         \qquad \epsilon_{N,L}:=(N^2L^2\log N)^{-s/D}.
    \end{align}
    Both $A$ and $C$ are bounded by a polynomial in $D$.
\end{theorem}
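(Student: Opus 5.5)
We will follow the now-standard ``localize, encode, decode'' scheme, replacing the per-coefficient storage step by a higher-order finite-difference encoding. Fix $K\asymp (N^2L^2\log N)^{1/D}$ and partition $[0,1]^D$ into $K^D$ cubes $Q_{\bm\beta}$ of side $1/K$, with anchors $\bx_{\bm\beta}$. On each cube we approximate $f$ by its Taylor polynomial $T_{\bm\beta}(\bx)=\sum_{\|\balpha\|_1\le m}\partial^{\balpha}f(\bx_{\bm\beta})(\bx-\bx_{\bm\beta})^{\balpha}/\balpha!$. By \Cref{def:holder} this has local error $\lesssim K^{-s}\asymp\epsilon_{N,L}$. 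The network will (i) compute the index $\bm\beta(\bx)$ and the local offset $\bx-\bx_{\bm\beta}$ on all but a thin ``trifling'' region near cube boundaries; (ii) retrieve the coefficients $c_{\balpha,\bm\beta}=\partial^{\balpha}f(\bx_{\bm\beta})$, rescaled by $K^{-(s-\|\balpha\|_1)}$, to accuracy $O(1)$ in rescaled units, i.e.\ $\partial^{\balpha}f$ to accuracy $K^{-(s-\|\balpha\|_1)}$; (iii) evaluate the polynomial by Yarotsky-type multiplication networks of width $O(N)$ and depth $O(L)$ with error $(NL)^{-C}\ll\epsilon_{N,L}$. Finally, (iv) we remove the trifling region by median smoothing over $O(D)$ shifted copies, as announced in the introduction. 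Steps (i), (iii) and (iv) are routine with width $O(N)$ and depth $O(L)$, and they can all be carried out with weights bounded by $1$: large scalings are replaced by repeated doubling layers, and $\zeta$-ary digit extraction is done with a reduced alphabet.

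The crux is step (ii). The budget is $O(N^2L^2)$ parameters, and each parameter can hold only $O(\log N)$ bits while keeping weights bounded. Storing $K^D$ coefficients to relative accuracy $K^{-s}$ naively costs $\log K$ digits each, which is exactly the logarithmic gap to be closed. The plan is to flatten the multi-index grid into a single sequence $(c_j)_{j\le K^D}$ for each $\balpha$. Within a row along coordinate $1$, we use the fact that the $r$-th forward differences of $j\mapsto K^{s-\|\balpha\|_1}\partial^{\balpha}f(\bx_j)$ are $O(1)$ for $r=m-\|\balpha\|_1+1$; this holds because Taylor expansion of $\partial^{\balpha}f$ gives smoothness of order $s-\|\balpha\|_1$ in the step $1/K$. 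Between consecutive rows we insert Hermite interpolation bridges of bounded length. These bridges join the end of one row to the start of the next with matching derivatives, so the bounded-difference property persists along the whole flattened sequence. We quantize so that each stored higher-order difference is a digit in a fixed alphabet; the rounding errors are controlled by carrying the quantization inside the recursion (error feedback), so they do not accumulate.

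The decoder is the ``higher-order point-fitting'' network. We split the sequence into $O(N^2L^2/\text{(segment length)})$ segments. For each segment we store the initial values $c_j,\Delta c_j,\dots,\Delta^{r-1}c_j$ exactly at a coarse level, which costs $O(\log K)$ digits but is amortized over a segment of length $\asymp\log N$. The digits of $\Delta^r c$ are stored packed, $\asymp\log N$ per weight. Given the segment index, a first subnetwork selects the segment data via the standard point-fitting construction with width $N$ and depth $L$. A second subnetwork then extracts digits sequentially and reconstructs $c_{j+t}$ by the discrete Taylor--Newton formula $c_{j+t}=\sum_{k<r}\binom{t}{k}\Delta^k c_j+\sum\binom{t-i-1}{r-1}\Delta^r c_{j+i}$. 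This formula is implemented as a running update of the vector $(\Delta^0,\dots,\Delta^{r-1})$ with bounded integer weights, so parameter magnitude stays $O(1)$.

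The main obstacle I expect is keeping every intermediate quantity and weight bounded while doing this recursion inside depth $O(L)$ and width $O(N)$. The difference vector grows polynomially within a segment, and segment starts need $\log K$-bit values. I would first try to fix both the segment length and the quantization scale so that the running values stay within $\mathrm{poly}(\log N)$ in rescaled units. They could then be represented by sums of $O(1)$-weighted units across $O(\log N)$ layers distributed over the depth budget.
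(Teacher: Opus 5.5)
Your architecture matches the paper's: Hermite-bridge flattening, rounding the values so that the $h$-th differences become digits in a reduced base-$\zeta$ alphabet, Taylor--Newton decoding, monomial networks, and diagonal median smoothing. The gap is in how you organize the decoder, which as set up does not fit the budget.

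\textbf{Where it fails.} With segments of length $\asymp\log N$ there are $\asymp K^D/\log N\asymp N^2L^2$ segments. Each carries its own initial vector of $O(\log K)$ digits, so the amortized cost is $\log K/\log N\asymp 1+\log L/\log N$ digits per coefficient. This is not $O(1)$ when $\log L\gg\log N$ (e.g.\ $N$ fixed, $L\to\infty$), and the theorem covers that regime because it assumes no relation between $N$ and $L$.

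The retrieval step fails for the same reason. Exact CPwL point fitting with width $N$ and depth $L$ handles only $O(N^2L)$ points (\cref{prop:pw-linear}). No scheme that treats $N^2L^2$ segment states as unrelated labels at resolution $\asymp K^{-s}$ can exist with polynomially bounded weights. It would push the sup-norm entropy of the class to order $N^2L^2\log K$, which exceeds the covering bound $N^2L^2\log N+N^2L\log(1/\epsilon)$ once $\log L\gg\log N$. So the $\log K$-per-coefficient overhead you set out to remove reappears at the level of segment states.

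Your closing paragraph does not remove it. Running values also cannot stay within $\mathrm{poly}(\log N)$ in rescaled units, since they are of order $\lambda^{-1}=K^{s-\|\balpha\|_1}$.

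\textbf{The missing idea.} Let the depth carry the recursion across many segments, so that a full-precision state is stored only once per block of $L$ segments. In the paper:
\begin{itemize}
\item A block has length $Lq$ with $q=\lfloor\log_\zeta N\rfloor$, and there are $O(N^2L)$ blocks.
\item For each block one stores the state $(\Delta^r\bar y_1)_{r<h}$ in original units (bounded by $H$, integer multiples of $\lambda$, so nothing is rescaled by $\lambda^{-1}$). One also stores a single number holding all $Lq$ digits. Both are retrieved by exact CPwL interpolation.
\item The decoder composes $L$ copies of one segment network. Copy $t$ extracts $q$ digits and applies the exact update $u^{(t+1)}_r=\sum_{p=0}^{h-r-1}\binom{q}{p}u^{(t)}_{r+p}+\lambda\sum_{j}\binom{q-j}{h-r-1}d_{tq+j}$.
\item An exact gate writes the output only in the segment containing the queried position. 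This also discards segments where the state may have left its admissible range.
\end{itemize}
This amortizes $O(\log K)\lesssim\log N+\log L\lesssim L\log N$ digits over $Lq$ positions, giving $O(1)$ digits per coefficient for all $N,L$.

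The price is weights of size $\binom{q}{p}\le q^{h-1}$, polylogarithmic in $N$. A one-position-per-layer update with integer weights, as you suggest, would cost depth $\asymp Lq$ per block. These weights, like the $O(N)$ slopes in digit extraction, are then removed by magnitude-to-width transfer (\cref{lem:magnitude-width}), which amplifies by a factor $\asymp N$ per layer. Repeated doubling layers would instead multiply the depth by $\log N$.

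\textbf{Bridge length.} The Hermite bridges must span order $K$ indices; in the paper each bridge is a unit interval sampled at spacing $1/K$. The end of one row and the start of the next can differ by order one, so a bridge with $O(1)$ samples would produce order-one $h$-th differences.
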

The theorem imposes no constraints on the relation between width and depth.
It can also be inverted: for every $\epsilon\in(0,1)$, choosing $N, L$ with $N^2L^2\log N\ge\epsilon^{-D/s}$ gives uniform error at most $C\epsilon$.
The construction and its relation to earlier approximation strategies are
developed in \cref{sec:proof-sketch-compact}.

\Cref{thm:main-approx} matches the bound of \citet{ou2024covering} for $s=D=1$.
For $0<s\le1$, it attains the same rate as \citet{shen2022optimal}, but with bounded parameters.
For $s>1$, it improves on all previous bounds and removes all architectural overheads.
In fact, the rate in \cref{thm:main-approx} cannot be improved, as made precise by the following lower bound.

\begin{theorem}[Lower bound]
    \label{thm:lower-bound}
    Let $s>0$, $D\in\N_+$, and $b\ge0$. There is $c=c(s,D,b)>0$ such that, for all
    $N\ge2$, $L\in\N_+$,
    \begin{align*}
        \sup_{f \in \Hcal_1^s([0, 1]^D)}
        \inf_{\phi \in \Fcal_{D,1}(N,L,N^b)}\|f - \phi\|_\infty
        &\ge c\,\epsilon_{N,L}.
    \end{align*}
\end{theorem}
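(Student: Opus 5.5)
The lower bound follows from comparing metric entropies. If every $f\in\Hcal_1^s([0,1]^D)$ is within $\epsilon$ of some network in $\Fcal_{D,1}(N,L,N^b)$, then an $\epsilon$-cover of the network class gives a $2\epsilon$-cover of the Hölder ball. Writing $\epsilon=\sup_f\inf_\phi\|f-\phi\|_\infty$ and $\eta>0$, we get
\[
\log\mathcal N(2(\epsilon+\eta),\Hcal_1^s,\|\cdot\|_\infty)\le \log\mathcal N(\epsilon+\eta,\Fcal_{D,1}(N,L,N^b),\|\cdot\|_{\infty,[0,1]^D}).
\]
The covers here may consist of arbitrary bounded functions, so the usual factor-of-two loss between internal and external covers is absorbed. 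The left side is at least $c_1(\epsilon+\eta)^{-D/s}$ by the classical entropy estimate quoted in the introduction, valid for $\epsilon$ below a constant. If $\epsilon$ is above that constant, the claim is trivial, because $\epsilon_{N,L}\le(\log 2)^{-s/D}$ is bounded.

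For the right side I would prove a covering bound for bounded-parameter networks. Parameters lie in $[-N^b,N^b]$, there are $P\lesssim N^2L$ of them (more precisely $P\le (N+1)^2(L+1)+O(N)$), and on $[0,1]^D$ the map from parameters to functions is Lipschitz in sup-norm. The Lipschitz constant is at most $\Lambda\le (L+1)(2NB+D)^{L+1}$ with $B=N^b$, by the standard telescoping argument through the layers. A parameter grid of mesh $\epsilon/\Lambda$ then yields
\[
\log\mathcal N\lesssim P\log(\Lambda N^b/\epsilon)\lesssim N^2L\bigl(L\log N+\log(1/\epsilon)\bigr).
\]
This crude bound is off by a factor of $L$ compared with the $N^2L^2\log N$ figure attributed to \citet{ou2024covering}. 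I would therefore invoke that result, which the introduction cites as giving entropy of order $N^2L^2\log(NB)$ at the relevant scale. Alternatively, I would argue directly through VC/pseudo-dimension: the pseudo-dimension of ReLU networks with $W\asymp N^2L$ weights and depth $L$ is $O(WL\log W)=O(N^2L^2\log N)$ (Bartlett et al.). Combined with the sup-norm-to-empirical comparison on a fine grid of $[0,1]^D$ (valid since networks are $\Lambda$-Lipschitz in $x$), this gives $\log\mathcal N\lesssim N^2L^2\log N\cdot\log(\Lambda/\epsilon)$ in the empirical norm. That extra log is too much, so the cleaner route is the one in \citet{ou2024covering}: bound $\log\mathcal N(\epsilon)\lesssim N^2L^2\log(NB)$ for $\epsilon\ge (N^2L^2)^{-C}$. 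That is the relevant scale, because the candidate $\epsilon\approx\epsilon_{N,L}$ is polynomial in $NL$.

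Granting $\log\mathcal N(\epsilon,\Fcal_{D,1}(N,L,N^b))\le c_2(b+1)N^2L^2\log N$ for $\epsilon\ge (NL)^{-C'}$, the conclusion is a case split. If $\epsilon<(NL)^{-C'}$ with $C'$ larger than $2s/D\cdot 2$, then $\epsilon^{-D/s}$ would exceed even the crude parameter-counting bound $N^2L(L\log N+\log(1/\epsilon))$ for large $NL$, which is a contradiction. Small $NL$ is absorbed into the constant. Otherwise $c_1\epsilon^{-D/s}\le c_2(b+1)N^2L^2\log N$, so $\epsilon\ge c\,(N^2L^2\log N)^{-s/D}$ after letting $\eta\to0$.

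The main obstacle is the sharp entropy bound with $L^2$ rather than $L^3$, and with $\log N$ rather than $\log(NL)$ or $L\log N$. If I must prove it rather than cite it, I would try a bit-counting argument. Discretise the weights to $O(\log N)$ bits each, and handle the exponential-in-$L$ error amplification by showing that a piecewise-linear network restricted to a fine grid is determined by its sign patterns. The number of sign patterns is controlled by the Bartlett-type region-counting argument, which gives $\exp(O(N^2L^2\log N))$.
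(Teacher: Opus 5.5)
Your argument is correct and follows the paper's own proof: the same entropy-comparison step (your $2(\epsilon+\eta)$ device replaces the paper's factor $3$), the H\"older-ball entropy estimate, the network covering bound attributed to \citet{ou2024covering}, and a two-case split according to which term of that bound dominates. One correction: your parameter-grid bound $N^2L\bigl(L\log N+\log(1/\epsilon)\bigr)$ is not off by a factor $L$. It is exactly the bound $N^2L^2\log(NB)+N^2L\log(1/\epsilon)$ with $B=N^b$ that the paper quotes and uses, so your own elementary argument already gives $\lesssim(1+b)N^2L^2\log N$ for $\epsilon\ge(NL)^{-C'}$, and the ``main obstacle'' and the sign-pattern argument are unnecessary.
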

The proof is given in \cref{sec:entropy-proof}.
Together with \cref{thm:main-approx}, this shows that $\epsilon_{N,L}$ is the exact rate of uniform approximation of the H\"older ball by ReLU networks with bounded parameters.
In fact, even allowing parameters that grow polynomially in the width does not improve the rate.
The lower bound in \cref{thm:lower-bound} relies on a metric entropy argument that also establishes optimality in a broader sense: no other function class with similar metric capacity can achieve a smaller uniform approximation error.

\begin{corollary}[Entropy optimality]
    \label{cor:network-entropy}
    If $C\epsilon_{N,L}<1/4$, the class $\Fcal^{N,L}$ of
    \cref{thm:main-approx} satisfies
    \begin{align}
        \label{eq:compact-optimal-entropy}
        \log \mathcal N\bigl(\epsilon_{N,L}, \Fcal^{N,L}, \|\cdot\|_\infty\bigr)  \asymp \inf_{\Gcal}
        \log \mathcal N\bigl(\epsilon_{N,L}, \Gcal, \|\cdot\|_\infty\bigr),
    \end{align}
    where the infimum is over all function classes attaining \eqref{eq:compact-optimal-approx} with $\Fcal^{N,L}$ replaced by $\Gcal$.
\end{corollary}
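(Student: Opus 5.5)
We prove the two inequalities in \eqref{eq:compact-optimal-entropy} separately. Write $\epsilon=\epsilon_{N,L}$ and $\Hcal=\Hcal_1^s([0,1]^D)$.

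\emph{Lower bound: every admissible class has large entropy.} Let $\Gcal$ attain \eqref{eq:compact-optimal-approx}, so every $f\in\Hcal$ has some $g_f\in\Gcal$ with $\|f-g_f\|_\infty\le C\epsilon$. Take an $\epsilon$-cover $\{h_1,\dots,h_M\}$ of $\Gcal$. Then $\{h_i\}$ is a $(C+1)\epsilon$-cover of $\Hcal$, since
\begin{align*}
    \|f-h_i\|_\infty\le\|f-g_f\|_\infty+\|g_f-h_i\|_\infty\le (C+1)\epsilon .
\end{align*}
Hence $\log\mathcal N(\epsilon,\Gcal,\|\cdot\|_\infty)\ge\log\mathcal N((C+1)\epsilon,\Hcal,\|\cdot\|_\infty)$.

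The classical entropy estimate recalled in the introduction gives $\log\mathcal N(\delta,\Hcal,\|\cdot\|_\infty)\gtrsim\delta^{-D/s}$. We apply it with $\delta=(C+1)\epsilon$, which lies in the range where the estimate holds; this is where the hypothesis $C\epsilon<1/4$ is used. It yields
\begin{align*}
    \log\mathcal N(\epsilon,\Gcal,\|\cdot\|_\infty)\gtrsim ((C+1)\epsilon)^{-D/s}\asymp\epsilon^{-D/s}=N^2L^2\log N .
\end{align*}
Since $C$ depends only on $s,D$, the implicit constants depend only on $s,D$. This bounds the infimum in \eqref{eq:compact-optimal-entropy} from below by a multiple of $N^2L^2\log N$.

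\emph{Upper bound for the network class.} By \cref{thm:main-approx}, $\Fcal^{N,L}$ itself attains \eqref{eq:compact-optimal-approx}, so it is one of the classes $\Gcal$. Therefore
\begin{align*}
    \inf_\Gcal\log\mathcal N(\epsilon,\Gcal,\|\cdot\|_\infty)\le\log\mathcal N(\epsilon,\Fcal^{N,L},\|\cdot\|_\infty).
\end{align*}
Combined with the lower bound, it remains to show $\log\mathcal N(\epsilon,\Fcal^{N,L},\|\cdot\|_\infty)\lesssim N^2L^2\log N$.

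For this we use the covering bound for bounded-parameter ReLU networks of \citet{ou2024covering}, the same estimate that underlies \cref{thm:lower-bound} in \cref{sec:entropy-proof}. For width $AN$, depth $AL$ and parameters bounded by $1$, it gives
\begin{align*}
    \log\mathcal N(\epsilon,\Fcal^{N,L},\|\cdot\|_\infty)\lesssim (AN)^2(AL)^2\log(AN\epsilon^{-1}).
\end{align*}
Since $\epsilon^{-1}=(N^2L^2\log N)^{s/D}$, we have $\log(\epsilon^{-1})\lesssim\log N+\log L$.

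\emph{Main obstacle: the $\log L$ term.} When $\log L\gg\log N$, the naive bound above carries an extra $\log L$ factor that must be removed. A Lipschitz-in-parameters argument alone would give that extra factor. The sharp bound of \citet{ou2024covering} instead shows entropy of order $N^2L^2\log N$ at scale $\epsilon$ for depth-$L$ networks; it exploits the fact that depth contributes to entropy like an $L^2$ bit count rather than through the parameter resolution. We invoke that bound directly, exactly as the proof of \cref{thm:lower-bound} does, and get
\begin{align*}
    \log\mathcal N(\epsilon,\Fcal^{N,L},\|\cdot\|_\infty)\lesssim N^2L^2\log N .
\end{align*}
The factor $A^4$ is a constant depending only on $s,D$.

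Combining the two inequalities gives both sides of \eqref{eq:compact-optimal-entropy} with constants depending only on $s$ and $D$.
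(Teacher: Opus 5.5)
Your proof is correct and follows the paper's argument. The lower bound turns an $\epsilon_{N,L}$-cover of $\Gcal$ into a cover of $\Hcal_1^s([0,1]^D)$ at a constant multiple of $\epsilon_{N,L}$; the paper routes this through \cref{lem:approx-entropy} at accuracy $C\epsilon_{N,L}$, whose factor $3$ also handles the fact that the infimum in \eqref{eq:compact-optimal-approx} need not be attained, which your choice of $g_f$ with $\|f-g_f\|_\infty\le C\epsilon$ tacitly assumes. The upper bound uses that $\Fcal^{N,L}$ is itself admissible, together with the covering bound of \citet{ou2024covering}.

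One correction: the ``$\log L$ obstacle'' does not arise, and your remark that a Lipschitz-in-parameters argument would produce it is wrong. That bound, in the form \eqref{eq:network-entropy-bound}, reads $N^2L^2\log(NB)+N^2L\log(1/\epsilon)$, which is what a plain Lipschitz-in-parameters count already gives. So $\log(1/\epsilon)$ multiplies only the $N^2L$ parameters, and $N^2L\log(N^2L^2\log N)\lesssim N^2L^2\log N$ follows from $\log L\le L$. Your displayed bound $(AN)^2(AL)^2\log(AN\epsilon^{-1})$ misstates it.
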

Both sides of \eqref{eq:compact-optimal-entropy} are of order $N^2L^2\log N\asymp\epsilon_{N,L}^{-D/s}$. Hence, up to constant factors, no class that attains accuracy $C\epsilon_{N,L}$ has smaller metric entropy than $\Fcal^{N,L}$.
This sharpens the Kolmogorov--Donoho optimality of \citet{bolcskei2019optimal} and \citet{elbrachter2021deep}, where optimal description length was matched only up to logarithmic factors.

Previous constructions do not attain this optimality, except for
\citet{ou2024covering}, who cover only the case $s=D=1$. \citet{shen2022optimal}
cover $0<s\le1$ and $D\ge1$ with the right width, depth, and error
$\epsilon_{N,L}$, but their parameter magnitudes are exponential in the
depth, which raises the entropy (see below).
For general $s>0$ and $D\ge1$, the best previous results are those of \citet{yang2025optimal} and \citet{nagler2026optimal}. Both attain error $(N^2L^2)^{-s/D}$, which exceeds $\epsilon_{N,L}$ by a factor $(\log N)^{s/D}$. In addition, the exponential parameter magnitudes of \citet{yang2025optimal} raise the entropy to order $N^2L^3\log N$, and the depth overhead of \citet{nagler2026optimal} raises it to order $N^2L^2\log(NL)$.

The lower bound and optimality corollary rely on sup-norm entropy, which is the
relevant notion of complexity for uniform approximation. For ReLU networks,
this makes control of the parameter magnitude essential: with unbounded
parameters, the sup-norm entropy is infinite, even after clipping the output.
\citet[Theorem~2.1]{ou2024covering} show that, at the relevant scales
$\epsilon^{-1}\le\mathrm{poly}(N, L)$, it holds that
\begin{align*}
    \log \mathcal N\bigl(\epsilon, \Fcal_{D, 1}(N, L, B), \|\cdot\|_\infty\bigr)
    &\asymp N^2L^2\log (NB),
\end{align*}
which grows logarithmically in $B$.
In particular, bounded parameters and parameters
growing polynomially in $N$ give the same capacity up to constants. Our
construction exploits this: we first build networks with $B = O(N)$ and then
convert them into equivalent networks with $B = O(1)$ and the same orders of
width and depth (\cref{lem:magnitude-width}).

If $\log B\gg\log N$, the network class has larger entropy than needed to attain accuracy $\epsilon_{N,L}$ (\cref{cor:network-entropy}), so the added capacity is wasted. This matters for estimation: the statistical error of empirical risk minimizers grows with the entropy of the class (\cref{sec:regression}). Parameters growing exponentially in the depth, as in the construction of \citet{shen2022optimal}, inflate the entropy and worsen the regression rate.
Without parameter bounds, the sup-norm entropy is infinite, but VC-dimension arguments \citep{bartlett2019nearly,shen2022optimal,siegel2023optimal} show that $\epsilon_{N,L}$ remains a lower bound whenever $L \le \mathrm{poly}(N)$. In this regime, \cref{thm:main-approx} is optimal even among networks with arbitrary parameters. Whether large parameters can improve the rate when $\log L\gg\log N \gg 1$ remains open; see \cref{sec:discussion}.

\subsection{Statistical Consequence}
\label{sec:regression}

The sharp approximation and entropy bounds translate into optimal statistical behavior of empirical risk minimizers over suitable ReLU network classes.
Consider the nonparametric regression model
\begin{align}
    \label{eq:regression-model}
    Y_i=f_0(X_i)+\xi_i,
    \qquad i=1,\ldots,n,
\end{align}
where $X_i\stackrel{iid}\sim P$ for a probability measure $P$ supported on $[0,1]^D$,
$\xi_i\stackrel{iid}\sim\mathcal N(0,\gamma^2)$, and the two sequences are independent.
Set $r_n=n^{-s/(2s+D)}$ and choose
$N_n, L_n$ such that
\begin{align}
    \label{eq:compact-network-balance}
    r_n^{-D/s}
    \le N_n^2L_n^2\log N_n
    \lesssim r_n^{-D/s}.
\end{align}
Define the clipped network class
\begin{align*}
    \Gcal_n
    =\bigl\{\clip_{[-1,1]}\circ f:f\in\Fcal^{N_n,L_n}\bigr\},
\end{align*}
and note that clipping amounts to appending a fixed ReLU module (\cref{lem:clipping}).

\begin{theorem}[Optimal regression]
    \label{thm:regression}
    Let $s>0$ and $D\in\N_+$. Under
    \eqref{eq:regression-model}, suppose that $f_0\in
    \Hcal_1^s([0,1]^D)$ and consider an approximate empirical risk minimizer $\wh f_n \in \Gcal_n$ in the sense that
    \begin{align}
        \label{eq:compact-approx-erm}
        \frac1n\sum_{i=1}^n
        \bigl(Y_i-\widehat f_n(X_i)\bigr)^2
        \le
        \inf_{f\in\Gcal_n}\frac1n\sum_{i=1}^n
        \bigl(Y_i-f(X_i)\bigr)^2+r_n^2,
    \end{align}
    almost surely. Then, uniformly over all probability measures $P$ on
    $[0,1]^D$,
    \begin{align*}
        \sup_{f_0\in\Hcal_1^s([0,1]^D)}
        \E_{f_0}\bigl[\|\widehat f_n-f_0\|_{L_2(P)}^2\bigr]
        \lesssim r_n^2
        =n^{-2s/(2s+D)}.
    \end{align*}
    The implied constant depends only on $s$, $D$, and $\gamma$.
\end{theorem}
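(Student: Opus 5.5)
We will derive the result from a standard oracle inequality for least-squares estimators over a uniformly bounded class with finite sup-norm entropy and Gaussian noise. The inequality is of the type in \citet{schmidt2020nonparametric}, Lemma~4. In its simplest form: if $\Gcal$ is a class of functions bounded by $F$ (here $F=1$), $\wh f$ is an approximate ERM with optimization slack $\Delta_n$, and $\delta\in(0,1]$, then for every $f_0$ with $\|f_0\|_\infty\le F$,
\begin{align*}
    \E_{f_0}\bigl[\|\wh f-f_0\|_{L_2(P)}^2\bigr]
    \lesssim \inf_{f\in\Gcal}\|f-f_0\|_{\infty}^2
    +\frac{\log\mathcal N(\delta,\Gcal,\|\cdot\|_\infty)+1}{n}
    +\delta+\Delta_n,
\end{align*}
with constants depending only on $F$ and $\gamma$. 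It holds uniformly over $P$, since only the sup-norm cover is used. We plan to quote this bound. If a version with $\delta$ replaced by a quantity of order $\delta^2$ is available, the final step becomes simpler.

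The three terms are handled as follows.

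First, the approximation term. Clipping to $[-1,1]$ is $1$-Lipschitz and $|f_0|\le1$, so $\|\clip_{[-1,1]}\circ\phi-f_0\|_\infty\le\|\phi-f_0\|_\infty$. By \cref{thm:main-approx} this is at most $C\epsilon_{N_n,L_n}$. By \eqref{eq:compact-network-balance}, $\epsilon_{N_n,L_n}=(N_n^2L_n^2\log N_n)^{-s/D}\le r_n$, so the approximation term is $\lesssim r_n^2$.

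Second, the slack term, which is $\Delta_n=r_n^2$ by \eqref{eq:compact-approx-erm}.

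Third, the entropy term. Clipping is $1$-Lipschitz, so the covering numbers of $\Gcal_n$ are bounded by those of $\Fcal^{N_n,L_n}$. By \cite[Theorem~2.1]{ou2024covering}, with width $AN_n$, depth $AL_n$ and $B=1$, at scales $\delta^{-1}\le\mathrm{poly}(N_n,L_n)$ we get
\[
    \log\mathcal N\lesssim N_n^2L_n^2\log N_n\lesssim r_n^{-D/s}.
\]
Dividing by $n$ gives $n^{-1}r_n^{-D/s}=n^{-1}n^{D/(2s+D)}=n^{-2s/(2s+D)}=r_n^2$.

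The main obstacle is the discretization term $\delta$. To make it $\lesssim r_n^2$ we need $\delta=r_n^2$, or $\delta=r_n$ if the sharper quadratic version applies. We must then check that the entropy bound is still of order $N_n^2L_n^2\log N_n$ at this scale; a bound carrying an extra $\log(1/\delta)$ factor would bring back a logarithm. Since $r_n^{-2}$ is polynomial in $n$, and $n$ is polynomial in $N_nL_n$ (because $N_n^2L_n^2\log N_n\asymp n^{D/(2s+D)}$), we have $\delta^{-1}\le\mathrm{poly}(N_n,L_n)$. This is the regime in which the entropy is $\asymp N^2L^2\log(NB)$ with no $\log(1/\delta)$ factor. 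More carefully, the generic parameter-discretization bound gives
\[
    \log\mathcal N\lesssim N^2L^2\log\bigl(NL/\delta\bigr).
\]
If $L$ is large compared to a power of $N$, then $\log L$ may exceed $\log N$, and this bound is too weak. In that case we rely on the refined estimate of \citet{ou2024covering}, which removes the $\log L$ factor at polynomial scales.

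If that refinement does not cover the regime $\log L\gg\log N$, we fall back to a different choice of $(N_n,L_n)$. Condition \eqref{eq:compact-network-balance} leaves freedom, and the theorem is stated for any admissible choice, so the first attempt is to verify that Ou et al.'s bound applies uniformly. Collecting the terms then gives $\lesssim r_n^2$ uniformly in $f_0$ and $P$.
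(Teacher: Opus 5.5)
Your proposal is correct and follows essentially the same route as the paper: the approximation term comes from \cref{thm:main-approx} (clipping is nonexpansive and $\epsilon_{N_n,L_n}\le r_n$), the entropy of $\Gcal_n$ at scale $\asymp r_n^2$ comes from \citet[Theorem~2.1]{ou2024covering}, and a least-squares oracle inequality closes the argument. The paper quotes \citet[Theorem~4.1]{ou2024covering} with entropy at scale $C^2r_n^2$, which plays exactly the role of \citet[Lemma~4]{schmidt2020nonparametric} with your choice $\delta=r_n^2$. Your closing hedge is unnecessary, and the fallback to another $(N_n,L_n)$ would not prove the theorem anyway, since it holds for every admissible choice. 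The bound reads $N^2L^2\log(NB)+N^2L\log(1/\epsilon)$, so $\log(1/\epsilon)$ multiplies only $N^2L$, and $\log(1/r_n^2)\lesssim\log(N_nL_n)\lesssim L_n\log N_n$ absorbs that term in every regime, including $\log L_n\gg\log N_n$.
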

The rate in \cref{thm:regression} is minimax
optimal over the Hölder ball for, e.g., uniform designs \citep{stone1982optimal}, and removes the polylogarithmic factors in previous bounds \citep[e.g.,][]{schmidt2020nonparametric,kohler2021rate,Lu2021,shen2022optimal,yang2025optimal, nagler2026optimal}. Rates without logarithmic factors were previously known only in special cases. \citet{ou2024covering} treat Lipschitz functions in dimension $D=1$, and \citet{li2026optimal} treat H\"older smoothness $0<s\le1$, but with a purpose-built ``dyadic--triangular'' activation function in place of the ReLU.
\Cref{thm:regression} follows from direct substitution of the approximation and entropy bounds from the previous section into the oracle inequality of
\citet[Theorem~4.1]{ou2024covering}. The
details are given in \cref{sec:regression-proof}.

\section{Construction and Proof Ideas}
\label{sec:proof-sketch-compact}

This section explains the key ideas of the construction behind \cref{thm:main-approx} and is intentionally expository.
Our proof of \cref{thm:main-approx} is self-contained and combines the canonical Taylor-network architecture with a new coefficient-retrieval argument.
We give a detailed account of the general construction as well as several known techniques (monomial networks, affine interpolation, bit-extraction, median smoothing), often simplifying previous proofs of the key results.
The main innovations responsible for the tighter error bound are in \cref{sec:higher-order-point-fitting,sec:multivariate-coefficients}. \Cref{sec:piecewise-affine-interpolation,sec:storing-retrieving-coefficients} review the tools needed to store the Taylor coefficients, and \cref{sec:standard-components} explains the remaining standard modules.

\subsection{The Canonical Taylor-Network Construction}
\label{sec:taylor-network-review}

Approximation of H\"older classes by ReLU networks relies on efficient
realizations of local polynomials. Let $m=\lceil s\rceil-1$ and partition
$[0,1]^D$ into cubes of side length $K^{-1}$, with lower corners
$\bbeta/K \in \{0, 1/K, \dots, (K-1)/K\}^D$. On the cube $Q_{\bbeta}$, Taylor's theorem gives
\begin{align*}
    \|f-T_{\bbeta}f\|_{L_\infty(Q_{\bbeta})}\lesssim K^{-s}, \quad \text{where} \quad
    T_{\bbeta}f(\bx)
    =\sum_{\|\balpha\|_1\le m}
      \frac{\partial^{\balpha}f(\bbeta/K)}{\balpha!}
      (\bx-\bbeta/K)^{\balpha}.
\end{align*}
The form of $T_{\bbeta}$ explains the canonical architecture used in many ReLU constructions
\citep{schmidt2020nonparametric,kohler2021rate,Lu2021, nagler2026optimal}: localize
the input around a cube cell, retrieve the appropriate Taylor coefficients, evaluate the
monomials, sum the resulting terms, and smooth the construction near cell
boundaries. The complete architecture is shown in \cref{fig:main-network}.

\begin{figure}[tp]
    \centering
    \begin{tikzpicture}[>=Stealth, font=\small,
    box/.style={draw, rounded corners=2pt, minimum height=0.6cm, minimum width=1.0cm,
                inner sep=3pt, align=center, fill=white},
    lab/.style={font=\scriptsize, align=center},
    wire/.style={->}]
    \node (x) at (0, -2.0) {$\bx$};
    \node[box] (loc) at (1.6, -2.0) {localization};
    \node[box] (idx) at (4.0, -0.6) {CPwL};
    \node[circle, draw, inner sep=1pt, fill=white] (minus) at (4.0, -3.4) {$-$};
    \foreach \dx in {0.16, 0.08} {
        \node[box, minimum width=1.2cm] at (6.6+\dx, -0.6+\dx) {};
        \node[box, minimum width=1.4cm] at (6.6+\dx, -3.4+\dx) {};
        \node[box, minimum width=0.9cm] at (9.3+\dx, -2.0+\dx) {};
    }
    \node[box, minimum width=1.2cm] (coef) at (6.6, -0.6) {$\phi_{{\rm fit},\balpha}$};
    \node[box, minimum width=1.4cm] (mono) at (6.6, -3.4) {$\bu \mapsto \bu^{\balpha}$};
    \node[box, minimum width=0.9cm] (prod) at (9.3, -2.0) {$\times$};
    \node[circle, draw, inner sep=1pt, fill=white] (sum) at (10.6, -2.0) {$\sum_{\balpha}$};
    \node[box] (med) at (12.2, -2.0) {median\\ smoothing};
    \node (out) at (13.9, -2.0) {$\phi(\bx)$};
    \draw[wire] (x) -- (loc);
    \draw[wire] (loc.east) -- (2.9, -2.0) |- (idx.west) node[pos=0.0, right, lab] {$\bbeta/K$};
    \draw[wire] (2.9, -2.0) |- (minus.west |- 0, -3.25);
    \draw[wire] (x.south) |- (minus.west |- 0, -3.55);
    \draw[wire] (idx) -- (coef) node[midway, below, lab] {$\dfrac{\kappa(\bbeta) + 1}{M}$};
    \draw[wire] (minus) -- (mono) node[midway, above, lab] {$\bx - \bbeta/K$};
    \draw[wire] (coef.east) -- ++(1.2, 0) |- (prod.west |- 0, -1.8);
    \node[lab, anchor=west] at (8.4, -0.75) {$\approx \partial^{\balpha}f(\bbeta/K)/\balpha!$};
    \draw[wire] (mono.east) -- ++(1.1, 0) |- (prod.west |- 0, -2.2);
    \node[lab, anchor=west] at (8.4, -3.3) {$\approx (\bx - \bbeta/K)^{\balpha}$};
    \draw[wire] (prod) -- (sum);
    \draw[wire] (sum) -- (med);
    \draw[wire] (med) -- (out);
    \node[lab, anchor=south, text width=2.4cm] at (4.0, -0.25) {flattened index};
    \node[lab, anchor=south, text width=2.6cm] at (6.7, -0.2) {coefficient networks};
    \node[lab, anchor=north, text width=2.6cm] at (6.7, -3.75) {monomial networks};
    \node[lab, anchor=north, text width=2.0cm] at (9.4, -2.45) {};
    \node[lab, anchor=north, text width=3.0cm] at (12.2, -2.5) {};
\end{tikzpicture}
    \caption{Structure of the approximating network. The localization
        module (\cref{sec:localization}) maps $\bx$ to the lower corner $\bbeta/K$ of its grid cell and
        then to the corresponding flattened index (\cref{sec:piecewise-affine-interpolation,sec:multivariate-coefficients}). For each multi-index
        $\balpha$, the network retrieves an approximation of the local Taylor
        coefficient $\partial^{\balpha}f(\bbeta/K)/\balpha!$ (\cref{sec:storing-retrieving-coefficients,sec:higher-order-point-fitting}) and multiplies it
        by an approximation of the monomial
        $(\bx-\bbeta/K)^{\balpha}$ (\cref{sec:monomial-networks}). Summing these terms yields the local Taylor
        approximation, and median smoothing (\cref{sec:median-smoothing}) controls the error near cell
        boundaries.}
    \label{fig:main-network}
\end{figure}

For later use, observe that $|(\bx-\bbeta/K)^{\balpha}| \le K^{-\|\balpha\|_1}$ for $\bx \in Q_{\bbeta}$. Hence an error of size $\eps_{\balpha}$ when retrieving the coefficient $\partial^{\balpha} f(\bbeta / K) / \balpha!$
contributes at most
$\eps_{\balpha}K^{-\|\balpha\|_1}$ to the Taylor sum. We therefore set the
target accuracy to
\begin{align*}
    \eps_{\balpha}:=K^{-(s-\|\balpha\|_1)},
\end{align*}
which keeps every summand's error at the scale $K^{-s}$ of the Taylor remainder.

Approximate localization, polynomial evaluation, and smoothing across cell
boundaries can be implemented using standard ReLU constructions
\citep{yarotsky2017error,Lu2021,nagler2026optimal}; see \cref{sec:standard-components}. These modules fit within a smaller network budget than the final approximator.
The decisive task, and the information-theoretic bottleneck of the construction, is to fit the coefficient arrays
\begin{align*}
    \left(\frac{\partial^{\balpha} f(\bbeta/K)}{\balpha !} \right)_{\bbeta \in \{0, \dots, K - 1\}^D},
    \qquad \|\balpha\|_1 \le m,
\end{align*}
using a subnetwork with minimal width and depth. At the optimal scale, each
array contains $K^D \asymp N^2L^2 \log N$ entries. Treating the coefficients
as unrelated numbers leaves a logarithmic gap in the approximation error.
Our construction closes it
with a point-fitting network that exploits higher-order
finite-difference bounds, combined with a Hermite-bridge flattening that preserves those bounds when converting multivariate coefficient arrays into one-dimensional sequences; see \cref{sec:higher-order-point-fitting,sec:multivariate-coefficients}.

\subsection{Piecewise-Linear Interpolation}
\label{sec:piecewise-affine-interpolation}

A key tool for ReLU approximation is the ability to efficiently represent continuous piecewise-linear (CPwL) functions defined on finitely many knots. It is used both to store prescribed values in the network and to localize the input to a grid cell.

\begin{proposition}[Piecewise-linear interpolation]
    \label{prop:pw-linear}
    Let $p > 0$, $N, L \in \N_+$ with $N \ge 2$, and
    $1 \le M \le N^2L$. Suppose
    $0 \le x_1 < \cdots < x_M \le (NL)^p$ and $y_1,\ldots,y_M \in \R$ satisfy
    \begin{align*}
        \min_{1 \le i < M}(x_{i+1}-x_i)
        &\ge (NL)^{-p},
        & \max_{1 \le i \le M}|y_i| &\le (NL)^p.
    \end{align*}
    Then the CPwL interpolant $f$ through
    $(x_i, y_i)$, with constant extension outside $[x_1, x_M]$, satisfies
        $f \in \Fcal_{1,1}(4N, (\ceil{5p}+18)L, N).$
\end{proposition}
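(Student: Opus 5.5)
We plan to follow the now-standard construction for representing CPwL functions with $M \lesssim N^2L$ breakpoints (as in \citet{shen2022optimal} and \citet{ou2024covering}), with extra care to keep every parameter bounded by $N$. Write $f = y_1 + \sum_{i=1}^{M-1} \bigl(f_i(x)\bigr)$, where $f_i$ is the ramp that rises by $y_{i+1}-y_i$ on $[x_i,x_{i+1}]$ and is constant elsewhere. Partition the index set $\{1,\dots,M-1\}$ into $L$ consecutive blocks of at most $N^2$ knots, and each block into $N$ sub-blocks of at most $N$ knots.

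\textbf{Sequential composition over blocks.} The network processes one block per $O(1)$-group of layers. It carries the input $x$ (nonnegative, so the identity passes through one ReLU) and a running partial sum $S_\ell$ in a width-$O(1)$ channel. Block $\ell$ adds $g_\ell(x)=\sum_{i\in \text{block }\ell} f_i(x)$ to this sum. This gives depth proportional to $L$ times the cost per block, with width $O(N)$ per block.

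\textbf{One block with width $O(N)$ and depth $O(1)$.} This is where the $N^2$ count must be obtained from only $O(N)$ neurons. We use the two-level trick. The first layer of $N$ neurons computes the coarse position: which of the $N$ sub-blocks contains $x$, together with the local offset $x - x_{j,1}$. This is done by a step-like CPwL function with $N$ pieces. The second layer then uses $N$ neurons whose slopes and biases are selected from the sub-block index, so that the fine ramps inside the active sub-block are realized. Concretely, we compute the quantity $\sum_j \sigma(x - a_j)\cdot c_j$ through an identity in which a sub-block-dependent affine shift is produced by CPwL functions of the coarse index. Since $x$ lies in only one sub-block at a time, contributions from the other sub-blocks are either saturated constants or zero. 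The stored jump sums are accounted for separately by a second CPwL function with $N$ pieces.

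\textbf{Main obstacle: parameter magnitude.} Knot gaps can be as small as $(NL)^{-p}$ and values as large as $(NL)^p$. Naive slopes $(y_{i+1}-y_i)/(x_{i+1}-x_i)$ can therefore be of order $(NL)^{2p}$, but the statement allows weights of only $N$. We handle this by factorizing every large scalar multiplication into a chain of layers, each multiplying by at most $N$. A factor $(NL)^{p'}$ is split into $p'$ factors of $N$ (costing $O(p)$ layers) and $p'$ factors of $L$. Each factor of $L$ is absorbed by one more layer of width $N$ that multiplies by $N\ge L^{1/\cdot}$, or, more honestly, by spreading it across the $L$ sequential blocks. The cleanest route, which we try first, is the following. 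Rescale the input once at the start, using $O(p)$ layers of gain $N$ and $O(p)$ layers amortized along the depth, so that all knots become separated by at least $1$. Then scale the output back down by the same chain in reverse. This is the source of the additive $\lceil 5p\rceil L$ in the depth, with the remaining $18L$ accounting for the constant-depth block gadgets.

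The delicate point is to verify that these rescalings keep all intermediate values in a range where carrying them through ReLUs with nonnegative shifts is exact. We also need the constant extension outside $[x_1,x_M]$; it is obtained by clipping $x$ to $[x_1,x_M]$ at the very beginning, using two ReLUs with bias at most $(NL)^p$. That bias is itself too large for the allowed magnitude $N$, so it too must be produced by the scaling chain.
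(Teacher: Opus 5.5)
Your overall layout (a running sum over blocks, a constant-depth two-level gadget per block, and large magnitudes traded for depth) is the right one. The parameter-magnitude step, however, is the real content of this proposition, and it fails as you describe it. Rescaling the input leaves the values $y_i$ unchanged, so there is nothing to ``scale back down'' at the output. It also merely trades slopes against biases. If the input is replaced by $cx$, a hinge $v\,\sigma(wcx+b)$ realizing a slope jump $a$ at a knot $z$ satisfies $|v|\,|b|=|a|\,z$ for every $c$ and $w$. With values $\pm(NL)^p$ at spacing $(NL)^{-p}$ near $z=(NL)^p$, this product is of order $(NL)^{3p}$. In particular, at unit separation the slopes are still as large as $2(NL)^p\gg N$, while the knot positions, which enter as biases, reach $(NL)^{2p}$. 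Likewise, in a gadget of your type, a pre-activation that is affine with slope $a$ across a sub-block takes values as large as $|a|$ times the sub-block length, and no rescaling of $x$ changes that. (An up-scaling route would additionally need a scale far beyond unit separation and a carried large constant neuron generating \emph{every} bias, not only the clipping bias.)

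The missing idea is to scale in the opposite direction. First compress the input into $[0,1]$, which is free because it only uses weights at most one. Then divide the \emph{values} by $A=32(NL)^{5p+5}$. With separation at least $\delta=(NL)^{-2p-2}$ after padding short blocks, and $Y=2(NL)^p/A$, the bound $\max\{1,16NY\delta^{-2}\}$ of \cref{lem:pw-linear-two-layer} equals one. Hence all block gadgets, and the running sum of the increments $f_\ell/A$ over at most $2L$ blocks, use parameters at most one. Only the final scalar is multiplied by $A$, via \cref{lem:amplification}. This costs $3+\ceil{\log_N A}\le 9+(5p+5)L$ layers because $1+\log_N L\le L$, and this single amplification is the source of the $\ceil{5p}L$ term. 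After compression no clipping is needed either, since each increment is constant outside its block.

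Less seriously, your block gadget is not yet a ReLU construction. A neuron whose slope and bias are ``selected by the sub-block index'' would require multiplying an index-dependent slope by $x$. The only way around this is to write the selected affine function itself as a CPwL function of $x$ with hinges only at sub-block boundaries, and that function is exactly what must be designed. Moreover, a continuous step-like coarse index is wrong on its transition intervals, while the proposition requires $f$ exactly on all of $\R$.

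What works is a slope-jump decomposition. Group the $N^2$ knots into $N$ groups of $N$. Call the first and last knot of each group, together with an auxiliary knot $x_{i,N}+\delta/2$, boundary knots. For each interior position $j$, let $g_j^{\pm}$ be CPwL with hinges only at boundary knots, equal to $a_{i,j}^{\pm}(x-x_{i,j})$ on group $i$, and vanishing at the auxiliary knots so that it is nonpositive between groups. Here $a_{i,j}^{\pm}$ are the positive and negative parts of the slope jump of $f$ at $x_{i,j}$. Then each $\sigma(g_j^{\pm})$ has at most one kink per group off the boundary knots, namely at $x_{i,j}$ with jump $a_{i,j}^{\pm}$. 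Consequently $h=f-\sum_j(\sigma(g_j^+)-\sigma(g_j^-))$ is kinked only at the at most $3N$ boundary knots. One hidden layer computes these hinges, and a second computes $\sigma(\pm h)$ and all $\sigma(g_j^{\pm})$. The sign splitting and the auxiliary knots are precisely what make your claim that the other sub-blocks contribute nothing true.
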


Up to constant factors, the result is a special case of
\citet[Proposition~B.3]{ou2024three}, stated for the polynomial scales
used later.
Their construction refines the two-layer mechanism of
\citet[Lemmas~3.3 and~3.4]{daubechies2022nonlinear}.
\citet[Lemma~5.4]{Lu2021} and \citet[Lemma~4.1]{shen2022optimal} provide similar constructions using a technique going back to \citet[Lemma~2.2]{shen2019nonlinear}.
However, they leave parameter magnitudes untracked and the realized functions uncontrolled in small intermediate intervals.

We give a simplified proof based on a slope jump decomposition in \cref{sec:piecewise-affine-proof} and outline it here. The interpolant has
the hinge representation
\begin{align*}
    f(x) &= f(x_1)+\sum_{i=1}^{M}[f']_{x_i}\,\sigma(x-x_i),
\end{align*}
where $[f']_t = \lim_{\varepsilon \searrow 0}\{f'(t+\varepsilon)-f'(t-\varepsilon)\}$
denotes the slope jump of $f$ at $t$. This representation is a single hidden layer with one unit per knot $x_i$.
A more efficient construction is possible by adding a second layer.
Passing a CPwL function $g$ through a ReLU unit creates new knots wherever $g$ changes sign, with slope jump equal to the slope of $g$ there; see the white circles in \cref{fig:piecewise-affine}(a).
\begin{figure}[tp]
    \centering
    \begin{tikzpicture}[font=\small, x=.60cm, y=.55cm, baseline=(current bounding box.north)]
    \definecolor{figbrown}{RGB}{185,75,0}
    \definecolor{figblue}{RGB}{0,90,170}
    \def\bottom{-2.2} \def\top{2.4}
    \node[anchor=west] at (-.6,\top+2.1) {(a) the function $g_2^+$};
    \foreach \xx in {0,3.45,4.95,6.45} {
        \draw[black!25, densely dashed] (\xx,\bottom) -- (\xx,\top);
    }
    \draw[figbrown!45, densely dotted] (1.15,\bottom) -- (1.15,\top);
    \draw[figbrown!45, densely dotted] (7.9,\bottom) -- (7.9,\top);
    \draw[figblue!45, densely dotted] (2.3,\bottom) -- (2.3,\top);
    \draw[->, black!55] (-.6,0) -- (8.7,0) node[anchor=west] {$x$};
    \draw[decorate, decoration={brace, amplitude=3pt}, black!60]
        (0,\top+.15) -- (4.95,\top+.15)
        node[midway, above=3pt, font=\scriptsize] {group $i$};
    \draw[decorate, decoration={brace, amplitude=3pt}, black!60]
        (6.45,\top+.15) -- (8.7,\top+.15)
        node[midway, above=3pt, font=\scriptsize] {group $i+1$};
    \fill[figbrown!15] (1.15,0) -- (3.45,1.725) -- (4.95,0) -- cycle;
    \fill[figbrown!15] (7.9,0) -- (8.6,.665) -- (8.6,0) -- cycle;
    \draw[figbrown!60, thick]
        (-.6,-.345) -- (0,-.8625) -- (3.45,1.725) -- (4.95,0)
        -- (6.45,-1.3775) -- (8.6,.665);
    \draw[figbrown, very thick]
        (-.6,0) -- (1.15,0) -- (3.45,1.725) -- (4.95,0)
        -- (7.9,0) -- (8.6,.665);
    \node[figbrown!80!black, anchor=north west, font=\scriptsize]
        at (.2,-.95) {$g_2^+$};
    \node[figbrown!80!black, anchor=south west, font=\scriptsize]
        at (3.9,1.65) {$\sigma(g_2^+)$};
    \foreach \xx/\lab in {0/{x_{i,1}},3.45/{x_{i,N}},4.95/{x_{i,N+1}},6.45/{x_{i+1,1}}} {
        \fill[black!60] (\xx,0) circle (1.6pt);
        \node[anchor=north, font=\scriptsize] at (\xx,\bottom) {$\lab$};
    }
    \foreach \xx/\lab in {1.15/{x_{i,2}},7.9/{x_{i+1,2}}} {
        \draw[figbrown, fill=white, thick] (\xx,0) circle (1.6pt);
        \node[anchor=north, font=\scriptsize] at (\xx,\bottom) {$\lab$};
    }
    \node[anchor=north, font=\scriptsize] at (2.3,\bottom) {$x_{i,3}$};
\end{tikzpicture} \hfill
    \begin{tikzpicture}[font=\small, x=.34cm, y=.62cm, baseline=(current bounding box.north)]
    \definecolor{figbrown}{RGB}{185,75,0}
    \definecolor{figblue}{RGB}{0,90,170}
    \definecolor{figink}{RGB}{60,60,60}
    \node[anchor=west] at (-5.4,3.05) {(b) the decomposition};
    \draw[black!20, densely dashed] (0.000,-6.800000000000001) -- (0.000,1.7);
    \draw[figbrown!45, densely dotted] (1.000,-6.800000000000001) -- (1.000,1.7);
    \draw[figblue!45, densely dotted] (2.000,-6.800000000000001) -- (2.000,1.7);
    \draw[black!20, densely dashed] (3.000,-6.800000000000001) -- (3.000,1.7);
    \draw[black!20, densely dashed] (4.800,-6.800000000000001) -- (4.800,1.7);
    \draw[figbrown!45, densely dotted] (5.800,-6.800000000000001) -- (5.800,1.7);
    \draw[figblue!45, densely dotted] (6.800,-6.800000000000001) -- (6.800,1.7);
    \draw[black!20, densely dashed] (7.800,-6.800000000000001) -- (7.800,1.7);
    \draw[black!20, densely dashed] (9.600,-6.800000000000001) -- (9.600,1.7);
    \draw[figbrown!45, densely dotted] (10.600,-6.800000000000001) -- (10.600,1.7);
    \draw[figblue!45, densely dotted] (11.600,-6.800000000000001) -- (11.600,1.7);
    \draw[black!20, densely dashed] (12.600,-6.800000000000001) -- (12.600,1.7);
    \draw[black!20, densely dashed] (14.400,-6.800000000000001) -- (14.400,1.7);
    \draw[figbrown!45, densely dotted] (15.400,-6.800000000000001) -- (15.400,1.7);
    \draw[figblue!45, densely dotted] (16.400,-6.800000000000001) -- (16.400,1.7);
    \draw[black!20, densely dashed] (17.400,-6.800000000000001) -- (17.400,1.7);
    \draw[black!20, densely dashed] (3.600,-6.800000000000001) -- (3.600,1.7);
    \draw[black!20, densely dashed] (8.400,-6.800000000000001) -- (8.400,1.7);
    \draw[black!20, densely dashed] (13.200,-6.800000000000001) -- (13.200,1.7);
    \draw[black!50, decorate, decoration={brace, amplitude=3pt}] (0.000,1.8499999999999999) -- (3.600,1.8499999999999999) node[midway, above=3pt, font=\scriptsize] {group 1};
    \draw[black!50, decorate, decoration={brace, amplitude=3pt}] (4.800,1.8499999999999999) -- (8.400,1.8499999999999999) node[midway, above=3pt, font=\scriptsize] {group 2};
    \draw[black!50, decorate, decoration={brace, amplitude=3pt}] (9.600,1.8499999999999999) -- (13.200,1.8499999999999999) node[midway, above=3pt, font=\scriptsize] {group 3};
    \draw[black!50, decorate, decoration={brace, amplitude=3pt}] (14.400,1.8499999999999999) -- (17.400,1.8499999999999999) node[midway, above=3pt, font=\scriptsize] {group 4};
    \draw[black!40] (-0.6,0) -- (18.0,0);
    \draw[figink, very thick] (-0.600,0.300) -- (0.000,0.300) -- (1.000,-0.250) -- (2.000,-0.350) -- (3.000,0.150) -- (3.600,0.300) -- (4.800,0.600) -- (5.800,0.100) -- (6.800,0.300) -- (7.800,0.850) -- (8.400,0.633) -- (9.600,0.200) -- (10.600,-0.250) -- (11.600,-0.300) -- (12.600,0.200) -- (13.200,0.300) -- (14.400,0.500) -- (15.400,0.000) -- (16.400,0.100) -- (17.400,0.650) -- (18.000,0.650);
    \fill[black!60] (0.000,0.300) circle (1.5pt);
    \draw[figbrown, fill=white, thick] (1.000,-0.250) circle (1.6pt);
    \draw[figblue, fill=white, thick] (2.000,-0.350) circle (1.6pt);
    \fill[black!60] (3.000,0.150) circle (1.5pt);
    \fill[black!60] (4.800,0.600) circle (1.5pt);
    \draw[figbrown, fill=white, thick] (5.800,0.100) circle (1.6pt);
    \draw[figblue, fill=white, thick] (6.800,0.300) circle (1.6pt);
    \fill[black!60] (7.800,0.850) circle (1.5pt);
    \fill[black!60] (9.600,0.200) circle (1.5pt);
    \draw[figbrown, fill=white, thick] (10.600,-0.250) circle (1.6pt);
    \draw[figblue, fill=white, thick] (11.600,-0.300) circle (1.6pt);
    \fill[black!60] (12.600,0.200) circle (1.5pt);
    \fill[black!60] (14.400,0.500) circle (1.5pt);
    \draw[figbrown, fill=white, thick] (15.400,0.000) circle (1.6pt);
    \draw[figblue, fill=white, thick] (16.400,0.100) circle (1.6pt);
    \fill[black!60] (17.400,0.650) circle (1.5pt);
    \node[anchor=east] at (-0.8,0.3) {$f$};
    \draw[black!40] (-0.6,-2.0) -- (18.0,-2.0);
    \fill[figbrown!15] (1.000,-2.000) -- (2.000,-1.550) -- (3.000,-1.100) -- (3.600,-2.000) -- (3.600,-2.000) -- cycle;
    \fill[figbrown!15] (5.800,-2.000) -- (6.800,-1.300) -- (7.800,-0.600) -- (8.400,-2.000) -- (8.400,-2.000) -- cycle;
    \fill[figbrown!15] (10.600,-2.000) -- (11.600,-1.600) -- (12.600,-1.200) -- (13.200,-2.000) -- (13.200,-2.000) -- cycle;
    \fill[figbrown!15] (15.400,-2.000) -- (16.400,-1.400) -- (17.400,-0.800) -- (18.000,-0.800) -- (18.000,-2.000) -- cycle;
    \draw[figbrown, very thick] (-0.600,-2.000) -- (0.000,-2.000) -- (1.000,-2.000) -- (2.000,-1.550) -- (3.000,-1.100) -- (3.600,-2.000) -- (4.800,-2.000) -- (5.800,-2.000) -- (6.800,-1.300) -- (7.800,-0.600) -- (8.400,-2.000) -- (9.600,-2.000) -- (10.600,-2.000) -- (11.600,-1.600) -- (12.600,-1.200) -- (13.200,-2.000) -- (14.400,-2.000) -- (15.400,-2.000) -- (16.400,-1.400) -- (17.400,-0.800) -- (18.000,-0.800);
    \draw[figbrown, fill=white, thick] (1.000,-2.000) circle (1.6pt);
    \draw[figbrown, fill=white, thick] (5.800,-2.000) circle (1.6pt);
    \draw[figbrown, fill=white, thick] (10.600,-2.000) circle (1.6pt);
    \draw[figbrown, fill=white, thick] (15.400,-2.000) circle (1.6pt);
    \node[anchor=east] at (-0.8,-1.7) {$=\,\sigma(g_2^+)$};
    \draw[black!40] (-0.6,-3.75) -- (18.0,-3.75);
    \fill[figblue!15] (2.000,-3.750) -- (3.000,-3.150) -- (3.600,-3.750) -- (3.600,-3.750) -- cycle;
    \fill[figblue!15] (6.800,-3.750) -- (7.800,-3.400) -- (8.400,-3.750) -- (8.400,-3.750) -- cycle;
    \fill[figblue!15] (11.600,-3.750) -- (12.600,-3.200) -- (13.200,-3.750) -- (13.200,-3.750) -- cycle;
    \fill[figblue!15] (16.400,-3.750) -- (17.400,-3.300) -- (18.000,-3.300) -- (18.000,-3.750) -- cycle;
    \draw[figblue, very thick] (-0.600,-3.750) -- (0.000,-3.750) -- (1.000,-3.750) -- (2.000,-3.750) -- (3.000,-3.150) -- (3.600,-3.750) -- (4.800,-3.750) -- (5.800,-3.750) -- (6.800,-3.750) -- (7.800,-3.400) -- (8.400,-3.750) -- (9.600,-3.750) -- (10.600,-3.750) -- (11.600,-3.750) -- (12.600,-3.200) -- (13.200,-3.750) -- (14.400,-3.750) -- (15.400,-3.750) -- (16.400,-3.750) -- (17.400,-3.300) -- (18.000,-3.300);
    \draw[figblue, fill=white, thick] (2.000,-3.750) circle (1.6pt);
    \draw[figblue, fill=white, thick] (6.800,-3.750) circle (1.6pt);
    \draw[figblue, fill=white, thick] (11.600,-3.750) circle (1.6pt);
    \draw[figblue, fill=white, thick] (16.400,-3.750) circle (1.6pt);
    \node[anchor=east] at (-0.8,-3.45) {$+\,\sigma(g_3^+)$};
    \draw[black!40] (-0.6,-5.7) -- (18.0,-5.7);
    \draw[figink, very thick] (-0.600,-5.400) -- (0.000,-5.400) -- (1.000,-5.950) -- (2.000,-6.500) -- (3.000,-7.050) -- (3.600,-5.400) -- (4.800,-5.100) -- (5.800,-5.600) -- (6.800,-6.100) -- (7.800,-6.600) -- (8.400,-5.067) -- (9.600,-5.500) -- (10.600,-5.950) -- (11.600,-6.400) -- (12.600,-6.850) -- (13.200,-5.400) -- (14.400,-5.200) -- (15.400,-5.700) -- (16.400,-6.200) -- (17.400,-6.700) -- (18.000,-6.700);
    \fill[black!60] (0.000,-5.400) circle (1.5pt);
    \fill[black!60] (3.000,-7.050) circle (1.5pt);
    \fill[black!60] (3.600,-5.400) circle (1.5pt);
    \fill[black!60] (4.800,-5.100) circle (1.5pt);
    \fill[black!60] (7.800,-6.600) circle (1.5pt);
    \fill[black!60] (8.400,-5.067) circle (1.5pt);
    \fill[black!60] (9.600,-5.500) circle (1.5pt);
    \fill[black!60] (12.600,-6.850) circle (1.5pt);
    \fill[black!60] (13.200,-5.400) circle (1.5pt);
    \fill[black!60] (14.400,-5.200) circle (1.5pt);
    \fill[black!60] (17.400,-6.700) circle (1.5pt);
    \node[anchor=east] at (-0.8,-5.4) {$+\,h$};
\end{tikzpicture}
    \caption{The construction for $N = 4$ with positive
        slope jumps. (a) The function $g_2^+$ crosses zero at $x_{i,2}$ in every group,
        returns to zero at $x_{i,N+1}$, and is nonpositive until the next
        group. The thicker curve shows $\sigma(g_2^+)$, including its flat zero segments.
        (b) Each $\sigma(g_j^+)$ has one knot in every group
        (dotted lines), and $h$ has only boundary knots (dashed lines).}
    \label{fig:piecewise-affine-decomposition}
    \label{fig:piecewise-affine}
\end{figure}

To exploit this, we arrange the knots in $N$ groups of $N$ and call the first and last knot of each group, together with one auxiliary knot after it, boundary knots.
For each interior position $j$, let $g_j^+$ be a CPwL function with hinges only at boundary knots that crosses zero at the $j$-th knot of every group, with slope equal to the positive part of the slope jump of $f$ there; see \cref{fig:piecewise-affine}(a). Define $g_j^-$ in the same way with the negative parts. Then
\begin{align*}
    f &= h+\sum_{j=2}^{N-1}\bigl(\sigma(g_j^+)-\sigma(g_j^-)\bigr),
\end{align*}
where $h$ has hinges only at boundary knots; see
\cref{fig:piecewise-affine-decomposition}(b).

Since $h$ and all $g_j^\pm$ have hinges only at the $O(N)$ boundary knots, one hidden layer computes these hinges and a second one the rectified functions, so $N^2$ knots are fitted with width $O(N)$ and depth $O(1)$. To fit $N^2L$ knots, we add the increments of $f$ over at most $2L$ consecutive blocks of $N^2$ knots in a running sum.
The
resulting network has width $O(N)$ and depth $O(L)$, as claimed.

\subsection{Storing and Retrieving Local Taylor Coefficients}
\label{sec:storing-retrieving-coefficients}

We now turn to the decisive task identified in \cref{sec:taylor-network-review}: storing and retrieving the local Taylor coefficients.
Existing approaches broadly separate into explicit storage and encoded
retrieval. \citet{schmidt2020nonparametric} approximates cell indicators that select from explicitly stored Taylor coefficients. The
wide-network construction of \citet{kohler2021rate} stores the coefficients in the connections between two wide layers.
Encoded retrieval stores a whole batch of coefficients in a single number and uses a subnetwork to decode them.
It is often formulated as a point-fitting problem:
given pairwise distinct inputs $t_1,\ldots,t_J$ and values
$y_1,\ldots,y_J$, the problem is to construct a network satisfying
\begin{align*}
    \phi(t_j)=y_j, \qquad j=1,\ldots,J.
\end{align*}
For local Taylor approximation, $t_j$ encodes the $j$th grid point $\bx_j$, and
   $y_j= \partial^{\balpha}f(\bx_j) /\balpha!$.
Encoded point fitting stores a digit string in the network weights and
uses a subnetwork to decode it. The bit-extraction device for the latter goes
back to \citet{bartlett1998almost}. It stores binary digits $b_1,b_2,\ldots\in\{0,1\}$ in the number
$\theta=\sum_{k\ge1}b_k2^{-k} \in [0, 1]$. The leading digit and the remainder are
\begin{align*}
    b_1
    &=\ind\{\theta\ge1/2\},
    \qquad
    2\theta-b_1
    =\sum_{k\ge1}b_{k+1}2^{-k},
\end{align*}
so iterating the
map $\theta\mapsto(b_1,2\theta-b_1)$ reads the digits one by one.

\begin{figure}[tp]
    \centering
    \begin{tikzpicture}[font=\small]
    \definecolor{figblue}{RGB}{0,90,170}
    \definecolor{figbrown}{RGB}{185,75,0}
    \definecolor{figgold}{RGB}{170,125,0}
    \definecolor{figpurple}{RGB}{110,60,140}
    \tikzset{
        lab/.style={font=\scriptsize},
        digit/.style={figblue, very thick},
        rem/.style={figbrown, thick, densely dashed},
        pt/.style={circle, fill=black!70, inner sep=0pt, minimum size=3.3pt},
    }

    \begin{scope}[x=6cm, y=1.7cm]
        \node[anchor=west] at (-0.08, 1.45) {(a) binary, $m = 4$};
        \draw[black!45] (0, 0) -- (1.04, 0) node[anchor=west, lab, black] {$\theta$};
        \draw[black!45] (0, 0) -- (0, 1.15);
        \draw[black!45] (-0.012, 1) -- (0.012, 1) node[lab, anchor=east, black!60, xshift=-4pt] {$1$};
        \draw[rem] (0, 0) -- (0.4375, 0.875) -- (0.5, 0) -- (0.9375, 0.875) -- (1, 0.875);
        \draw[digit] (0, 0) -- (0.4375, 0) -- (0.5, 1) -- (1, 1);
        \foreach \k in {0,...,15} {
            \node[pt] at ({\k/16}, 0) {};
        }
        \draw[decorate, decoration={brace, mirror, amplitude=2.5pt}, black!60]
            (0.4375, -0.12) -- (0.5, -0.12) node[midway, below=3pt, lab] {$2^{-m}$};
        \node[lab, figblue] at (0.67, 1.12) {$b_1$};
        \node[lab, figbrown] at (0.12, 0.7) {remainder};
        \foreach \a/\lb in {0/0, 1/1} {
            \node[lab, anchor=north] at (\a, -0.04) {$\lb$};
        }
    \end{scope}

    \begin{scope}[shift={(7.7cm,0)}, x=12cm, y=1.7cm]
        \node[anchor=west] at (-0.04, 1.45) {(b) ternary, digits $\{0,1\}$, $m = 4$};
        \fill[black!8] (0.16667, 0) rectangle (0.33333, 1.15);
        \draw[black!45] (0, 0) -- (0.52, 0) node[anchor=west, lab, black] {$\theta$};
        \draw[black!45] (0, 0) -- (0, 1.15);
        \draw[black!45] (-0.006, 1) -- (0.006, 1) node[lab, anchor=east, black!60, xshift=-4pt] {$1$};
        \draw[rem] (0, 0) -- (0.16667, 0.5) -- (0.33333, 0) -- (0.5, 0.5) -- (0.52, 0.5);
        \draw[digit] (0, 0) -- (0.16667, 0) -- (0.33333, 1) -- (0.52, 1);
        \foreach \t in {0.0000, 0.0123, 0.0370, 0.0494, 0.1111, 0.1235, 0.1481, 0.1605,
                        0.3333, 0.3457, 0.3704, 0.3827, 0.4444, 0.4568, 0.4815, 0.4938} {
            \node[pt] at (\t, 0) {};
        }
        \draw[decorate, decoration={brace, mirror, amplitude=2.5pt}, black!60]
            (0.16667, -0.12) -- (0.33333, -0.12) node[midway, below=3pt, lab] {gap $\tfrac16$};
        \node[lab, figblue] at (0.42, 1.12) {$b_1$};
        \node[lab, figbrown] at (0.07, 0.58) {remainder};
        \foreach \a/\lb in {0/0, 0.5/\tfrac12} {
            \node[lab, anchor=north] at (\a, -0.04) {$\lb$};
        }
    \end{scope}
\end{tikzpicture}
    \caption{Extracting the leading digit $b_1$ and the remainder from a
        stored string of length $m = 4$; dots mark the admissible inputs.
        (a) In binary, $b_1$ must jump between neighboring inputs separated by $2^{-m}$. (b) In
        ternary with digits $\{0,1\}$, the gap $(1/6, 1/3)$ allows slope $6$
        for every $m$.}
    \label{fig:bit-extraction}
\end{figure}

ReLU networks are continuous, so they can only approximate the jump of
$b_1$ at $1/2$ by a ramp. If $m$ digits are stored, the inputs fill the grid
$2^{-m}\Z$, and the two neighboring inputs $1/2-2^{-m}$ and $1/2$ have
different leading digits; see \cref{fig:bit-extraction}(a). A ramp that is exact on this grid therefore needs slope at least $2^m$. With $m\asymp L\log N$ stored
digits, the weights grow exponentially in $L$, as
in the binary decoder of \citet{shen2022optimal}.
\citet{ou2024three} store the same digits in base $3$, as
$\theta=\sum_{k\ge1}b_k3^{-k}$. The digit $2$ is never
used, so the inputs with $b_1=0$ lie in $[0,1/6]$ and those with $b_1=1$ in
$[1/3,1/2]$. The gap between them does not depend on $m$, and the ramp
    $b_1
    =\sigma(6\theta-1)-\sigma(6\theta-2)$
has slope $6$; see \cref{fig:bit-extraction}(b).
A single hidden layer can even read $n \asymp \log N$ digits at once: the inputs then fall into $2^n$ clusters
separated by gaps of order $3^{-n}$, and slopes of order $3^n$ suffice.

The
next proposition extends this mechanism to base $\zeta \ge 3$ with reduced alphabet
$\{0,\dots,\zeta-2\}$. This lets us store the bounded integers of
\cref{sec:higher-order-point-fitting} directly, one digit per value. Its proof
is given in \cref{sec:digit-extraction-proof}.

\begin{proposition}[Digit extraction]
    \label{prop:digit-extraction}
    Let $\zeta, n \in \N_+$ with $\zeta \ge 3$. There is
    \begin{align*}
        \phi
        &\in \Fcal_{1,n+1}\left(2\zeta^n, 1, 2\zeta^{n + 2}\right)
    \end{align*}
    such that, for all $b_1, b_2, \ldots \in \{0, \dots, \zeta-2\}$,
    \begin{align*}
        \phi\left(\sum_{k=1}^\infty b_k \zeta^{-k}\right)
        &= \left(b_1, \dots, b_n, \sum_{k=1}^\infty b_{n+k}\zeta^{-k}\right).
    \end{align*}
\end{proposition}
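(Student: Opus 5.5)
The plan is to realize $\phi$ with a single hidden layer whose ReLU units compute exact step functions on the set of admissible inputs. From these steps we read off the integer formed by the first $n$ digits, and then obtain each digit and the remainder as fixed linear combinations.

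\emph{Step 1 (geometry of admissible inputs).} Write $\theta=\sum_{k\ge1}b_k\zeta^{-k}$ with $b_k\in\{0,\dots,\zeta-2\}$. Split it as $\theta=P+r$, where $P=\sum_{k\le n}b_k\zeta^{-k}\in\zeta^{-n}\Z$ and $r=\sum_{k>n}b_k\zeta^{-k}$. Because the alphabet omits $\zeta-1$, the tail satisfies
\begin{align*}
    0\le r\le (\zeta-2)\zeta^{-n}\sum_{k\ge1}\zeta^{-k}=\frac{\zeta-2}{\zeta-1}\,\zeta^{-n}.
\end{align*}
Hence every admissible $\theta$ lies in $[P,\,P+\zeta^{-n}-\zeta^{-n}/(\zeta-1)]$. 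There is therefore a gap of length at least $\zeta^{-n}/(\zeta-1)\ge\zeta^{-n-1}$ before the next grid point $P+\zeta^{-n}$, and this gap does not depend on how many digits are stored.

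\emph{Step 2 (exact steps).} For $j=1,\dots,\zeta^n-1$ let $p_j=j\zeta^{-n}$, set $a=\zeta^{n+1}$, and define
\begin{align*}
    H_j(\theta)=\sigma\bigl(a(\theta-p_j)+1\bigr)-\sigma\bigl(a(\theta-p_j)\bigr).
\end{align*}
$H_j$ equals $0$ for $\theta\le p_j-1/a$ and $1$ for $\theta\ge p_j$. By Step 1, no admissible input lies in the ramp region $(p_j-\zeta^{-n-1},p_j)$, so $H_j(\theta)=\ind\{\theta\ge p_j\}$ on admissible inputs. Consequently $J(\theta):=\sum_j H_j(\theta)=\zeta^nP$, which is the integer whose base-$\zeta$ digits are $b_1,\dots,b_n$. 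We also add one unit $\sigma(\theta)=\theta$, valid since $\theta\ge0$. The hidden width is $2(\zeta^n-1)+1\le2\zeta^n$. The inner weights are at most $a=\zeta^{n+1}$ and the biases at most $ap_j+1\le\zeta^{n+1}+1\le2\zeta^{n+2}$.

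\emph{Step 3 (linear read-out).} The $k$-th digit is a function $d_k(J)$ of $J\in\{0,\dots,\zeta^n-1\}$. Writing $J=\sum_j H_j$ as a telescoping staircase gives
\begin{align*}
    b_k=d_k(0)+\sum_{j}\bigl(d_k(j)-d_k(j-1)\bigr)H_j=\sum_{j}\bigl(d_k(j)-d_k(j-1)\bigr)H_j,
\end{align*}
since $d_k(0)=0$. The coefficients are bounded by $\zeta$ in absolute value, so no output bias is needed. For $H_j$ this amounts to the output weights $\pm c_{k,j}$ on its two units. The remainder is
\begin{align*}
    \sum_{k\ge1}b_{n+k}\zeta^{-k}=\zeta^n r=\zeta^n\sigma(\theta)-\sum_j H_j,
\end{align*}
which uses an output weight $\zeta^n\le2\zeta^{n+2}$.

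The only delicate point is Step 2: we must choose the ramp slope large enough that the ramp fits inside the guaranteed gap, yet keep it and the biases within $2\zeta^{n+2}$. The gap bound from the reduced alphabet gives exactly this, with room to spare. Everything else is bookkeeping of weights and width.
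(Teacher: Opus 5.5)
Your proof is correct, and it takes a somewhat different route from the paper's within the same one-hidden-layer, gap-based framework. The paper defines the target map $h$ directly as a CPwL function. On each admissible interval $I_{\bb}=[t_{\bb},t_{\bb}+\zeta^{-n}-\zeta^{-n-1}]$ the digit coordinates are constant and the remainder coordinate is affine, and $h$ is interpolated affinely across the gaps between lexicographically consecutive intervals. A successor computation, $t_{\bb'}-t_{\bb}\ge\zeta^{-n}$, shows these gaps have length at least $\zeta^{-n-1}$, so all slopes are bounded by $\zeta^{n+2}$. The generic hinge lemma (\cref{lem:linear-spline}) then gives width $2(\zeta-1)^n$, unit input weights, and output weights up to $2\zeta^{n+2}$.

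You instead place a unit step at every point of $\zeta^{-n}\Z\cap(0,1)$, including grid points whose prefixes contain the forbidden digit $\zeta-1$. This needs only your tail bound, that every admissible input lies in $[P,P+\zeta^{-n}-\zeta^{-n}/(\zeta-1)]$, and not the successor argument. It also turns the read-out into explicit linear algebra. A single counter $J=\zeta^nP$ is formed, the digits come from telescoping the digit functions of $J$, and the remainder is $\zeta^n\theta-J$.

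The cost is a slightly wider layer, $2\zeta^n-1$ instead of $2(\zeta-1)^n$, which still fits the stated class. In return, you get fully explicit weights and a parameter bound of $\zeta^{n+1}$ rather than $2\zeta^{n+2}$.
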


Choosing $n \asymp \log_\zeta N$, we can use $O(L)$ hidden layers of width $O(N)$ to read $L \log N$
base-$\zeta$ digits. At this size, \Cref{prop:pw-linear} can encode one real-valued number for each of $N^2L$ coefficient blocks, so a network of width $O(N)$ and
depth $O(L)$ can store and retrieve $N^2L^2\log N$ digits. This matches the number
$K^D \asymp N^2L^2\log N$ of coefficients, provided each coefficient costs
only a bounded number of digits. A coefficient stored on its own to accuracy
$K^{-s}$ costs order $\log K\asymp\log(NL)$ digits, however. Treating the coefficients as
unrelated as in \citet{Lu2021} and \citet{nagler2026optimal} therefore leads to logarithmic overhead.

\subsection{Higher-Order Point Fitting}
\label{sec:higher-order-point-fitting}

This subsection contains the main new ingredient. Smoothness allows us to compress the
sequence of coefficient values so that each value
can be stored with a bounded number of digits. It then remains to efficiently decode the stored digits and reassemble the original values.
We explain the mechanism for a
one-dimensional sequence, starting from the first-order construction of
\citet{shen2022optimal}, and treat multivariate arrays in
\cref{sec:multivariate-coefficients}.

Let
$x_1,\ldots,x_J$ be consecutive grid points separated by $K^{-1}$ and set
$y_i=f(x_i)$. A point-fitting network is queried at the normalized indices
$i/J$ and should satisfy
\begin{align*}
    \phi(i/J)\approx y_i,
    \qquad i=1,\ldots,J.
\end{align*}
For the zero-order Taylor coefficient, the required accuracy is $K^{-s}$.

\medskip
\noindent\textit{First-order fitting.}
For H\"older functions of order at most
one, \citet{shen2022optimal} combine bit extraction with a transparent
predict-and-correct construction. Suppose the ordered targets satisfy
$|y_i-y_{i-1}|\le\lambda$. To approximate $y_i$, we instead fit the quantized and scaled sequence
$a_i=\floor{y_i/\lambda}$ exactly, noting that $|\lambda a_k-y_k|<\lambda$.
We can write
\begin{align*}
    a_k=a_1+\sum_{i=2}^k d_i, \qquad \text{with}\qquad  d_i = a_i - a_{i-1} \in \{-1,0,1\}.
\end{align*}
The values $a_1$ and $d_2,\ldots,d_J$ can be digit-encoded into $N^2L$ real values stored in the network weights with \cref{prop:pw-linear}, and the extraction network from \cref{prop:digit-extraction} can decode each into $L \log N $ of the original values. The final output $\lambda a_k$ approximates $y_k$ to accuracy $\lambda$.

\medskip
\noindent\textit{Higher-order differences.}
This first-order construction is insufficient when $s>1$. The values must
be fitted to accuracy $K^{-s}$, but consecutive values only satisfy
$|y_i-y_{i-1}|\lesssim K^{-1}$.
The key observation is that higher-order smoothness of $f$ makes the sequence even more compressible.
Set $h=\lceil s\rceil$ and define $\Delta^0y_i=y_i$ and
\begin{align}
    \label{eq:differences}
    \Delta^r y_i
    &=\Delta^{r-1}y_{i+1}-\Delta^{r-1}y_i
    =\sum_{k=0}^r(-1)^{r-k}\binom{r}{k}y_{i+k},
    \qquad r\ge1.
\end{align}
For intuition, consider $s=2$ and write $\delta=K^{-1}$. A first-difference
bound $|\Delta y_i|\lesssim\delta$ alone still permits
$|\Delta^2y_i|\asymp\delta$. Samples of a twice-smooth function instead
satisfy $|\Delta^2y_i|\lesssim\delta^2$, as illustrated in
\cref{fig:difference-orders}.
\begin{figure}[tp]
    \centering
    \begin{tikzpicture}[>=Stealth, font=\small]
    \definecolor{figblue}{RGB}{0,90,170}
    \definecolor{figbrown}{RGB}{185,75,0}
    \definecolor{figgold}{RGB}{170,125,0}
    \definecolor{figpurple}{RGB}{110,60,140}
    \colorlet{band}{black!12}
    \def\deltav{0.1}   %
    \def\xs{0.42}      %
    \def\colB{6.6}     %
    \def\rowD{-1.4}    %
    \def\rowDD{-3.0}   %

    \def\seqA{0.2,0.3,0.25,0.33,0.26,0.32,0.23,0.32,0.26,0.36,0.31,0.38}

    \node[anchor=south west, font=\small, text width=5.0cm, align=left] at (0.2, 2.35)
        {(a) only small first differences:\\ $|\Delta y_i| \le \delta$, $|\Delta^2 y_i| \approx 3\delta/2$};
    \node[anchor=south west, font=\small, text width=5.0cm, align=left] at (\colB+0.2, 2.35)
        {(b) small second differences:\\ $|\Delta y_i| \le \delta$, $|\Delta^2 y_i| \le \delta^2$};
    \node[anchor=east, font=\small] at (-0.15, 1.1) {$y_i$};
    \node[anchor=east, font=\small] at (-0.15, \rowD) {$\Delta y_i$};
    \node[anchor=east, font=\small] at (-0.15, \rowDD) {$\Delta^2 y_i$};

    \foreach \col/\xoff/\which in {figbrown/0/0, figblue/\colB/1} {
        \ifnum\which=0
            \let\vals\seqA
        \else
            \def\vals{}
            \foreach \i in {1,...,12} {
                \pgfmathsetmacro{\v}{sin(deg(0.2+\deltav*(\i-1)))-0.15}
                \ifx\vals\empty
                    \xdef\vals{\v}
                \else
                    \xdef\vals{\vals,\v}
                \fi
            }
        \fi
        \begin{scope}[shift={(\xoff, 0)}, x=\xs cm, y=2.2cm]
            \draw[->] (0.5, 0) -- (12.8, 0) node[anchor=west, font=\scriptsize] {$i$};
            \foreach \i in {1,...,12} { \draw (\i, 0.012) -- (\i, -0.012); }
            \node[font=\scriptsize, anchor=north] at (1, -0.03) {};
            \node[font=\scriptsize, anchor=north] at (12, -0.03) {};
            \def\prev{}
            \foreach \v [count=\i] in \vals {
                \fill[\col] (\i, \v) circle (1.6pt);
                \ifx\prev\empty\else
                    \pgfmathtruncatemacro{\ip}{\i-1}
                    \draw[\col!50, thin] (\ip, \prev) -- (\i, \v);
                \fi
                \xdef\prev{\v}
            }
        \end{scope}
        \foreach \row/\yoff/\ord in {2/\rowD/1, 3/\rowDD/2} {
            \begin{scope}[shift={(\xoff, \yoff)}, x=\xs cm, y=4cm]
                \pgfmathsetmacro{\bnd}{\deltav^\ord}
                \fill[band] (0.5, -\bnd) rectangle (12.8, \bnd);
                \draw[black!40] (0.5, 0) -- (12.8, 0);
                \ifnum\ord=1
                    \node[anchor=west, font=\scriptsize, black!60] at (12.8, \bnd) {$\delta$};
                    \node[anchor=west, font=\scriptsize, black!60] at (12.8, -\bnd) {$-\delta$};
                \else
                    \node[anchor=west, font=\scriptsize, black!60] at (12.8, 0) {$\pm\delta^2$};
                \fi
                \def\prevA{}\def\prevB{}
                \foreach \v [count=\i] in \vals {
                    \ifnum\ord=1
                        \ifx\prevA\empty\else
                            \pgfmathsetmacro{\d}{\v-\prevA}
                            \pgfmathtruncatemacro{\ip}{\i-1}
                            \draw[\col, thick] (\ip, 0) -- (\ip, \d);
                            \fill[\col] (\ip, \d) circle (1.4pt);
                        \fi
                    \else
                        \ifx\prevB\empty\else
                            \pgfmathsetmacro{\d}{\v-2*\prevA+\prevB}
                            \pgfmathtruncatemacro{\ip}{\i-2}
                            \draw[\col, thick] (\ip, 0) -- (\ip, \d);
                            \fill[\col] (\ip, \d) circle (1.4pt);
                        \fi
                    \fi
                    \xdef\prevB{\prevA}
                    \xdef\prevA{\v}
                }
            \end{scope}
        }
    }
\end{tikzpicture}
    \caption{Two sequences satisfying the same first-difference bound
        $|\Delta y_i|\le\delta$. In (a), oscillations at the finest admissible
        scale leave second differences of order $\delta$. In (b), discrete
        smoothness reduces the second differences to order $\delta^2$.}
    \label{fig:difference-orders}
\end{figure}
The discrete Taylor--Newton identity \citep[Theorem~1.8.5]{agarwal2000difference}
\begin{align}
    \label{eq:discrete-taylor}
    y_i
    &=\sum_{r=0}^{h-1}\binom{i-1}{r}\Delta^r y_1
    +\sum_{\ell=1}^{i-h}\binom{i-\ell-1}{h-1}\Delta^h y_\ell
\end{align}
expresses the sequence through a fixed number of initial differences of orders below
$h$ and its $h$th differences.
Further, H\"older smoothness and the grid spacing imply
\begin{align*}
    |\Delta^r y_i|
    &\lesssim K^{-\min\{r, s\}},
    \qquad 0\le r \le h.
\end{align*}
In particular, $|\Delta^h y_i|\lesssim K^{-s}$ reaches the required accuracy.
As in the first-order case, we set $\lambda\asymp K^{-s}$ and fit the
quantized sequence $a_i=\floor{y_i/\lambda}$. Its $h$th differences
$d_i=\Delta^h a_i$ are integers with $|d_i|\le 2^{h+2}=:H$, so  $d_i+H \in \{0,\dots,2H\}$. \cref{prop:digit-extraction} with
base $\zeta=2H+2$ decodes one $d_i$ per digit and the Taylor--Newton identity \eqref{eq:discrete-taylor}
recovers $a_k$ from the initial differences $\Delta^r a_1$,
$r<h$, and from $d_1,\dots,d_{k-h}$. Then $\lambda a_k$ approximates $y_k$
to the required accuracy $\lambda \asymp K^{-s}$.

The following result shows that a network of width $N$ and depth
$L$ can achieve this error at $J\lesssim N^2L^2\log N$ samples.

\begin{proposition}[Smooth point fitting]\label{prop:point-fitting-diff}
    Let $\lambda\in(0,1)$ and $h,N,L,q\in\N_+$, set $H=2^{h+2}$ and $\zeta=2H+2$, and suppose that
    $q\le\floor{\log_\zeta N}$ and $N\ge\zeta$. Suppose further that $J\le3N^2L^2q$ and
    that $y_1,\dots,y_J\in\R$ satisfy
    \begin{align*}
        \max_{0\le r\le h-1}\max_{1\le i\le J-r}|\Delta^r y_i|\le1,
        \qquad
        \max_{1\le i\le J-h}|\Delta^h y_i|\le\lambda.
    \end{align*}
    Then there is $\phi\in\wt\Fcal_{1,1}(N,L,N)$, with constants depending only on $h$, such that
    \begin{align*}
        \max_{1\le i\le J}|\phi(i/J)-y_i|\le\lambda.
    \end{align*}
\end{proposition}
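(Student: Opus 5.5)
The plan is to quantize the targets, store only their bounded $h$th differences as base-$\zeta$ digits, and rebuild each value with a segmented discrete Taylor--Newton recursion. Set $a_i=\floor{y_i/\lambda}$, so that $|\lambda a_i-y_i|<\lambda$ and it suffices to output $\lambda a_i$ exactly at $t=i/J$. By \eqref{eq:differences},
\begin{align*}
    |\Delta^h a_i| \le \lambda^{-1}|\Delta^h y_i|+2^h\le 1+2^h\le H,
\end{align*}
so $d_i:=\Delta^h a_i+H\in\{0,\dots,2H\}\subset\{0,\dots,\zeta-2\}$. Likewise, for $r<h$ the scaled differences $\lambda\Delta^r a_i$ differ from $\Delta^r y_i$ by at most $2^r\lambda$, hence $|\lambda\Delta^r a_i|\le 2^h$. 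These scaled quantities are bounded reals lying in $\lambda\Z$, which is what makes exact bookkeeping with bounded weights possible.

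\emph{Segmentation and storage.} I would split $\{1,\dots,J\}$ into at most $N^2L$ consecutive blocks of length $P:=3Lq$. Each block $b$ has a starting index $i_b$, and I attach to it the $h+1$ real numbers
\begin{align*}
    \lambda\Delta^r a_{i_b}\ (0\le r<h), \qquad \theta_b=\sum_{k\ge1} d_{i_b+k-1}\zeta^{-k}\in[0,1].
\end{align*}
Using \cref{prop:pw-linear} with $O(1)$ copies, I would realize CPwL maps that are constant on each block's index set $\{i/J\}$ and return these values. The knots have spacing $\gtrsim 1/J\ge (NL)^{-3}$ and the values are bounded, so this gives width $O(N)$, depth $O(L)$, and weights $\le N$. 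A further CPwL map returns the in-block offset $j=i-i_b\in\{0,\dots,P-1\}$; its slope is of order $J$, so I would rescale it through the running sum of \cref{prop:pw-linear} or a chain of bounded-weight layers.

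\emph{Decoding.} I would then run $L$ stages. Each stage applies \cref{prop:digit-extraction} with $n=q$, which uses width $2\zeta^q\le 2N$ and weights $2\zeta^{q+2}\le 2\zeta^2N$ (after factoring out the constant $\zeta^2$), and reads the next $q$ digits $d_{i+1},\dots,d_{i+q}$. It then advances the state vector $S_i=(\lambda\Delta^r a_i)_{r<h}$ by $k=\min\{q,\text{remaining offset}\}$ steps, where the remaining offset is $j$ minus $q$ times the number of completed stages. The hard point is that advancing one index per layer would cost depth $Lq$ rather than $O(L)$. I would avoid this with the closed form of \eqref{eq:discrete-taylor} at every order:
\begin{align*}
    \lambda\Delta^r a_{i+k}
    =\sum_{u=0}^{h-1-r}\binom{k}{u}\lambda\Delta^{r+u}a_i
    +\sum_{\ell=1}^{k-(h-r)+1}\binom{k-\ell-1}{h-r-1}\,\lambda\bigl(d_{i+\ell}-H\bigr).
\end{align*}
This is an affine map of the state and digits with coefficients at most $q^h\lesssim(\log N)^h\le N$ for $N$ large. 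Computing it for all $k\in\{0,\dots,q\}$ in parallel costs width $O(hq^2)\subset O(N)$.

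\emph{Selection.} The correct candidate is picked by gates $g_k=\ind\{\text{remaining}=k\}$ (with $g_q=\ind\{\text{remaining}\ge q\}$). Because the remaining offset is an integer, each gate is an exact difference of clipped ramps, and the remaining offset decreases by $q$ per stage. Every candidate is bounded by some constant $C_h$, so the gated product is exact via $\sigma(x-C_h(1-g))-\sigma(-x-C_h(1-g))$. After $L$ stages we have covered $Lq\cdot$(constant) $\ge P$ offsets, and the first state coordinate equals $\lambda a_i$.

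\emph{Bookkeeping and main obstacle.} The total width is $O(N)$, the depth is $O(L)$, and the weights are $O(N)$, with constants depending only on $h$ through $\zeta$ and the binomial coefficients. I expect the main obstacle to be the constant-depth multi-step advance with exact gating: all intermediate values must stay bounded so that the selection gadget is exact, and the reading position of the digit string must stay synchronized with the remaining offset. A second point to check is that the offset map and block-value maps are exact on the admissible inputs $i/J$ despite their large slopes; I would lean on the exact-interpolation guarantee of \cref{prop:pw-linear} for this.
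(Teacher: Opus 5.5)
Your plan follows the paper's proof closely. You quantize, store for each block the initial differences and a base-$\zeta$ stream of shifted $h$th differences, retrieve these with \cref{prop:pw-linear}, and decode $q$ digits per stage with \cref{prop:digit-extraction} while advancing the Taylor--Newton state with binomial weights of order at most $q^{h-1}\lesssim N$. The one structural difference is the decoder bookkeeping. In \cref{lem:block-evaluator}, the state is advanced unconditionally by $q$ positions in every segment, which is a fixed affine map with no selection. The scalar $Y(i,\bu^{(t)},\bd^{(t)})$ is added to an accumulator only in the segment containing $k$, through a gate that vanishes identically off that segment, so it does not matter that the state grows after the block ends. You instead freeze the state at the target by selecting among $q+1$ candidate advances. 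This keeps the state a genuine sequence value and lets you read the answer off its first coordinate, at the price of $h(q+1)$ gated candidates per stage. Your direct sawtooth for the offset is also fine: \cref{prop:pw-linear} only needs polynomial bounds on values and knot spacing and absorbs the large slope itself.

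As written, though, four details fail and must be repaired.

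(i) Your multi-step formula is shifted by one index. The second sum must run over $\ell=0,\dots,k-h+r$ with terms $\binom{k-\ell-1}{h-r-1}\lambda\Delta^h a_{i+\ell}$, so a stage starting at $i$ uses $d_i,\dots,d_{i+q-1}$, which is exactly what your $\theta_b$ delivers. As displayed, for $h=2$, $r=0$, $k=2$ it gives $\lambda(a_i+2\Delta a_i)$ instead of $\lambda a_{i+2}$.

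(ii) If the remaining offset just decreases by $q$ per stage, it becomes negative once the target is reached. Then all gates $g_k$ vanish and the state is reset to $0$. Pass $\sigma(\mathrm{rem}-q)$ instead, or use $g_0=\ind\{\mathrm{rem}\le0\}$. Blocks of length $3Lq$ also need $3L$ stages, not $L$.

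(iii) You never extend the sequence past $J$. Yet the states $(\lambda\Delta^r a_{i'})_{r<h}$ with $i'>J-h+1$, the digits $d_j$ with $j>J-h$, and the tail of the infinite stream $\theta_b$ are undefined. Impose $\Delta^h a_j=0$ for $j>J-h$ (and $\Delta^r a_1=0$ for $J\le r<h$ if $J<h$), as the paper does blockwise. Then these digits equal $H$, and states at positions $\le J$ stay bounded by a constant depending on $h$.

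(iv) Unselected candidates are not bounded by a constant $C_h$. After freezing, the candidates with $k\ge1$ combine the frozen state with digits from later positions, and near $J$ they extrapolate over up to $q$ steps, so only a bound of order $q^{h}$ holds. Your gate $\sigma(x-C(1-g))-\sigma(-x-C(1-g))$ needs this bound when $g=0$. So either take $C\asymp q^h\lesssim N$, or use the gate of \cref{lem:ind-product}. That gate is zero for every input when off, so only the selected candidate, a genuine bounded state, must be controlled.

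With these repairs your construction stays within width $O(N)$, depth $O(L)$, and weights $O(N)$.
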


For $h=1$, this recovers the first-order mechanism underlying the
constructions of \citet{shen2022optimal} and \citet{ou2024three}. The extension
to $h>1$ is the main new ingredient of our proof: it uses the available
higher-order smoothness to reach the finer accuracy required for lower-order
Taylor coefficients.
Point fitting with arbitrary labels relates directly to the networks' memorization capability \citep{park2021provable,vardi2022on}. Networks of width $N$ and depth $L$ can memorize $J\lesssim N^2L^2$ labels $y_i$ at separated locations $x_i$, as long as both separation distance and label resolution are polynomial in $N, L$. Constructions attaining this order are given by \citet{nagler2026optimal} and \citet{yang2026memorization}. \Cref{prop:point-fitting-diff} fits $J\asymp N^2L^2\log N$ values instead: the smoothness of the labels gains exactly the factor $\log N$ needed for \cref{thm:main-approx}.

\medskip
\noindent\textit{Evaluating the Taylor identity.}
The main obstacle in the
proof of \cref{prop:point-fitting-diff} is to evaluate the discrete Taylor
identity by a network with sufficiently bounded parameters.
The sequence $y_i$ is split into $N^2L$ blocks, each encoded into the network weights using \cref{prop:pw-linear}. Over a block of length
$Lq$, the binomial coefficients in the Taylor--Newton identity reach
order $(Lq)^{h-1}$, so evaluating the identity in one step would
require weights of the same magnitude.

We instead cut each block into $L$
segments of $q$ positions, with $\zeta^q\le N$, and evaluate the identity recursively, one
segment at a time.
Segment $t$ covers the positions $tq+1,\dots,tq+q$ of a block. Its state
\begin{align*}
    \bu^{(t)}
    &=\bigl(\Delta^0a_{tq+1},\dots,\Delta^{h-1}a_{tq+1}\bigr)
\end{align*}
contains the differences of the quantized sequence at the segment's first position.
Applying the Taylor--Newton identity to the shifted sequence
$(a_{tq+j})_{j\ge1}$ expresses the values in the segment through this state $\bu^{(t)}$
and the digits $d_{tq+1},\dots,d_{tq+q}$:
\begin{align*}
    a_{tq+i}
    &=\sum_{r=0}^{h-1}\binom{i-1}{r}u^{(t)}_r
    +\sum_{j=1}^{i-h}\binom{i-j-1}{h-1}d_{tq+j},
    \qquad i=1,\dots,q.
\end{align*}
Furthermore, applying the Taylor--Newton identity again with order $h-r$ to the sequence $(\Delta^ra_{tq+j})_{j\ge1}$
at position $q+1$ gives the state update
\begin{align*}
    u^{(t+1)}_r
    &=\sum_{p=0}^{h-r-1}\binom{q}{p}u^{(t)}_{r+p}
    +\sum_{j=1}^{q-h+r+1}\binom{q-j}{h-r-1}d_{tq+j},
    \qquad r=0,\dots,h-1.
\end{align*}
Starting from the state $\bu^{(0)}$ stored for the block, the network
processes one segment per step, decodes the $q$ digits of the segment
with \cref{prop:digit-extraction}, evaluates both formulas, and passes
$\bu^{(t+1)}$ and the remaining digits on to the next segment. The value
$a_k$ is read from the segment that contains $k$.

All binomial coefficients in this recursion are at most of order $q^{h-1}\lesssim(\log N)^{h-1}$, so the network parameters remain bounded by a factor polylogarithmic in $N$ and not depending on $L$. The full proof is given in \cref{sec:point-fitting-proof}.

\subsection{Smooth Flattening of Coefficient Tables}
\label{sec:multivariate-coefficients}

We now return to the multivariate Taylor approximation. For each multi-index
$\balpha$, the coefficient function
$\partial^{\balpha}f/\balpha!$ has smoothness
$s-\|\balpha\|_1$. Its values on the grid must be retrieved with the target
accuracy $\eps_{\balpha} = K^{-(s-\|\balpha\|_1)}$ determined in \cref{sec:taylor-network-review}. \Cref{prop:point-fitting-diff} is therefore applied with order
    $h_{\balpha}
    =\lceil s\rceil-\|\balpha\|_1.$
If $y_i=\partial^{\balpha}f(\bx_i)/\balpha!$ are the coefficient values at
consecutive grid points $\bx_i$ along a single coordinate direction, then
\begin{align*}
    |\Delta^r y_i|
    &\lesssim K^{-\min\{r,s-\|\balpha\|_1\}},
    \qquad 0\le r\le h_{\balpha}.
\end{align*}
Thus, lower-order Taylor coefficients have more remaining smoothness and
admit higher-order difference bounds.

When $D=1$, the ordered coefficient values directly form a sequence of the
type covered by \cref{prop:point-fitting-diff}. When $D>1$, however, flattening
the coefficient grid creates irregularities when switching between rows, destroying the finite-difference bounds.
This cannot be fixed by changing the order of the grid traversal. To see this, consider the linear function $f(\bx)=x_1$, order the grid
points arbitrarily, and let $y_i = f(\bx_i)$.
The values $y_i$, and hence their second differences, lie in $K^{-1}\Z$.
Smoothness of order larger than one requires $|\Delta^2y_i|=o(K^{-1})$, which
forces $\Delta^2y_i=0$. Such a sequence
is constant or takes every value at most once, whereas along the grid $x_1$
takes each value $K^{D-1}$ times.
We instead insert gaps into a lexicographically ordered sequence and fill them with Hermite interpolation bridges that match
derivatives up to the required order; see
\cref{fig:flattening}. The resulting flattened sequence retains
the smoothness needed in \cref{prop:point-fitting-diff}. For first-order point fitting, \citet[Section~3.3]{shen2022optimal} used a similar layout with linear interpolation. Hermite bridges provide the corresponding higher-order construction.

\begin{figure}[tp]
    \centering
    \begin{tikzpicture}[>=Stealth, font=\small]
    \pgfmathdeclarefunction{ff}{2}{%
        \begingroup
        \pgfmathparse{0.5*sin(deg(2.2*#1+2.5*#2))+0.3*cos(deg(3.5*#1-1.5*#2))}%
        \pgfmathsmuggle\pgfmathresult
        \endgroup}
    \pgfmathdeclarefunction{fd}{2}{%
        \begingroup
        \pgfmathparse{1.1*cos(deg(2.2*#1+2.5*#2))-1.05*sin(deg(3.5*#1-1.5*#2))}%
        \pgfmathsmuggle\pgfmathresult
        \endgroup}
    \def\K{4}
    \definecolor{figblue}{RGB}{0,90,170}
    \definecolor{figbrown}{RGB}{185,75,0}
    \definecolor{figgold}{RGB}{170,125,0}
    \definecolor{figpurple}{RGB}{110,60,140}
    \colorlet{bridge}{black!55}

    \begin{scope}[shift={(0,-1.2)}, x=0.8cm, y=0.8cm]  %
        \node[anchor=south west, font=\small] at (-1.7, 4.23)  %
            {(a) coefficient grid, $d = 2$, $K = 4$};
        \foreach \nu/\col in {0/figblue,1/figbrown,2/figgold,3/figpurple} {
            \draw[\col, ->, thick] (-0.35, \nu) -- (3.35, \nu);
            \foreach \b in {0,1,2,3} {
                \pgfmathtruncatemacro{\kap}{\b+2*\K*\nu}
                \fill[\col] (\b, \nu) circle (2.2pt);
                \node[font=\scriptsize, anchor=south, inner sep=1pt, \col]
                    at (\b, \nu+0.08) {$\kap$};
            }
        }
        \foreach \nu in {0,1,2} {
            \pgfmathtruncatemacro{\nup}{\nu+1}
            \draw[bridge, dashed, ->] (3.55, \nu) .. controls (4.4, \nu+0.5) and (-1.4, \nu+0.5) .. (-0.55, \nup);
            \node[bridge, font=\scriptsize, anchor=west, inner sep=1pt] at (3.65, \nu+0.5) {$H_{\nu}$};
        }
        \node[anchor=north, font=\small] at (1.5, -0.35) {$\beta_1$ (position in row)};
        \node[font=\small, rotate=90, anchor=south] at (-1.6, 1.5) {$\beta_2 = \nu$ (row)};
        \node[anchor=north, font=\scriptsize, text width=4.8cm, align=center] at (1.5, -1.35)
            {\tikz\fill (0,0) circle (1.8pt);  numbers: index $\kappa(\bbeta) = \beta_1 + 2K\beta_2$};
    \end{scope}

    \begin{scope}[shift={(5.1,0.3)}, x=1.0cm, y=1.3cm]
        \node[anchor=south west, font=\small] at (-0.4, 1.45)
            {(b) $G$ on $[0, 2R-1]$ and the samples $y_i = G(i/K)$};
        \draw[->] (-0.15, -1.15) -- (7.35, -1.15) node[anchor=west, font=\small] {$t$};
        \foreach \t in {0,...,7} {
            \pgfmathtruncatemacro{\ii}{\K*\t}
            \draw (\t, -1.11) -- (\t, -1.19);
            \draw[black!15, thin] (\t, -1.1) -- (\t, 0.95);
            \node[font=\scriptsize, anchor=north] at (\t, -1.21) {$i = \ii$};
        }
        \node[anchor=east, font=\small] at (-0.2, 0.0) {$G(t)$};
        \foreach \nu/\col in {0/figblue,1/figbrown,2/figgold,3/figpurple} {
            \pgfmathsetmacro{\vv}{\nu/\K}
            \draw[\col, thick] plot[domain=0:1, samples=41, smooth]
                ({2*\nu+\x}, {ff(\x,\vv)});
            \node[\col, font=\scriptsize, anchor=south] at ({2*\nu+0.5}, 1.0) {$F_{\nu}$};
        }
        \foreach \nu/\col in {0/figblue,1/figbrown,2/figgold} {
            \pgfmathtruncatemacro{\nup}{\nu+1}
            \pgfmathsetmacro{\vv}{\nu/\K}
            \pgfmathsetmacro{\vw}{\nup/\K}
            \pgfmathsetmacro{\pz}{ff(1,\vv)}
            \pgfmathsetmacro{\mz}{fd(1,\vv)}
            \pgfmathsetmacro{\po}{ff(0,\vw)}
            \pgfmathsetmacro{\mo}{fd(0,\vw)}
            \draw[bridge, thick, dashed] plot[domain=0:1, samples=41, smooth]
                ({2*\nu+1+\x},
                 {(2*\x^3-3*\x^2+1)*\pz+(\x^3-2*\x^2+\x)*\mz+(-2*\x^3+3*\x^2)*\po+(\x^3-\x^2)*\mo});
            \node[bridge, font=\scriptsize, anchor=south] at ({2*\nu+1.5}, 1.0) {$H_{\nu}$};
        }
        \pgfmathtruncatemacro{\Mmax}{2*\K*\K-\K}
        \foreach \i in {0,...,\Mmax} {
            \pgfmathsetmacro{\t}{\i/\K}
            \pgfmathtruncatemacro{\nu}{floor(\t/2)}
            \pgfmathsetmacro{\u}{\t-2*\nu}
            \pgfmathsetmacro{\vv}{\nu/\K}
            \pgfmathparse{\u<0.99}
            \ifnum\pgfmathresult=1
                \pgfmathsetmacro{\gy}{ff(\u,\vv)}
                \fill (\t, \gy) circle (1.8pt);
            \else
                \pgfmathparse{\u<1.01}
                \ifnum\pgfmathresult=1
                    \pgfmathsetmacro{\gy}{ff(1,\vv)}
                \else
                    \pgfmathtruncatemacro{\nup}{\nu+1}
                    \pgfmathsetmacro{\vw}{\nup/\K}
                    \pgfmathsetmacro{\pz}{ff(1,\vv)}
                    \pgfmathsetmacro{\mz}{fd(1,\vv)}
                    \pgfmathsetmacro{\po}{ff(0,\vw)}
                    \pgfmathsetmacro{\mo}{fd(0,\vw)}
                    \pgfmathsetmacro{\w}{\u-1}
                    \pgfmathsetmacro{\gy}{(2*\w^3-3*\w^2+1)*\pz+(\w^3-2*\w^2+\w)*\mz+(-2*\w^3+3*\w^2)*\po+(\w^3-\w^2)*\mo}
                \fi
                \draw[fill=white] (\t, \gy) circle (1.8pt);
            \fi
        }
        \foreach \nu in {0,...,3} {
            \draw[decorate, decoration={brace, mirror, amplitude=3pt}]
                (2*\nu+0.02, -1.5) -- (2*\nu+0.98, -1.5);
            \node[font=\scriptsize, anchor=north] at (2*\nu+0.5, -1.58) {row $\nu$};
        }
        \foreach \nu in {0,1,2} {
            \draw[decorate, decoration={brace, mirror, amplitude=3pt}, bridge]
                (2*\nu+1.02, -1.5) -- (2*\nu+1.98, -1.5);
            \node[font=\scriptsize, anchor=north, bridge] at (2*\nu+1.5, -1.58) {bridge};
        }
        \node[anchor=north west, font=\scriptsize, align=left] at (-0.4, -1.95)
            {\tikz\fill (0,0) circle (1.8pt); \ $y_{\kappa(\bbeta)} = \partial^{\balpha}f(\bbeta/K)$
             at the grid indices $\kappa(\bbeta) = \beta_1 + 2K\nu$\\[1pt]
             };
    \end{scope}
\end{tikzpicture}
    \caption{Flattening a two-dimensional coefficient grid. Consecutive rows
        are separated by gaps filled with Hermite bridges that match the first
        $h_{\balpha}-1$ derivatives. The resulting sequence preserves the required
        finite-difference bounds.}
    \label{fig:flattening}
\end{figure}

The following proposition makes this construction precise. Its proof is
given in \cref{sec:smooth-flattening-proof}.

\begin{proposition}[Smooth flattening of Taylor arrays]
    \label{prop:smooth-flattening}
    Let $K,D\in\N_+$, $s>0$, and
    $\balpha\in\N_0^D$ with $r=\|\balpha\|_1<s$. Define
        $h=\ceil{s}-r, M=2K^D-K+1.$
    For every $f\in\Hcal_1^s([0,1]^D)$, there is a sequence
    $y_0,\ldots,y_{M-1}$ such that, for every
    $\bbeta\in\{0,\ldots,K-1\}^D$,
    \begin{align}
        \label{eq:smooth-flattening-interpolation}
        y_{\kappa(\bbeta)}
        &=\partial^{\balpha}f(\bbeta/K),
        &
        \kappa(\bbeta)
        &=\beta_1+2K\sum_{j=2}^D\beta_jK^{j-2},
    \end{align}
    and, for $\ell=0,\ldots,h$,
    \begin{align}
        \label{eq:smooth-flattening-differences}
        \max_{0\le i\le M-1-\ell}|\Delta^\ell y_i|
        \lesssim K^{-\min\{\ell,s-r\}}.
    \end{align}
    The implicit constant depends only on $s$.
\end{proposition}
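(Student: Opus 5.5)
Set $g=\partial^{\balpha}f$, so $g\in\Hcal_{1}^{s-r}([0,1]^D)$, possibly after enlarging the constant, with $m'=\ceil{s-r}-1=h-1$ continuous derivatives. I would realize the sequence as samples $y_i=G(i/K)$ of a single univariate function $G$ on $[0,(M-1)/K]$ that is of class $\Hcal^{s-r}_C$ on the whole interval, with $C$ depending only on $s$. Then \eqref{eq:smooth-flattening-differences} follows from the standard estimate for finite differences of smooth functions. For $\ell\le h-1$, the identity $\Delta^\ell y_i=K^{-\ell}\int G^{(\ell)}(\cdot)\,dB_\ell$ (with $B_\ell$ a B-spline of mass one) gives $|\Delta^\ell y_i|\le K^{-\ell}\|G^{(\ell)}\|_\infty$. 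For $\ell=h$, write $\Delta^h y_i=\Delta^{1}\Delta^{h-1}y_i$, express $\Delta^{h-1}$ via $G^{(h-1)}$ averaged against a B-spline, and use the Hölder bound of $G^{(h-1)}$ over a shift of $K^{-1}$. This gives $K^{-(h-1)}K^{-(s-r-h+1)}=K^{-(s-r)}$, and hence $\lesssim K^{-\min\{\ell,s-r\}}$ in all cases.

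To construct $G$, enumerate the rows, i.e. the fibers in $x_1$, by $\nu=\sum_{j\ge2}\beta_jK^{j-2}\in\{0,\dots,K^{D-1}-1\}$, and let $\bz_\nu=(\beta_2,\dots,\beta_D)/K$. On $[2\nu,2\nu+(K-1)/K]$ set $G(t)=F_\nu(t-2\nu)$ with $F_\nu(u)=g(u,\bz_\nu)$. Then $G(\kappa(\bbeta)/K)=g(\beta_1/K,\bz_\nu)$, because $\kappa(\bbeta)/K=\beta_1/K+2\nu$; this is \eqref{eq:smooth-flattening-interpolation}. The last index is $M-1=2K\,(K^{D-1}-1)+K-1=2K^D-K$, matching $M$. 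On each gap $[2\nu+(K-1)/K,\,2\nu+2]$, which has length $1+1/K\in[1,2]$, define $G$ as the Hermite bridge $H_\nu$: the polynomial of degree $2h-1$ matching the derivatives of orders $0,\dots,h-1$ of $F_\nu$ at $(K-1)/K$ on the left and of $F_{\nu+1}$ at $0$ on the right. I would actually take degree $2h+1$ or use an explicit smooth cutoff blend, to keep the top derivative controlled; see below. Since $F_\nu$ restricted to $[0,1]$ has derivatives of order $\le h-1$ bounded by $1$ and $G^{(h-1)}$ is $(s-r-h+1)$-Hölder with constant $1$, $G$ is $C^{h-1}$ on the entire interval by construction.

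The main obstacle is the global Hölder bound across the junctions. On a bridge the interval length is of order one and the boundary data are bounded by $1$, so all derivatives of $H_\nu$ up to order $h$ are bounded by a constant depending only on $h$. Hence $H_\nu^{(h-1)}$ is Lipschitz with constant $O(1)$, and therefore $(s-r-h+1)$-Hölder with constant $O(1)$ on the gap, since that exponent lies in $(0,1]$ and the lengths involved are bounded. For two points $x<y$ in different pieces, I would split at the junction point(s) $c$ in between, where $G^{(h-1)}$ is continuous, and bound $|G^{(h-1)}(x)-G^{(h-1)}(y)|\le|G^{(h-1)}(x)-G^{(h-1)}(c)|+|G^{(h-1)}(c)-G^{(h-1)}(y)|\lesssim|x-y|^{s-r-h+1}$. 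If $x$ and $y$ are separated by a full piece, then $|x-y|\ge1$, so the trivial bound $2\|G^{(h-1)}\|_\infty$ already suffices. In the difference estimates only windows of length $hK^{-1}\le h$ occur, so at most a bounded number of junctions are crossed. This yields $G\in\Hcal^{s-r}_C$ with $C=C(s)$, and the finite-difference bounds above complete the argument. Finally, when $h=1$ (so that $0<s-r\le1$) the bridge is simply the linear interpolant, and the same reasoning applies.
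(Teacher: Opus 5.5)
Your construction is essentially the paper's. You build a univariate $G$ from the row functions $F_\nu$ and Hermite bridges, sample it at $i/K$, and read the difference bounds off the iterated-integral (B-spline) representation of $\Delta^\ell$.

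There is one concrete error, in the index count. $2K(K^{D-1}-1)+K-1=2K^D-K-1$, not $2K^D-K=M-1$. Since you truncate every row at $u=(K-1)/K$, your $G$ is undefined at the last sample point $(M-1)/K=2K^{D-1}-1$. So $y_{M-1}$ is never specified, yet it enters the required bounds through $\Delta^\ell y_{M-1-\ell}$. For $D=1$ or $K=1$ this is the point $u=1$ of the only row.

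The fix is the paper's layout: let each row run over the whole interval $u\in[0,1]$, which is allowed because $f$ is defined on $[0,1]^D$. That is, set $G(2\nu+u)=F_\nu(u)$ on $[2\nu,2\nu+1]$ and put the bridges on the unit intervals $[2\nu+1,2\nu+2]$. Then exactly $M=2K^D-K+1$ samples fit, the last being $F_{K^{D-1}-1}(1)$. Alternatively, extend only the last row to $u=1$.

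Three smaller points:
\begin{itemize}
\item In your layout a full row piece has length $(K-1)/K$. So two points separated by a whole piece satisfy $|x-y|\ge 1/2$ for $K\ge2$, not $|x-y|\ge 1$. This is still enough for the trivial bound.
\item The degree-$(2h-1)$ Hermite bridge already has all derivatives up to order $h$ bounded uniformly on an interval of length in $[1,2]$. The higher-degree or cutoff alternative you mention is unnecessary.
\item Your global H\"older bound on $G^{(h-1)}$ is a mild streamlining of the paper. The paper only controls increments over windows of length $h/K$, assuming $K\ge h+1$, and handles $K\le h$ separately by zero padding.
\end{itemize}
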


Together, smooth flattening and smooth point fitting give a network module
that retrieves all Taylor coefficients at their respective target
accuracies.

\begin{proposition}[Simultaneous Taylor-coefficient retrieval]
    \label{prop:taylor-coefficient-retrieval}
    Let $s>0$, $D\in\N_+$, $m=\ceil{s}-1$, $N,K\ge2$, and
    $L\in\N_+$. Set
    $\zeta=2^{\ceil{s}+3}+2$ and $q=\floor{\log_\zeta N}$, and suppose that
    \begin{align*}
        N\ge\zeta,
        \qquad
        M:=2K^D-K+1\le3N^2L^2q.
    \end{align*}
    Then, for every $f\in\Hcal_1^s([0,1]^D)$, there is a network
    \begin{align*}
        \phi_{\rm coef}\in\wt\Fcal_{D,\binom{D+m}{m}}(N,L,N)
    \end{align*}
    such that, for all $\balpha\in\N_0^D$ with
    $\|\balpha\|_1\le m$ and all
    $\bbeta\in\{0,\ldots,K-1\}^D$,
    \begin{align}
        \label{eq:simultaneous-coefficient-retrieval}
        \left|
        [\phi_{\rm coef}(\bbeta/K)]_{\balpha}
        -\frac{\partial^{\balpha}f(\bbeta/K)}{\balpha!}
        \right|
        \lesssim K^{-(s-\|\balpha\|_1)}.
    \end{align}
    The constant implied by $\lesssim$ depends only on $s$, and
    the constant hidden in $\wt\Fcal$ is at most polynomial in $D$.
\end{proposition}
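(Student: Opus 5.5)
For each multi-index $\balpha$ with $r=\|\balpha\|_1\le m$, I will build one scalar subnetwork $\phi_{\balpha}$ and then stack all $\binom{D+m}{m}$ of them in parallel. Since $\binom{D+m}{m}$ is polynomial in $D$ for fixed $s$, the stacking multiplies the width by a factor polynomial in $D$ and leaves depth and parameter magnitude unchanged. Each $\phi_{\balpha}$ is a composition of three pieces.

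\textbf{Step 1: index map.} The first piece maps $\bbeta/K$ to the normalized index $(\kappa(\bbeta)+1)/M$. Since $\kappa(\bbeta)=\beta_1+2K\sum_{j\ge2}\beta_jK^{j-2}$, this is the affine functional
\begin{align*}
    \bx\mapsto \frac{1}{M}\Bigl(1+Kx_1+2K\sum_{j\ge2}K^{j-1}x_j\Bigr).
\end{align*}
Its coefficients are as large as $K^{D}\lesssim N^2L^2q$, so I cannot use a single bounded-weight layer. Instead I will realize it with \cref{prop:pw-linear}, applied coordinatewise with $p$ fixed: $x_j\mapsto c_j x_j/M$ on the knots $\{0,1/K,\dots,(K-1)/K\}$. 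Here $K\le(NL)^{2}$ by assumption, and the values lie in $[0,1]$. The number of knots $K$ is at most $N^2L$ once $D\ge 2$ or $L$ is moderate; when $D=1$ the map is simply $x\mapsto (Kx+1)/M\approx x/2$, which needs no large weights. Summing the coordinate outputs gives the index using $D$ parallel blocks.

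\textbf{Step 2: flattening and point fitting.} Apply \cref{prop:smooth-flattening} to $f$ and $\balpha$ to get a sequence $y_0,\dots,y_{M-1}$ with $y_{\kappa(\bbeta)}=\partial^{\balpha}f(\bbeta/K)$ and
\begin{align*}
    |\Delta^\ell y_i|\le c\,K^{-\min\{\ell,s-r\}},\qquad 0\le\ell\le h:=\lceil s\rceil-r.
\end{align*}
Set $\wt y_{i+1}=y_i/(c\,\balpha!)$ for $i=0,\dots,M-1$. Then $|\Delta^\ell\wt y_i|\le1$ for $\ell<h$, and $|\Delta^h\wt y_i|\le \lambda:=K^{-(s-r)}<1$. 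Invoke \cref{prop:point-fitting-diff} with this $h$, $J=M\le3N^2L^2q$, and $H_h=2^{h+2}$.

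The statement fixes $\zeta=2^{\lceil s\rceil+3}+2$, which is at least the $\zeta_h=2^{h+3}+2$ that the proposition uses, since $h\le\lceil s\rceil$. So $N\ge\zeta\ge\zeta_h$ and $q=\floor{\log_\zeta N}\le\floor{\log_{\zeta_h}N}$, and the hypotheses hold. The result is $\psi\in\wt\Fcal_{1,1}(N,L,N)$ with $|\psi(j/M)-\wt y_j|\le\lambda$. Multiplying the output by $c$ yields $|c\,\psi((\kappa(\bbeta)+1)/M)-\partial^{\balpha}f(\bbeta/K)/\balpha!|\le c\lambda$, which is \eqref{eq:simultaneous-coefficient-retrieval}. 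The constant $c$ depends only on $s$, and it can be absorbed into the last layer by splitting it into $O(1)$ unit-weight additions, or simply by allowing magnitude $N\ge c$.

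\textbf{Main obstacle.} The delicate step is Step 1 together with composing the pieces while staying inside $\wt\Fcal(N,L,N)$. The point-fitting network must be queried exactly at $j/J$, so the index map must be exact on the grid and not merely approximate, and the weights must stay bounded by $N$. I will handle this by writing $\kappa(\bbeta)/M$ as a sum of per-coordinate CPwL interpolants, each exact at the knots by \cref{prop:pw-linear}; for coordinates with $K>N^2L$ I will split the knot set into $O(1)$ pieces added in a running sum. Depths add to $O(L)$, and widths add to $O(DN)$, which is polynomial in $D$ as required.
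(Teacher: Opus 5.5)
Your proof is correct and is essentially the paper's proof. You flatten each array with \cref{prop:smooth-flattening}, rescale by an $s$-dependent constant, and apply \cref{prop:point-fitting-diff} with $h=\lceil s\rceil-\|\balpha\|_1$, using $2^{h+3}+2\le\zeta$ so that $N\ge\zeta_h$ and $q\le\floor{\log_{\zeta_h}N}$. You then undo the rescaling, precompose with the normalized index map, and parallelize over the $\binom{D+m}{m}$ multi-indices.

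You diverge only in Step 1, and there the ``main obstacle'' comes from dropping the factor $1/M$ when sizing the coefficients. Since $M=2K^D-K+1\ge K^D$, every coefficient of $\rho_K(\bx)=\bigl(Kx_1+2\sum_{j\ge2}K^jx_j+1\bigr)/M$ is at most $2$, uniformly in $K$. The index map is therefore a single bounded affine map that composes directly with $\phi_{{\rm fit},\balpha}$, which is what the paper does. This is the observation you already made for $D=1$, and it holds in every dimension. For $D=1$ you in fact have $M=K+1$, so the map is $(Kx+1)/(K+1)$, not approximately $x/2$.

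Your coordinatewise CPwL realization via \cref{prop:pw-linear} is still valid for $D\ge2$, since there $K\le NL\sqrt{3q}\le N^2L$. So nothing breaks, but the detour adds a width factor of order $D$ and $O(L)$ depth for no gain. Your closing remark about splitting the knots into $O(1)$ pieces when $K>N^2L$ would be false as stated: for $D=1$ the ratio $K/(N^2L)$ can be of order $Lq$. The point is moot once you use the affine map.
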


The proof, given in \cref{sec:coefficient-retrieval-proof}, applies
\cref{prop:point-fitting-diff} to the sequences constructed in
\cref{prop:smooth-flattening} and then parallelizes over the finitely many
multi-indices with $\|\balpha\|_1\le m$.

\subsection{Monomials, Localization, and Smoothing}
\label{sec:standard-components}

The remaining modules of \cref{fig:main-network} are standard. We review them here for completeness and to track parameter magnitudes.

\subsubsection{Squares, Multiplication, and Monomials}
\label{sec:monomial-networks}

To approximate general monomials, we begin with the square
function. The basic mechanism is the sawtooth construction of
\citet{telgarsky2016benefits}.
Let
\begin{align*}
    \tau(x)
     =2\sigma(x)-4\sigma(x-1/2)+2\sigma(x-1)
\end{align*}
and write $\tau_j=\tau\circ\cdots\circ\tau$ for its $j$-fold composition.
On $[0,1]$, the map $\tau$ is the tent map with peak one at $1/2$, and
$\tau_j$ has $2^j$ linear pieces. Thus, depth creates exponentially many
pieces from a fixed elementary module.
This fact leads to an efficient approximation of the square function. The function
\begin{align}
    \label{eq:square-interpolant}
    S_r(x)=x-\sum_{j=1}^r4^{-j}\tau_j(x),
    \qquad x\in[0,1]
\end{align}
is exactly the linear interpolant of $x^2$ on the dyadic grid $\{j2^{-r}:0\le j\le2^r\}$. Equation \eqref{eq:square-interpolant} is the square-function construction of
\citet{yarotsky2017error}, expressed directly in terms of the
sawtooth.

To move from the square function to products, the polarization identity
\begin{align*}
    uv
    &=2\left(\frac{u+v}{2}\right)^2-\frac{u^2}{2}-\frac{v^2}{2},
\end{align*}
allows for approximate multiplication using three copies of the square
network. Chaining these multiplication modules approximates
general monomials.
The following proposition makes this formal and is proven in \cref{sec:monomial-proof}. It is almost identical to
\citet[Proposition~4.1]{Lu2021}, but explicitly tracks parameter magnitudes, which we need for sharp entropy bounds.
\begin{proposition}[Monomial networks]
    \label{lem:poly-approx}
    Let $D,k\in\N_+$. For every $N\ge2$, $L\in\N_+$, and
    $\balpha\in\N_0^D$ with
    $\|\balpha\|_1\le k$, there is
    \begin{align*}
        \phi_{{\rm mon},\balpha}
         \in\Fcal_{D,1}\bigl((D+6)N,4kL,4N\bigr)
    \end{align*}
    satisfying
    \begin{align*}
        \sup_{\bx \in [0,1]^D} |\phi_{{\rm mon},\balpha}(\bx)| \le 1 \qquad \text{and} \qquad \sup_{\bx\in[0,1]^D}
        \bigl|\phi_{{\rm mon},\balpha}(\bx)-\bx^{\balpha}\bigr|
         \le 3kN^{-L}.
    \end{align*}
\end{proposition}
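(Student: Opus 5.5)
The plan is to follow the sawtooth route: build an approximate square, then multiplication, then chain the products. I first construct an approximate square on $[0,1]$ using the sawtooth representation \eqref{eq:square-interpolant}. The interpolant $S_r$ satisfies $0\le S_r(x)-x^2\le 4^{-r-1}$ on $[0,1]$, since linear interpolation of $x^2$ on a grid of mesh $2^{-r}$ has error at most $2^{-2r}/4$. Composing $\tau$ $r$ times with $r\asymp L\log_2 N$ costs depth $r$, which is too much. Instead I use the width--depth tradeoff: a single layer of width $O(N)$ realizes the piecewise linear map $\tau_{j}$ with $j=\floor{\log_2 N}$ (a sawtooth with $N$ teeth needs only $O(N)$ hinges). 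Stacking $L$ such blocks gives $\tau_{jL}$ and all intermediate $\tau_{ij}$. The partial sums $\sum_{i}4^{-i}\tau_i$ inside a block are then carried along a width-$O(N)$ channel.

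To keep weights at most $4N$, I will not compute $4^{-i}\tau_i$ for $i$ inside a block from a single layer. Instead I write $S_{jL}(x)=x-\sum_{\ell=0}^{L-1}4^{-j\ell}\,g(\tau_{j\ell}(x))$ with $g(y)=\sum_{i=1}^{j}4^{-i}\tau_i(y)=y-S_j(y)$. The function $g$ is a CPwL function with $2^j\le N$ pieces on $[0,1]$, slopes bounded by $2$, and knots at $k2^{-j}$. It is realized by one hidden layer of width $O(N)$ with weights $O(N)$; the knot biases are bounded and the slope jumps are $O(1)$. The factor $4^{-j\ell}$ cannot appear directly as a weight either. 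I therefore accumulate the running sum in rescaled form $R_\ell=4^{j\ell}(x-S_{j\ell}(x))$. Then $R_{\ell+1}=4^{j}\bigl(R_\ell+g(\tau_{j\ell}(x))\bigr)$ has weights $4^j\le N^2$, which is too big. So I reverse the order: I use Horner's scheme over blocks, but the inner-to-outer order requires knowing $\tau_{j\ell}$ for large $\ell$ first.

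The cleaner fix is to accept an error of $N^{-2L}\cdot O(1)$ and propagate $x-\sum$ directly. Each step subtracts $4^{-j\ell}g(\cdot)$, and the factor $4^{-j\ell}$ is split as $\ell$ consecutive multiplications by $4^{-j}\ge N^{-2}$. These are realized by chaining a scalar channel $c_\ell=4^{-j\ell}$, or by multiplying the weight into the next layer, so every individual weight stays at most $4N$. Depth is at most $4L$ per square once constants are absorbed. The resulting error is $4^{-jL-1}\le N^{-L}$ up to the choice $2^j\ge \sqrt N$, which suffices for the claimed $N^{-L}$.

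Multiplication on $[0,1]^2$ follows by polarization with three parallel squares (width $3\cdot O(N)$), giving error at most $3N^{-L}$ and output clipped to $[0,1]$ via \cref{lem:clipping}. For $\bx^{\balpha}$ I write the monomial as a product of $k'=\|\balpha\|_1\le k$ coordinate factors and multiply sequentially. This uses at most $k$ multiplication blocks in series, each of depth $O(L)$, while $D$ identity channels carry the input (width $D+6N$ roughly). Since all factors lie in $[0,1]$ and clipping keeps intermediates in $[0,1]$, the map $(u,v)\mapsto uv$ is $1$-Lipschitz in each argument there. Errors therefore add, giving total error at most $3kN^{-L}$ and $|\phi|\le1$. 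The main obstacle is the first step: realizing the depth-efficient square with bounded weights and error $N^{-L}$ within depth about $4L$. The bookkeeping of the geometric scaling factors without large weights is where I expect the constants $4$ and $D+6$ to be tight.
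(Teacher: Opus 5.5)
Your construction is the paper's: the grouped identity $S_{qL}(x)=x-\sum_{\ell=0}^{L-1}4^{-\ell q}\,g(\tau_{\ell q}(x))$ with $q=\lfloor\log_2N\rfloor$ and $g=\mathrm{id}-S_q$, realized block by block, followed by polarization with three squares, clipping to $[0,1]$, and sequential products with $D$ carried coordinates. Your error analysis also matches it: $4^{-qL}\le N^{-L}$ because $4^q\ge N$, at most $3N^{-L}$ per product, and the errors add because $(u,v)\mapsto uv$ is $1$-Lipschitz in each argument on $[0,1]^2$.

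Two points need correcting. First, the detour around $4^{-\ell q}$ rests on a misconception. The class only bounds parameter magnitudes from above, so $4^{-\ell q}\le1$ is an admissible weight. The update $a_{\ell+1}=a_\ell-4^{-\ell q}\{y_\ell-S_q(y_\ell)\}$, $y_{\ell+1}=\tau_q(y_\ell)$, is therefore realized directly, which is your final option of folding the factor into the next layer. The rescaled sums $R_\ell$ and the Horner discussion are unnecessary. The alternative of chaining a scalar channel $c_\ell$ would not work anyway, since multiplying a carried value by a computed one is not affine.

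Second, your count of ``about $4L$'' depth per square does not yield the stated class. For $\|\balpha\|_1=k$ you need $k-1$ multiplication modules in series, each at least one clip layer deeper than the square. With a square of depth $4L$ the chain has depth at least $(k-1)(4L+1)$, which exceeds $4kL$ once $k>4L+1$.

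The fix is to count your own construction exactly. In block $\ell$, one hidden layer computes the hinges $\sigma(y_\ell-i2^{-q})$, $0\le i<2^q$, together with $\sigma(a_\ell)$. Both $\tau_q(y_\ell)$ and $y_\ell-S_q(y_\ell)$ are affine in these units, so they share one hinge basis, as in \cref{lem:linear-spline} with $d'=2$. A second layer then passes the nonnegative state $(a_{\ell+1},y_{\ell+1})$ through ReLU. The slope jumps of $\tau_q$ are at most $2^{q+1}\le2N$, so this gives $\phi_{\rm sq}\in\Fcal_{1,1}(2N,2L,4N)$. Hence each multiplication module has width $6N$ and depth $2L+1$ after the clip. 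The total depth is $(r-1)(2L+2)\le4kL$, and the width is $6N+D\le(D+6)N$, as required.
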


\subsubsection{Localization}
\label{sec:localization}

\begin{figure}[tp]
    \centering
    \begin{tikzpicture}[font=\small]
    \definecolor{figblue}{RGB}{0,90,170}
    \definecolor{figbrown}{RGB}{185,75,0}
    \definecolor{figgold}{RGB}{170,125,0}
    \definecolor{figpurple}{RGB}{110,60,140}
    \tikzset{lab/.style={font=\scriptsize}}
    \def\dl{.05}

    \begin{scope}[x=4.4cm, y=4.4cm]
        \node[anchor=west] at (-.14,1.08) {(a) the staircase $S_{K,\delta}$};
        \foreach \k in {1,2,3} {
            \fill[black!12] ({\k/4-\dl},0) rectangle ({\k/4},.86);
            \draw[black!40, densely dashed] ({\k/4},0) -- ({\k/4},.86);
        }
        \draw[decorate, decoration={brace, amplitude=3pt}, black!60]
            ({.25-\dl},.87) -- (.25,.87) node[midway, above=3pt, lab] {$\delta$};
        \draw[->, black!55] (0,0) -- (1.08,0) node[anchor=west] {$x$};
        \draw[->, black!55] (0,0) -- (0,.94);
        \foreach \k/\lb in {0/0, 1/{1/K}, 2/{2/K}, 3/{3/K}, 4/1} {
            \node[lab, anchor=north] at ({\k/4},0) {$\lb$};
        }
        \foreach \k in {1,2,3} {
            \draw[black!55] (0,{\k/4}) -- (-.02,{\k/4});
            \node[lab, anchor=east] at (-.02,{\k/4}) {$\k/K$};
        }
        \draw[figblue, very thick]
            (0,0) -- ({.25-\dl},0) -- (.25,.25) -- ({.5-\dl},.25)
            -- (.5,.5) -- ({.75-\dl},.5) -- (.75,.75) -- (1,.75);
    \end{scope}

    \begin{scope}[shift={(6.4cm,0)}, x=4.4cm, y=4.4cm]
        \node[anchor=west] at (-.05,1.08) {(b) the trifling region $\Omega_{K,\delta}$};
        \fill[figblue!15] (.25,.5) rectangle ({.5-\dl},{.75-\dl});
        \foreach \k in {1,2,3} {
            \fill[black!12] ({\k/4-\dl},0) rectangle ({\k/4},1);
            \fill[black!12] (0,{\k/4-\dl}) rectangle (1,{\k/4});
        }
        \foreach \k in {1,2,3} {
            \draw[black!40, densely dashed] ({\k/4},0) -- ({\k/4},1);
            \draw[black!40, densely dashed] (0,{\k/4}) -- (1,{\k/4});
        }
        \draw[black!50] (0,0) rectangle (1,1);
        \draw[figblue!80] (.25,.5) rectangle (.5,.75);
        \node[lab, figblue!80!black] at ({.375-\dl/2},{.625-\dl/2}) {$Q_{\bbeta}$};
        \fill[figblue] (.25,.5) circle (1.6pt);
        \node[lab, anchor=north east, inner sep=1.5pt, fill=white, rounded corners=1pt]
            at (.245,.495) {$\bbeta/K$};
        \node[lab, anchor=north] at (.875,0) {$x_1$};
        \node[lab, anchor=east] at (0,.875) {$x_2$};
    \end{scope}
\end{tikzpicture}
    \caption{Localization for $K = 4$. (a) The staircase $S_{K,\delta}$
        equals $k/K$ between the strips (gray) and is continuous on them. (b) The
        strips form $\Omega_{K,\delta}$ for $D = 2$, and
        $\Lambda = \bbeta/K$ on the blue part of the cell $Q_{\bbeta}$.}
    \label{fig:localization-strips}
\end{figure}

The Taylor network in \cref{fig:main-network} requires computing the lower corner $\bbeta/K$ of the cell $Q_{\bbeta}$ containing $\bx$. This map is piecewise constant, with discontinuities at the cell boundaries.
\Cref{prop:pw-linear} realizes it coordinatewise, but only away from thin strips before the cell boundaries, as the staircase $S_{K,\delta}$ in \cref{fig:localization-strips}(a).
The network in \cref{prop:pw-linear} supports $K=N^2L$ linear pieces with $O(N)$ width and $O(L)$ depth. For $D \ge 2$, this suffices for the grid $K \asymp (N^2L^2\log N)^{1/D}$ from \cref{sec:taylor-network-review}. For $D = 1$, it does not, but the repetitive staircase pattern can be implemented efficiently using two nested interpolants.
The next result follows the construction of \citet[Proposition~3.1]{shen2022optimal}, with explicit tracking of parameter magnitudes (see \cref{sec:general-network-tools}).

\begin{proposition}[Localization]
    \label{prop:localization}
    Fix $r\in\N_+$. Let $N\ge2$, $L\in\N_+$, and
    $K=K_1K_2$ with $K_1,K_2\in\{1,\dots,N^2L\}$.
    If $K^{-r}\le\delta\le1/(3K)$, there is
    $S_{K,\delta}\in\wt\Fcal_{1,1}(N,L,N)$ such that
    \begin{align*}
        S_{K,\delta}(x)=\frac{k}{K},
        \qquad
        x\in\Bigl[\frac{k}{K},\frac{k+1}{K}
            -\delta\ind_{k\le K-2}\Bigr],
        \quad k=0,\dots,K-1.
    \end{align*}
    The constants hidden by $\wt\Fcal$ depend only on $r$.
\end{proposition}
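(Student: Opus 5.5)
The plan is to build $S_{K,\delta}$ as the composition of two staircases, one per factor of $K=K_1K_2$, each realized by \cref{prop:pw-linear}. Write $x\in[0,1)$ as $x=k_1/K_1+u$ with $u\in[0,1/K_1)$. The coarse index $k_1=\floor{K_1x}$ comes from a CPwL staircase $g_1$ with $K_1$ steps. Given $k_1$, the residual $u=x-g_1(x)$ is computed exactly by a linear layer, and a fine staircase $g_2$ in the variable $K_1u\in[0,1)$ with $K_2$ steps returns $k_2/K$. The output is $S_{K,\delta}(x)=g_1(x)+g_2(x-g_1(x))$. Since $k=k_1K_2+k_2$, this equals $k/K$ whenever both staircases are in their flat regions.

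\textbf{Step 1 (coarse staircase).} Let $g_1$ be the CPwL function with knots at $j/K_1-\delta$ and $j/K_1$ for $j=1,\dots,K_1-1$. It equals $j/K_1$ on $[j/K_1,(j+1)/K_1-\delta]$, ramps linearly on each strip of width $\delta$, and is extended by a constant. This uses $2K_1\le 2N^2L$ knots. Rescaling the input by a factor $K^{r}\le (N^2L)^{2r}$ makes the knot spacing at least $1$ and keeps all knots within $(NL)^{p}$ with $p=O(r)$. Alternatively, I keep the scale and use the polynomial slack in \cref{prop:pw-linear} directly: spacing $\ge\delta\ge K^{-r}\ge (NL)^{-4r}$ and values at most $1$. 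Splitting $2K_1$ knots into two networks of $\le N^2L$ knots each (or doubling the width) handles the factor 2. So $g_1\in\wt\Fcal_{1,1}(N,L,N)$ with constants depending only on $r$.

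\textbf{Step 2 (fine staircase on the residual).} For $x$ outside the coarse strips, $v:=K_1(x-g_1(x))\in[0,1-K_1\delta]$. Let $g_2$ be the CPwL staircase in $v$ with values $k_2/K$ on $[k_2/K_2,(k_2+1)/K_2-K_1\delta]$ and ramps of width $K_1\delta$ in between. Its knot spacing is $K_1\delta\ge K^{-r}$, and it has $2K_2\le2N^2L$ knots, so \cref{prop:pw-linear} applies again. The multiplication by $K_1\le N^2L$ has to be split into $O(1)$ layers with weights $\le N$, using $K_1\le N^3L$. For large $L$ I would instead fold the factor $K_1$ into the knot positions, since \cref{prop:pw-linear} allows knots at arbitrary polynomial scale; this avoids large weights entirely. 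If $x$ lies in the last fine strip of a coarse cell, then $x$ lies in the coarse strip by construction, because the fine strip $[(k_2+1)/K-\delta,(k_2+1)/K)$ with $k_2=K_2-1$ coincides with the coarse strip before $(k_1+1)/K_1$. In every other case $g_2$ is flat. Hence the flat set of the composite is exactly the union of the intervals $[k/K,(k+1)/K-\delta]$. For $k=K-1$ the constant extension of both interpolants covers $[(K-1)/K,1]$, which matches the indicator $\ind_{k\le K-2}$.

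\textbf{Step 3 (assembly and the main obstacle).} The final network runs $g_1$, carries $x$ forward through identity channels $\sigma(x)$ (valid since $x\ge0$), forms $x-g_1(x)$, applies $g_2$, and adds $g_1$, also carried forward. This gives width $O(N)$, depth $O(L)$, and weights $O(N)$. The main obstacle is bookkeeping rather than ideas: I must verify that the strip condition $\delta\le1/(3K)$ keeps every ramp of $g_2$ inside a single fine cell, that residual values on coarse strips never produce the wrong flat value on the claimed intervals, and that all scales fit the $(NL)^p$ window with $p$ depending only on $r$, so that the hidden constants depend only on $r$.
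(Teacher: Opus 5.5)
Your proposal is correct and follows the paper's proof. The paper also realizes $S_{K_1,\delta}$ and $S_{K_2,K_1\delta}$ via \cref{prop:pw-linear} (applied with $2N,2L$ to absorb the factor two in the knot count), nests them as $S_{K_1,\delta}(x)+K_1^{-1}S_{K_2,K_1\delta}\bigl(K_1(x-S_{K_1,\delta}(x))\bigr)$, and checks the flat values by writing $k=k_1K_2+k_2$, as you do. One small correction: multiplying by $K_1\le N^2L$ with weights at most $N$ takes $\lceil\log_N K_1\rceil\le 2+\log_N L$ layers by \cref{lem:amplification}, which is $O(L)$ rather than $O(1)$. That still fits the depth budget, and your alternative of folding $K_1$ into the knot positions also works.
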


The proof is given in \cref{sec:localization-proof}.
The multivariate localization map is $\Lambda(\bx) = (S_{K,\delta}(x_1), \dots, S_{K,\delta}(x_D))$. The resulting module is exact on most of $[0, 1]^D$, but fails on the \emph{trifling region}
\begin{align*}
    \Omega_{K,\delta}
    &= \bigcup_{j=1}^D\Bigl\{\bx \in [0,1]^D\colon
            x_j \in \bigcup_{k=1}^{K-1}\Bigl(\frac kK-\delta, \frac kK\Bigr)\Bigr\},
\end{align*}
shown in gray in \cref{fig:localization-strips}(b).

\subsubsection{Median Smoothing}
\label{sec:median-smoothing}

The localization map feeds into the monomial network and coefficient fitting modules of the overall construction. The coefficient-retrieval module is highly irregular away from grid corners and can amplify localization errors.
For $L_p$-approximation bounds, this is easily fixed by clipping the output of the final network, and choosing $\delta$ small enough that the trifling region has negligible measure.
Sup-norm approximation requires an additional smoothing step.

\citet[Lemma~3.4 and Theorem~2.1]{Lu2021} and later improvements \citep{shen2022optimal, ou2024three, nagler2026optimal} all use a \emph{median smoothing} technique, which replaces the output of a network by the median of several copies.
The basic principle is simple: if two of three values are close to a target, then the median is also close to the target.
For $D > 1$, the above-mentioned works apply this idea iteratively, smoothing one coordinate at a time. This multiplies the width by the exponential factor $3^D$. Although this factor does not affect the asymptotic rate for fixed $D$, it makes the construction costly in high dimensions.
This can be circumvented by taking the median over $O(D)$ sub-networks.
\citet{siegel2023optimal} constructs the networks over grids with different
prime bases and \citet{hong2024bridging} use grids shifted along the diagonal. We instead evaluate a single network at points shifted along the diagonal.

\begin{proposition}[Median smoothing]
    \label{prop:median}
    Let $\eps > 0$, $K \in \N_+$, and $\delta \in (0, 1/((2D+1)K)]$. Assume
    $f \in C([0,1]^D)$ and suppose there is a network $\phi \in \Fcal_{D,1}(N, L, B)$, with  $N \ge 2D+2$ and $B \ge D+1$, satisfying
    \begin{align*}
         \sup_{\bx \in [0,1]^D \setminus \Omega_{K,\delta}}|\phi(\bx) - f(\bx)| \le \eps.
    \end{align*}
    Then there is
    \begin{align*}
        \wt \phi
        &\in \Fcal_{D,1}\bigl((2D+1)N,
            L + 2\ceil{\log_2(2D+1)} + 8, B\bigr)
    \end{align*}
    such that
    \begin{align*}
       \sup_{\bx \in [0,1]^D} |\wt \phi(\bx) - f(\bx)| \le \eps + \omega_f(D\delta).
    \end{align*}
\end{proposition}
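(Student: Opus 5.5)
We evaluate $\phi$ at $2D+1$ copies of the input shifted along the diagonal and take the median of the outputs. Concretely, for $k=0,\dots,2D$ set $\bx_k=\bx-k\delta\bm{1}$, where $\bm{1}=(1,\dots,1)^\top$. Shifting by $-k\delta$ may leave $[0,1]^D$, so we instead use $\bx_k=\clip_{[0,1]^D}(\bx-k\delta\bm 1)$. Each coordinate clip is $\sigma(t)-\sigma(t-1)$ and costs one or two extra layers with parameters bounded by $D+1\le B$. We then evaluate $2D+1$ parallel copies of $\phi$ at the points $\bx_k$, which gives width $(2D+1)N$ and depth $L$ plus $O(1)$. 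Finally, we feed the outputs into a median network.

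The combinatorial step is to show that, for every $\bx\in[0,1]^D$, at least $D+1$ of the $2D+1$ points $\bx_k$ lie outside $\Omega_{K,\delta}$. Fix a coordinate $j$. The bad set for that coordinate is a union of open intervals $(i/K-\delta,i/K)$ of length $\delta$, separated by distance $1/K-\delta\ge 2D\delta$. The values $x_j-k\delta$, $k=0,\dots,2D$, are spaced exactly $\delta$ apart and span a total length of $2D\delta<1/K$. Hence at most one of them falls into an open interval of length $\delta$, and all of them can meet at most one such interval. So each coordinate spoils at most one index $k$, and the $D$ coordinates together spoil at most $D$ indices, leaving at least $D+1$ good indices.

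Clipping needs separate care. If $x_j-k\delta<0$, the clipped value is $0$, which is not in $\Omega$. For $x_j$ near $1$ no clipping occurs, because the shifts only move points downward. For every good index $k$ we have $|\phi(\bx_k)-f(\bx_k)|\le\eps$ and $\|\bx_k-\bx\|_\infty\le 2D\delta$. This yields $|\phi(\bx_k)-f(\bx)|\le\eps+\omega_f(2D\delta)$.

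To reach the claimed $\omega_f(D\delta)$, we center the shifts instead, taking $k\in\{-D,\dots,D\}$ and clipping on both sides. Then $\|\bx_k-\bx\|_\infty\le D\delta$, and the counting argument above is unchanged. Since a majority of the values lie in $[f(\bx)-\eps-\omega_f(D\delta),\,f(\bx)+\eps+\omega_f(D\delta)]$, the median lies in this interval as well.

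The main obstacle is building the median network of $2D+1$ numbers within the stated depth $2\ceil{\log_2(2D+1)}+8$ and width $(2D+1)N$, with weights bounded by $1$. Our first attempt is a sorting-free formula: $\mathrm{med}=\max_{|S|=D+1}\min_{i\in S}z_i$. This formula has exponentially many terms, however, so it is too wide. Instead, we use an odd–even or bitonic selection network built from $\max/\min$ gates, with $\max(a,b)=a+\sigma(b-a)$ and $\min(a,b)=a-\sigma(a-b)$. Each gate uses one ReLU layer and pass-through units $\sigma(a)-\sigma(-a)$, with width proportional to the number of values. A pairwise tournament structure computes rank statistics in $O(\log D)$ rounds, but a full sorting network needs $O(\log^2 D)$ depth.

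To achieve logarithmic depth, we follow the known trick of iterated median-of-three together with a selection tree. Alternatively, we allow the constant $2\ceil{\log_2(2D+1)}$ to count max/min layers over a balanced binary tree applied to $2D+1$ order-statistic candidates. We expect matching the exact constants to require bookkeeping that adjusts this piece. The hypotheses $N\ge2D+2$ and $B\ge D+1$ are consumed by the input shifting and clipping layers.
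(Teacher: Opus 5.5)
Your smoothing argument is correct and is the paper's. You use centered diagonal shifts $\bz_i=\clip_{[0,1]^D}(\bx+i\delta\mathbf 1_D)$, $i=-D,\dots,D$. In each coordinate the unclipped values are $\delta$ apart and span $2D\delta\le 1/K-\delta$, so at most one lies in a strip, and the clipped values $0$ and $1$ lie in none. This leaves at least $D+1$ clean points within distance $D\delta$ of $\bx$, and the majority argument confines the median.

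The genuine gap is the step you flag yourself. You never exhibit a median network that fits the stated class: total depth $L+2\ceil{\log_2(2D+1)}+8$ after adding the shift layer and the connecting layers, width at most $(2D+1)N$, and weights at most $B$. None of your candidate routes achieves this.
\begin{itemize}
\item Batcher or bitonic networks have depth of order $\log^2 D$, which exceeds this budget for large $D$.
\item Logarithmic-depth comparator networks of AKS type have constants far larger than $2$.
\item A tournament yields only the maximum or minimum, not the middle rank.
\item Iterated median-of-three returns only an approximate median. That is useless here: up to $D$ of the $2D+1$ values can be corrupted, so only the exact middle rank is guaranteed to be clean.
\end{itemize}
So the depth bound in the statement is not established by your proposal.

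The missing idea is a sorting-free variational formula for sums of top order statistics. For $\by\in\R^{2D+1}$ with $y_{(1)}\ge\dots\ge y_{(2D+1)}$,
\[
\sum_{\ell=1}^k y_{(\ell)}=\min_{1\le i\le 2D+1}\Bigl(ky_i+\sum_{j=1}^{2D+1}\sigma(y_j-y_i)\Bigr).
\]
This holds because $kt+\sum_j\sigma(y_j-t)\ge\sum_{j\in T}y_j$ for every $t\in\R$ and every index set with $|T|=k$, with equality at $t=y_{(k)}$. The median is this quantity for $k=D+1$ minus the same quantity for $k=D$.

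The resulting network is as follows.
\begin{itemize}
\item One hidden layer with units $\sigma(y_j-y_i)$, $i\ne j$, and $\sigma(\pm y_i)$ computes all candidates $c_i^{(k)}$. It has width $(2D+1)(2D+2)$ and weights up to $D+1$.
\item Two parallel binary max-trees of depth $2\ceil{\log_2(2D+1)}+2$ evaluate the two minima. With its internal identity layers, the median network has depth $2\ceil{\log_2(2D+1)}+5$.
\item Adding the one-layer shift module (weights in $[-1,1]$, width $2D(2D+1)$), the $2D+1$ parallel copies of $\phi$, and two connecting identity layers gives exactly $L+2\ceil{\log_2(2D+1)}+8$.
\end{itemize}
This median layer is also where $N\ge 2D+2$ and $B\ge D+1$ are actually used. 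Your closing remark attributing these hypotheses to the shifting and clipping layers is therefore mistaken. The weight bound $1$ you aim for in the median network is stricter than needed.
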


\begin{figure}[tp]
    \centering
    \begin{tikzpicture}[font=\small]
    \definecolor{figblue}{RGB}{0,90,170}
    \definecolor{figbrown}{RGB}{185,75,0}
    \definecolor{figgold}{RGB}{170,125,0}
    \definecolor{figpurple}{RGB}{110,60,140}
    \tikzset{
        good/.style={circle, fill=figblue, inner sep=0pt, minimum size=4.4pt},
        bad/.style={circle, draw=figbrown, fill=white, thick, inner sep=0pt,
            minimum size=4.4pt},
        lab/.style={font=\scriptsize},
    }

    \begin{scope}[x=.7cm, y=.7cm]
        \node[anchor=west] at (1.6,8.1) {(a) the points $\bz_s$};
        \fill[black!12] (5,0) rectangle (6,7);
        \fill[black!12] (2,5) rectangle (9,6);
        \draw[black!40, densely dashed] (6,0) -- (6,7);
        \draw[black!40, densely dashed] (2,6) -- (9,6);
        \draw[black!50] (2,0) rectangle (9,7);
        \node[lab, black!60, anchor=south] at (5.5,7) {$\Omega_{K,\delta}$};
        \node[lab, black!60, anchor=west] at (9,5.5) {$\Omega_{K,\delta}$};
        \draw[decorate, decoration={brace, mirror, amplitude=3pt}, black!60]
            (5,-.1) -- (6,-.1) node[midway, below=3pt, lab] {$\delta$};
        \foreach \s in {-2,...,2} {
            \draw[black!55] ({5.5+\s},0) -- ({5.5+\s},.25);
            \draw[black!55] (2,{3.5+\s}) -- (2.25,{3.5+\s});
        }
        \draw[figbrown, thick] (5.5,0) -- (5.5,.25);
        \draw[figbrown, thick] (2,5.5) -- (2.25,5.5);
        \draw[black!35] (2,0) -- (9,7);
        \foreach \s/\pt/\st/\lb in {
            -2/{3.5,1.5}/good/{\bz_{-2}},
            -1/{4.5,2.5}/good/{\bz_{-1}},
             0/{5.5,3.5}/bad/{\bx},
             1/{6.5,4.5}/good/{\bz_1},
             2/{7.5,5.5}/bad/{\bz_2}} {
            \node[\st] at (\pt) {};
            \node[lab, anchor=north west, inner sep=1.5pt] at (\pt) {$\lb$};
        }
    \end{scope}

    \begin{scope}[shift={(9.6cm,0)}, x=.7cm, y=.7cm]
        \node[anchor=west] at (-2.6,8.1) {(b) the values $\phi(\bz_s)$};
        \fill[figblue!12] (-.45,1) rectangle (.45,4);
        \draw[->, black!55] (0,0) -- (0,7.2);
        \draw[black!60] (-.45,2.5) -- (.45,2.5);
        \node[lab, anchor=east] at (-.8,2.5) {$f(\bx)$};
        \draw[decorate, decoration={brace, amplitude=3pt}, black!60]
            (-.6,2.55) -- (-.6,4)
            node[midway, left=3pt, lab] {$\eps + \omega_f(2\delta)$};
        \foreach \v/\st/\lb in {
            1.5/good/{\phi(\bz_{-1})},
            2.1/good/{\phi(\bz_{-2})},
            5.3/bad/{\phi(\bx)},
            6.4/bad/{\phi(\bz_2)}} {
            \node[\st] at (0,\v) {};
            \node[lab, anchor=west] at (.3,\v) {$\lb$};
        }
        \node[good] at (0,3.5) {};
        \draw[black, thick] (0,3.5) circle (4.2pt);
        \node[lab, anchor=west] at (.3,3.5) {$\phi(\bz_1) = \wt\phi(\bx)$};
    \end{scope}
\end{tikzpicture}
    \caption{Median smoothing for $D = 2$; filled markers belong to points
        outside $\Omega_{K,\delta}$. (a) The ticks mark the coordinates of the
        five diagonal points, and each strip contains at most one of them.
        (b) The three filled values lie within $\eps + \omega_f(2\delta)$ of
        $f(\bx)$, and so does the median $\wt\phi(\bx)$ of all five values.}
    \label{fig:median-smoothing}
\end{figure}

The idea is illustrated in \cref{fig:median-smoothing}.
We take the median of $\phi$ at $2D+1$ points shifted
along the diagonal,
    \begin{align}
        \label{eq:diagonal-shifts}
        \bz_i
        &= \clip_{[0,1]^D}(\bx + i\delta\mathbf 1_D),
        \qquad i = -D, \dots, D,
    \end{align}
where $\mathbf 1_D = (1, \dots, 1)^\top \in \R^D$.
In each coordinate, the unclipped values $x_j + i\delta$ are
$\delta$ apart and span $2D\delta \le 1/K - \delta$, while the strips are open intervals of length $\delta$ that are $1/K-\delta$ apart, so at most one of
the values lies in a strip.
As a consequence, at most $D$ of the $2D+1$ points lie in $\Omega_{K,\delta}$, and the median lies between two clean values. These are within $\eps$ of $f$ at points at distance at most $D\delta$ from $\bx$, so the median is within $\eps + \omega_f(D\delta)$ of $f(\bx)$.

The median of $2D + 1$ numbers can be computed exactly by ReLU networks. Sorting networks compute the median by sorting all entries, with width $O(D)$ and depth $O(\log^2 D)$ \citep{batcher1968sorting}, as used by \citet{siegel2023optimal}. \citet{hong2024bridging} find it through $D$ sequential correction steps. If an approximation suffices, \citet{dutta2026median} show that constant depth and linear width are enough. Our construction is more direct.
For $\by \in \R^{2D+1}$ with entries $y_{(1)} \ge \dots \ge y_{(2D+1)}$ in
decreasing order, the median $y_{(D+1)}$ is the difference of the sums of the
$D+1$ and the $D$ largest entries, and either sum can be computed without
sorting:
\begin{align}
    \label{eq:top-k-sum}
    \sum_{\ell=1}^{k} y_{(\ell)}
    &= \min_{1 \le i \le 2D+1}
        \Bigl(ky_i + \sum_{j=1}^{2D+1} \sigma(y_j - y_i)\Bigr), \quad  k = 1, \dots, 2D+1.
\end{align}
Indeed, for every $t \in \R$ and every set $T$ of $k$ indices,
$kt + \sum_j \sigma(y_j - t) \ge \sum_{j \in T} y_j$, with equality if $T$
indexes the $k$ largest entries and $t = y_{(k)}$. The terms $\sigma(y_j - y_i)$ form
one hidden layer, and a max network computes the minimum.
The full proof is given in \cref{sec:median-proof}.

\subsection{Completing the Proof}

We can now outline how these propositions yield
\cref{thm:main-approx}. With $\zeta$ as in \cref{prop:taylor-coefficient-retrieval}, choose
\begin{align*}
    q&=\floor{\log_\zeta N},
    &
    K&\asymp (N^2L^2q)^{1/D}.
\end{align*}
The grid then contains $K^D\asymp N^2L^2\log N$ cells, which is precisely the
number of coefficient vectors accommodated by
\cref{prop:taylor-coefficient-retrieval}. Combining its output with the
monomial networks of \cref{lem:poly-approx}, the error away from cell
boundaries is bounded by
\begin{align*}
    \underbrace{K^{-s}}_{\text{Taylor remainder}}
    +\sum_{\|\balpha\|_1\le m}
      \;  \underbrace{K^{-\|\balpha\|_1}}_{\text{monomial size}} \; \;
      \underbrace{K^{-(s-\|\balpha\|_1)}}_{\text{retrieval error}} \; \;
    +\underbrace{N^{-cL}}_{\text{polynomial evaluation}}
    \lesssim K^{-s},
\end{align*}
where the fixed constant $c=c(s,D)$ is chosen sufficiently large.

The localization network of \cref{prop:localization} returns the grid point $\bbeta/K$ but makes errors near cell boundaries. Median smoothing (\cref{prop:median}) removes these exceptional strips and yields a uniform error
bound on the whole cube. The resulting network belongs to
$\wt\Fcal_{D,1}(N,L,N)$ and satisfies the required bound
\begin{align*}
    \|f-\phi\|_\infty
    &\lesssim K^{-s}
    \lesssim (N^2L^2\log N)^{-s/D}.
\end{align*}
Finally, we use magnitude-to-width transfer $\wt\Fcal_{D,1}(N, L, N) \subseteq \wt \Fcal_{D,1}(N, L, 1)$ to bound the parameters by 1 while preserving the width and depth orders (\cref{lem:magnitude-width}).
This completes the proof of \cref{thm:main-approx}.

\section{Discussion}
\label{sec:discussion}

This article establishes $\epsilon_{N,L} = (N^2 L^2 \log N)^{-s/D}$ as the exact approximation rate of bounded-weight ReLU networks for H\"older balls $\Hcal_1^s([0,1]^D)$, for every $s>0$ and $D\in\N_+$. The explicit construction allows for flexible allocation of capacity between the width and depth of the network. The key is that Taylor coefficients of smooth functions provide additional compressibility due to bounds on higher-order differences. This allows them to be stored at $O(1)$ cost per coefficient and decoded efficiently using a recursive application of the discrete Taylor--Newton formula. As a consequence, empirical risk minimizers over suitably sized network classes attain the minimax rate $n^{-2s/(2s + D)}$ for nonparametric regression under the square loss without redundant polylogarithmic factors.

This work completes a long line of research seeking the optimal approximation rates of ReLU networks for H\"older functions. Several interesting questions remain open, however.
First, the lower bound in \cref{thm:lower-bound} concerns networks whose parameters grow at most polynomially in the width.
The upper bound on the VC dimension of ReLU networks with potentially unbounded weights \citep{bartlett2019nearly} combined with a bump function argument \citep{shen2022optimal,siegel2023optimal}
gives
\begin{align*}
    \sup_{f \in \Hcal_1^s([0, 1]^D)}
    \inf_{\phi \in \Fcal_{D,1}(N,L,\infty)}\|f - \phi\|_\infty
    &\gtrsim \min\{N^2L^2\log(NL),\,N^3L^2\}^{-s/D}.
\end{align*}
This matches $\epsilon_{N,L}$ whenever $N$ is bounded or $L\le \mathrm{poly}(N)$. In this regime, the rate of \cref{thm:main-approx} is optimal even among networks with arbitrary parameters, and \cref{thm:main-approx} attains it with bounded parameters. Whether large parameters can improve the rate when $\log L\gg\log N\gg 1$ remains open. This is exactly the regime where the best-known upper and lower bounds on the VC dimension of ReLU networks diverge \citep{bartlett2019nearly}.
The lower bound is of order $N^2L^2\log N$, and tightening the upper bound to match it would show that $\epsilon_{N,L}$ is optimal also for unbounded weights.

Second, we have considered H\"older-smooth targets without further structure. To explain the success of deep learning in practice, it is important to understand how additional structure in the target function can be exploited. \citet{schmidt2020nonparametric}, \citet{kohler2021rate}, and \citet{danhofer2025position} argue that networks are particularly well-suited for compositional functions, which can be expressed as a composition of low-dimensional functions.
Composing one network from \cref{thm:main-approx} for each low-dimensional component and controlling the accumulated error with the error propagation principle \citep[Lemma~4.3]{nagler2026optimal} should remove the redundant logarithmic factors in this setting as well. Another route is to consider functions defined on low-dimensional sets, such as manifolds, where optimal approximation rates depend on the intrinsic rather than the ambient dimension. \citet{nagler2026optimal} show that logarithmic factors are unavoidable for approximation uniformly over all sets of Minkowski dimension $d < D$, but it remains an open question for which classes of low-dimensional sets the higher-order point-fitting construction can be adapted.

More broadly, our construction shows that structure in the target function can pay off multiple times in approximation efficiency. Smoothness helps both to reduce the Taylor remainder and to compress the local Taylor coefficients. The underlying principle, that the information stored in a network inherits the regularity of the target and can be compressed accordingly, may prove useful beyond the setting considered here.

\section*{AI Use Statement}

The main results and proof strategies were developed, essentially in the form presented here, in summer
2025 using pen, paper, and occasional discussions with the ChatGPT web interface.
The final write-up started around May 2026 and was assisted by various models through Codex and Claude Code. In particular, these models were used for checking, simplifying, and restructuring the initial proofs and auxiliary results, generating expository figures, and polishing the text. The author takes full responsibility for the final content of the paper.

\ifarxiv
\section*{Acknowledgments}

The author thanks Sophie Langer for helpful comments and edits on an earlier version of this manuscript.
\else
\acks{The author thanks Sophie Langer for helpful comments and edits on an earlier version of this manuscript. The author received no specific funding for this work and declares no competing interests.}
\fi

\newpage

\appendix

\section{Network Calculus}
\label{sec:general-network-tools}

This appendix collects elementary network constructions used
throughout the proofs. Many of them are well-established, but we include them for completeness.
The first lemma allows us to combine networks.

\begin{lemma}[Network calculus]
    \label{lem:network-calculus}
    Let $B_1, B_2 \ge 1$, $B=\max\{B_1,B_2\}$, and
    \begin{align*}
        \phi_1\in\Fcal_{d_1,d_1'}(N_1,L_1,B_1), \qquad \phi_2\in\Fcal_{d_2,d_2'}(N_2,L_2,B_2).
    \end{align*}
    \begin{enumerate}[(i)]
        \item \emph{Parallelization.} If $L_1=L_2=L$ and
              $P_i\colon\R^d\to\R^{d_i}$ are coordinate projections, then
              \begin{align*}
                  \bx\mapsto\bigl(\phi_1(P_1\bx),\phi_2(P_2\bx)\bigr)
                  \in\Fcal_{d,d_1'+d_2'}(N_1+N_2,L,B).
              \end{align*}
        \item \emph{Composition.} If $d_2=d_1'$, then
              \begin{align*}
                  \phi_2\circ\phi_1
                  \in\Fcal_{d_1,d_2'}\bigl(\max\{N_1,N_2,2d_1'\},L_1+L_2+1,B\bigr).
              \end{align*}
        \item \emph{Padding.} For every $k\in\N_+$,
              \begin{align*}
                  \phi_1\in\Fcal_{d_1,d_1'}\bigl(\max\{N_1,2d_1'\},L_1+k,B_1\bigr).
              \end{align*}
        \item \emph{Affine combination.} If $L_1=L_2=L$, $d_1=d_2$, and
              $d_1'=d_2'$, then, for $a_1,a_2,c\in\R$,
              \begin{align*}
                  a_1\phi_1+a_2\phi_2+c
                  \in\Fcal_{d_1,d_1'}\bigl(\max\{N_1+N_2,4d_1'\},L+1,
                  \max\{B,|a_1|,|a_2|,|c|\}\bigr).
              \end{align*}
    \end{enumerate}
\end{lemma}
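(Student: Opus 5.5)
We prove the four items of \cref{lem:network-calculus} separately. In each case we write down the weight matrices and bias vectors of the combined network explicitly in the form \eqref{eq:FNN}, then read off the width, the depth and the largest parameter magnitude. The identities used throughout are $t=\sigma(t)-\sigma(-t)$ and, for $t\ge 0$, $\sigma(t)=t$.

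\emph{Parallelization.} Write $\phi_k(\bx)=\bv^{(k)}_L+W^{(k)}_L\sigma_{\bv^{(k)}_{L-1}}\cdots W^{(k)}_0\bx$. The combined network has first matrix $\begin{pmatrix}W^{(1)}_0P_1\\ W^{(2)}_0P_2\end{pmatrix}$. Here $P_k$ is a $0/1$ selection matrix, so every entry of $W^{(k)}_0P_k$ is either an entry of $W^{(k)}_0$ or zero. All later matrices are block diagonal, $\mathrm{diag}(W^{(1)}_i,W^{(2)}_i)$, and the biases are stacked. Hidden width adds up to at most $N_1+N_2$, depth stays $L$, and every entry is an original entry or $0$, so magnitudes stay at most $B$.

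\emph{Padding.} This is the key elementary step. Let $\psi=W_{L_1}\sigma\cdots$ denote the pre-output affine value of $\phi_1$, so $\phi_1=\bv_{L_1}+\psi$. Replace the output layer by $\by\mapsto \bigl(\begin{smallmatrix}W_{L_1}\\-W_{L_1}\end{smallmatrix}\bigr)\by$ followed by ReLU. This gives $(\sigma(\psi),\sigma(-\psi))$ in width $2d_1'$. Then add $k-1$ identity layers, each of which is exact on nonnegative inputs. Finish with the output map $(\bu,\bw)\mapsto \bu-\bw+\bv_{L_1}$. All new entries lie in $\{0,\pm1\}$ or are old entries, and $B_1\ge1$, so the magnitude bound is preserved. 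The depth increases by exactly $k$ and the width is $\max\{N_1,2d_1'\}$.

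\emph{Composition.} Apply the same $\pm$ splitting to the output of $\phi_1$. This yields one hidden layer computing $\sigma(\pm(\phi_1(\bx)-\bv^{(1)}_{L_1}))$. The bias $\bv^{(1)}_{L_1}$ is left out here because it may be absorbed: the first layer of $\phi_2$ is given weights $(W^{(2)}_0,-W^{(2)}_0)$ on the split pair and bias $W^{(2)}_0\bv^{(1)}_{L_1}+\bv^{(2)}_0$. This absorbed bias can have magnitude up to $d_1'B^2$, which would break the bound $B$. Avoiding that is the main obstacle.

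To avoid it, keep the bias in the split layer instead. That layer computes $\sigma(\pm(W^{(1)}_{L_1}\by+\bv^{(1)}_{L_1}))$, whose parameters are entries of $\phi_1$ up to sign. The next layer then uses $W^{(2)}_0$ and $-W^{(2)}_0$ with the original bias $\bv^{(2)}_0$. Every parameter is now an original parameter, a sign flip of one, or zero, so the magnitude stays at most $B$. The inserted layer contributes the single extra depth unit, giving $L_1+L_2+1$, and its width is $2d_1'$.

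\emph{Affine combination.} First parallelize $\phi_1,\phi_2$ on the shared input, via (i) with identity projections. This gives $(\phi_1,\phi_2)$ in width $N_1+N_2$ and depth $L$. Then add one layer that splits each output into $\sigma(\pm\cdot)$, which keeps the original output biases and so does not enlarge any parameter. This layer has width $4d_1'$. The final output map multiplies by $\pm a_1,\pm a_2$ and adds $c$. The depth becomes $L+1$ and the magnitude is at most $\max\{B,|a_1|,|a_2|,|c|\}$.
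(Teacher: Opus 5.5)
Your proposal is correct and follows essentially the same construction as the paper. That means block-diagonal stacking for (i), and for (ii) a split layer $\sigma(\pm(W\bz+\bv))$ that keeps the output bias of $\phi_1$ and feeds into $\phi_2$ through $(V,-V)$, so that neither the product $VW$ nor an absorbed bias appears. The only difference is cosmetic: you build (iii) and (iv) by hand, whereas the paper obtains them by applying (ii) with the identity network and with the depth-zero affine map $(\by_1,\by_2)\mapsto a_1\by_1+a_2\by_2+c$, which produces the same networks.
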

\begin{proof}
    We prove the four constructions in order.
    \begin{itemize}
        \item[(i)] Write $\phi_i$ in the form \eqref{eq:FNN} with weight
        matrices $W_l^{(i)}$ and bias vectors $\bv_l^{(i)}$. The network with
        \begin{align*}
            W_0
            &=\begin{pmatrix}
                W_0^{(1)}P_1\\
                W_0^{(2)}P_2
            \end{pmatrix},
            &
            W_l
            &=\begin{pmatrix}
                W_l^{(1)} & 0\\
                0 & W_l^{(2)}
            \end{pmatrix},
            &
            \bv_l
            &=\begin{pmatrix}
                \bv_l^{(1)}\\
                \bv_l^{(2)}
            \end{pmatrix},
        \end{align*}
        for $l=1,\dots,L$ and $l=0,\dots,L$, respectively, computes
        $(\phi_1(P_1\bx),\phi_2(P_2\bx))$. Its hidden layers have width at
        most $N_1+N_2$. All
        parameters are bounded by $B$.
        \item[(ii)] Write $\phi_1=A_1\circ h_1$, where $h_1$ consists of the
        hidden layers of $\phi_1$ and $A_1(\bz)=W\bz+\bv$ is its output layer.
        Write $\phi_2=g_2\circ A_2$, where $A_2(\bx)=V\bx+\bb_0$ is its first
        affine map and $g_2$ is the rest. Without loss of generality, we take $g_2=\mathrm{id}$ if $L_2=0$.
        Composing $A_2\circ A_1$ directly would produce the weight matrix $VW$, which amplifies parameter magnitudes.
        Instead, the identity
        $u=\sigma(u)-\sigma(-u)$ gives
        \begin{align*}
            A_2(A_1\bz)
            &=(V,-V)\begin{pmatrix}
                \sigma(W\bz+\bv)\\
                \sigma(-W\bz-\bv)
            \end{pmatrix}+\bb_0.
        \end{align*}
        Hence $\phi_2\circ\phi_1$ is the network $h_1$, followed by one hidden
        layer of width $2d_1'$ with weights $\pm W$ and biases $\pm\bv$,
        followed by $g_2$ with first weight matrix $(V,-V)$. All its
        parameters are bounded by $B$.
        \item[(iii)] We apply (ii) $k$ times with $\phi_2$ the identity map,
        a network of depth $0$ with parameter bound $1$.
        \item[(iv)] We apply (ii) to the network $(\phi_1,\phi_2)$ from (i)
        and the affine map $(\by_1,\by_2)\mapsto a_1\by_1+a_2\by_2+c$. \qedhere
    \end{itemize}
\end{proof}

For a fixed finite collection of networks in $\wt\Fcal(N,L,B)$,
fixed dimensions, and fixed coefficients, \cref{lem:network-calculus}
therefore shows that parallelization, composition, padding, and affine
combination remain in $\wt\Fcal(N,L,B)$. The hidden constants may depend on
the number of networks, the dimensions, and the coefficients, but not on
$N,L,B$. These constants are at most linear in the number of networks
and in the dimensions.

\begin{lemma}[Magnitude-to-width transfer]\label{lem:magnitude-width}
    Let $d, d', W \in \N_+$, $L \in \N$, and $0<B'\le B$. Then
    \begin{align*}
        \Fcal_{d,d'}(W, L, B)
        &\subseteq \Fcal_{d,d'}\left(\max\left\{W, 2d'\ceil{\frac{B}{(B')^2}}\right\}, 2L+1, B'\right).
    \end{align*}
\end{lemma}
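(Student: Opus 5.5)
We transform each layer separately, splitting every large weight into many small copies and paying for this with width instead of magnitude. Write $\phi$ in the form \eqref{eq:FNN} with weights $W_i$ and biases $\bv_i$ bounded by $B$. Set $k=\ceil{B/(B')^2}$. Every number $w$ with $|w|\le B$ then factors as a sum of $k$ products $w = \sum_{j=1}^k a_j b_j$ with $|a_j|,|b_j|\le B'$. For example, take $a_j=B'\operatorname{sign}(w)$ and $b_j = |w|/(kB')$; this works because $|w|/(kB')\le B/(kB')\le B'$.

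The key identity is $u=\sigma(u)-\sigma(-u)$, the same device used in the composition part of \cref{lem:network-calculus}. For each affine map $\bz\mapsto W_i\bz+\bv_i$ we insert an intermediate hidden layer. It computes $\sigma\bigl(\pm(B'\cdot\text{sign pattern})\,\cdot\,\text{small pieces}\bigr)$, and the following layer then recombines $k$ copies with weights of size at most $B'$.

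Concretely, I would aim for the factorization $W_i\bz+\bv_i = U_i\,\sigma\bigl(V_i\bz+\bc_i\bigr)-U_i\,\sigma\bigl(-V_i\bz-\bc_i\bigr)$. Here $V_i$ stacks $k$ copies of $W_i/(kB')$ together with the corresponding bias copies $\bc_i$, and $U_i=(B'I,\dots,B'I)$ consists of $k$ blocks. All entries of $V_i$, $\bc_i$ and $U_i$ are then bounded by $B'$. The intermediate layer has width $2k\cdot N_{i+1}$, where $N_{i+1}$ is the output dimension of the affine map. This is too wide for hidden-to-hidden maps, since the bound only allows $\max\{W,2d'k\}$.

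To avoid this, I would not insert a layer for every affine map. Instead, the rescaling is pushed through the network using positive homogeneity of ReLU, $\sigma(cx)=c\sigma(x)$ for $c>0$:
\begin{itemize}
\item Divide $W_0$ and $\bv_0$ by $B/B'$. The first hidden layer then computes $\sigma$ of the original preactivation scaled by $B'/B$.
\item For the later layers $i\ge1$, the weights are divided by $B/B'$, but the incoming activations are already scaled. So each layer map $W_i$ stays as it is, bounded by $B$.
\end{itemize}
This also fails, so the scaling factor has to be distributed differently.

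Distributing it per layer works: scale by $c=B'/B$ at every layer. The preactivations of layer $l$ are then scaled by $c^{l+1}$, while all weights are $\le B'$ and the biases $\bv_i c^{i+1}$ are small. At the output we must multiply by $c^{-(L+1)}$. That correction is huge, but it is applied only to $d'$ output coordinates. I would realize it by the same $k$-copy trick, repeated along the depth, using $L$ extra layers of width $2d'k$. This is where the $2L+1$ depth and the term $2d'\ceil{B/(B')^2}$ come from.

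The main obstacle is making this amplification tail work. Each extra layer must multiply the $d'$ outputs by the factor $B/B'$ using width $2d'k$ and parameters at most $B'$. This follows from $B/B' = k'\cdot B'\cdot(1/B')\cdot\ldots$, that is, summing $k$ copies each weighted by $B'$ after an inner weight $\le B'$. Since each tail layer multiplies by at most $kB'^2\ge B$, the $L+1$ amplifications fit into $L+1$ extra layers. The main network keeps width $W$ and depth $L$, so the total depth is $2L+1$. The remaining checks are bias handling and the exactness of the sign splitting.
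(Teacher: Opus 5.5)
Your final construction is the paper's: divide all weights by $r=B/B'$ and the layer-$i$ biases by $r^{i+1}$, so that positive homogeneity gives $f=r^{L+1}\widetilde f$ with all parameters bounded by $B'$, and then recover the factor $r^{L+1}$ with $L+1$ extra hidden layers of $2d'\ceil{B/(B')^2}$ sign-split, replicated units. To fix the tail bookkeeping, note that the first extra layer only copies $\pm\widetilde A_L$ (its weights are already at the bound $B'$), while each of the remaining $L+1$ affine maps sums $k$ copies with weight at most $B'$ and so multiplies by up to $kB'\ge B/B'$, not $kB'^2$; the paper absorbs the slack in a final output weight $c=B'\bigl(r/(kB')\bigr)^{L+1}\le B'$.
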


\begin{proof}
    Write $f = A_L \circ \sigma \circ A_{L-1} \circ \cdots \circ \sigma \circ A_0$
    with affine maps $A_i(\bz) = W_i\bz+\bv_i$, and set $r=B/B'\ge1$.
    Let $\widetilde f$ be the network with the same architecture, but using affine maps
    \begin{align*}
        \widetilde A_i(\bz) &= r^{-1}W_i\bz+r^{-(i+1)}\bv_i,
        \qquad i = 0,\dots,L,
    \end{align*}
    whose parameters are bounded by $B'$. Denote the hidden-layer outputs
    of $f$ and $\widetilde f$ by $\bu_i$ and $\widetilde\bu_i$, with
    $\bu_0=\widetilde\bu_0=\bx$. By homogeneity of $\sigma$,
    $\bu_i=r^i\widetilde\bu_i$ for $i=0,\dots,L$, so
    $f=r^{L+1}\widetilde f$.

    It remains to multiply the output of $\widetilde f$ by $r^{L+1}$ with
    parameters bounded by $B'$. Consider first $d' = 1$, and let
    $m=\ceil{r/B'}$, $c=B'\bigl(r/(mB')\bigr)^{L+1}$, and
    $\mathbf 1_m=(1,\dots,1)^\top\in\R^m$. We
    replace the output layer of $\widetilde f$ by $L+1$ hidden layers of
    width $2m$ and a new output layer $g$, defined by
    \begin{align*}
        \bz_1 &= \sigma\left(\begin{pmatrix}\mathbf 1_m\\ -\mathbf 1_m\end{pmatrix}
            \widetilde A_L(\widetilde\bu_L)\right),
        & J &= \begin{pmatrix}
            \mathbf 1_m\mathbf 1_m^\top & 0\\
            0 & \mathbf 1_m\mathbf 1_m^\top
        \end{pmatrix},\\
        \bz_{k+1} &= \sigma(B'J\bz_k), \quad k = 1,\dots,L,
        & g &= c\begin{pmatrix}\mathbf 1_m^\top & -\mathbf 1_m^\top\end{pmatrix}\bz_{L+1}.
    \end{align*}
    The first new hidden layer copies $\widetilde A_L$ with either sign,
    and the remaining weights are $0$, $B'$, or $\pm c$. Since $mB'\ge r$,
    we have $c\le B'$, so all parameters are bounded by $B'$. With
    $t=\widetilde f(\bx)$, induction on $k$ gives
    \begin{align*}
        \bz_k &= \begin{pmatrix}\sigma((mB')^{k-1}t)\,\mathbf 1_m\\
            \sigma(-(mB')^{k-1}t)\,\mathbf 1_m\end{pmatrix},
        \qquad k = 1,\dots,L+1,
    \end{align*}
    because $\mathbf 1_m^\top\mathbf 1_m=m$ and $\sigma$ is positively homogeneous.
    Since $\sigma(a)-\sigma(-a)=a$, this gives
    \begin{align*}
        g=cm(mB')^Lt=r^{L+1}t=f(\bx).
    \end{align*}
    The resulting network has depth $2L+1$ and width
    $\max\{W,2m\}$. For $d'>1$, we apply the amplification layers to all
    output coordinates in parallel, which gives width $\max\{W,2d'm\}$.
\end{proof}

\citet[Proposition~H.4]{ou2024three} give a magnitude-to-depth version for
scalar input and output: the width stays fixed, and the number of
additional layers grows with $\log B/\log W$. \cref{lem:magnitude-width}
instead transfers the magnitude to the width, which is more convenient for our
purposes.

\begin{lemma}[ReLU realization of CPwL functions]
    \label{lem:linear-spline}
    Let $0<t_1<\cdots<t_M<1$, let $d'\in\N_+$, and let
    $h\colon[0,1]\to\R^{d'}$ be continuous and affine on the
    intervals determined by these knots. If $\|h(0)\|_\infty\le A$ and every
    slope of every coordinate of $h$ has magnitude at most $A$,
    for some $A\ge1$, then there is
    \begin{align*}
        \phi\in\Fcal_{1,d'}(M+1,1,2A)
    \end{align*}
    such that $\phi=h$ on $[0,1]$.
\end{lemma}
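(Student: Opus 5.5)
We would realize $h$ by the one-hidden-layer hinge representation, coordinate by coordinate, using a shared hidden layer. Set $t_0=0$. On $[0,1]$ we have
\begin{align*}
    h(x) = h(0) + \bm{s}_0\,\sigma(x) + \sum_{i=1}^M (\bm{s}_i-\bm{s}_{i-1})\,\sigma(x-t_i),
\end{align*}
where $\bm{s}_i\in\R^{d'}$ denotes the slope vector of $h$ on $(t_i,t_{i+1})$, with $t_{M+1}=1$. This identity holds because both sides are continuous and agree at $x=0$. Moreover, on each interval between consecutive knots both sides are affine with the same slope, since the active hinges at $x$ sum to $\bm{s}_i$. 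The hidden layer therefore consists of the $M+1$ units $\sigma(x-t_i)$, $i=0,\dots,M$. The first-layer weights equal $1$ and the biases are $-t_i\in(-1,0]$, so all are bounded by $1\le 2A$.

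Next we handle the output layer. Its bias is $h(0)$, with $\|h(0)\|_\infty\le A$. Its weights are $\bm{s}_0$ and the slope jumps $\bm{s}_i-\bm{s}_{i-1}$. Each coordinate of a jump is bounded by $|s_{i,j}|+|s_{i-1,j}|\le 2A$, and $|s_{0,j}|\le A$. Hence every parameter lies within $2A$, and the width is $M+1$ with depth $1$. The network is thus in $\Fcal_{1,d'}(M+1,1,2A)$ and equals $h$ on $[0,1]$.

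No step is a real obstacle. The points that need care are the following.
\begin{itemize}
    \item The bias bound: the biases $-t_i$ have magnitude below $1\le 2A$, which is why the hypothesis $A\ge1$ is used.
    \item Using the unit $\sigma(x-0)=\sigma(x)$ to encode the initial slope, rather than a separate linear pass-through, keeps the width at exactly $M+1$.
    \item The claim only concerns $[0,1]$, so the behavior outside the interval is irrelevant.
\end{itemize}
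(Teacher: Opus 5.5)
Your proposal is correct and matches the paper's proof: the same one-hidden-layer hinge representation, with the unit $\sigma(x)$ carrying the initial slope, giving width $M+1$, input weights one, hidden biases in $[-1,0]$, and output weights bounded by $2A$ through the slope-jump bound. You also explicitly check the identity on each interval and bound the output bias $h(0)$ by $A$, both of which the paper leaves implicit.
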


\begin{proof}
    Write $\ba_j\in\R^{d'}$ for the slope of $h$ on the interval
    immediately to the right of $t_j$, with $t_0=0$ and $\ba_0$ the initial
    slope. Then
    \begin{align*}
        h(x)
        =h(0)+\ba_0\sigma(x)
          +\sum_{j=1}^M(\ba_j-\ba_{j-1})\sigma(x-t_j),
        \qquad x\in[0,1].
    \end{align*}
    This is a one-hidden-layer ReLU network of width $M+1$. Its input weights
    equal one, its biases belong to $[-1,0]$, and its output weights have
    magnitude at most $2A$.
\end{proof}

\begin{lemma}[Maximum and maximum norm networks]\label{lem:maximum-networks}
    For every $d \in \N_+$, there are networks
    \begin{align*}
        \phi_{\max}, \; \phi_{\infty} \,
         & \in \Fcal_{d, 1}\bigl(2d, 2\ceil{\log_2 d} + 2, 1\bigr),
    \end{align*}
    such that, for every $\bx \in \R^d$,
    \begin{align*}
        \phi_{\max}(\bx)
         & = \max_{1 \le j \le d} x_j,
         & \phi_{\infty}(\bx)
         & = \|\bx\|_\infty.
    \end{align*}
\end{lemma}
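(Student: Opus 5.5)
We build both networks from a binary tournament tree of pairwise maxima. The basic gadget is the identity
\begin{align*}
    \max\{a,b\} = \sigma(a-b) + \sigma(b) - \sigma(-b),
\end{align*}
which uses one hidden layer of width $3$ with weights in $\{0,\pm1\}$ and no biases. Applying this gadget to disjoint pairs of a vector of length $k$ halves its length (rounded up) with one hidden layer of width at most $2k$. The odd leftover entry is passed through as $\sigma(x)-\sigma(-x)$, at cost $2$ units. The width count $3\lfloor k/2\rfloor+2(k\bmod 2)$ is at most $2k$ only if we economize, so we use the alternative gadget
\begin{align*}
    \max\{a,b\}=\tfrac12\bigl(\sigma(a+b)-\sigma(-a-b)+\sigma(a-b)+\sigma(b-a)\bigr),
\end{align*}
which needs width $4$ per pair. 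That is exactly $2$ per input, so a layer of width $2k$ suffices, and all weights are in $\{0,\pm\tfrac12,\pm1\}$.

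We then compose $\lceil\log_2 d\rceil$ such reduction stages. To keep parameters bounded by $1$, we never multiply consecutive affine maps naively. Instead, the output of each stage is written as a linear form in the hidden units with weights $\pm\tfrac12$, and this form is fed directly as the first affine map of the next stage. The resulting products of weight matrices have entries that are sums of terms bounded by $1$ in absolute value, and we must check that they still stay bounded by $1$. If this check fails, we fall back on the composition rule of \cref{lem:network-calculus}(ii), which inserts an extra $\pm$ copy layer for each stage. This gives at most $2$ hidden layers per stage, so $2\lceil\log_2 d\rceil$ layers for the tree, all with magnitudes bounded by $1$ and width at most $2d$, since the first layer has width $\le 2d$ and the widths only shrink afterwards.

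The remaining $+2$ in the depth budget is used as follows. For $\phi_{\max}$, it absorbs padding via \cref{lem:network-calculus}(iii) at width $2\cdot1\le 2d$. For $\phi_\infty$, we use $\|\bx\|_\infty=\max_j\max\{x_j,-x_j\}$, and the first hidden layer computes $\sigma(x_j),\sigma(-x_j)$, which has width $2d$. Since $|x_j|=\sigma(x_j)+\sigma(-x_j)$ and the values are nonnegative, the subsequent max tree needs no sign-splitting for pass-throughs. We feed $|x_j|$ into the tree of $\phi_{\max}$ on $d$ inputs, with the first-layer linear form absorbed as above, for total depth at most $2\lceil\log_2 d\rceil+2$.

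The main obstacle is the parameter-magnitude bookkeeping when chaining stages: merging the $\tfrac12$-weighted output of one stage into the $\pm1$ input map of the next must not create entries larger than $1$. I expect to resolve this by using the layer-splitting device of \cref{lem:network-calculus}(ii), trading exactly one extra layer per stage, which is why the depth budget is $2\lceil\log_2 d\rceil+2$ rather than $\lceil\log_2 d\rceil+1$.
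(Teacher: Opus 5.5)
Your proof is correct and follows essentially the paper's route: a binary tree of pairwise-max gadgets, with one-hidden-layer identity blocks $z=\sigma(z)-\sigma(-z)$ inserted between levels (your fallback, which is valid unconditionally with parameter bound $1$), and a prepended absolute-value layer for $\phi_\infty$. Two small remarks: the width-$3$ gadget already fits, since $3\lfloor k/2\rfloor+2(k\bmod 2)\le(3k+1)/2\le 2k$; and the direct merge you were unsure about does succeed, because distinct outputs of a stage are linear forms over disjoint sets of hidden units, so each entry of the merged weight matrix is a single product of entries bounded by $1$.
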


\begin{proof}
    The identities
    \begin{align*}
        \max\{x, y\}
         & = \sigma(x - y) + \sigma(y) - \sigma(-y),
         & |x|
         & = \sigma(x) + \sigma(-x)
    \end{align*}
    are realized by one-hidden-layer networks of widths three and two,
    respectively. Applying the identity for $\max\{x,y\}$ over a binary tree of depth
    $\lceil \log_2 d \rceil$ gives $\phi_{\max}$. Between successive levels, we
    insert the one-hidden-layer identity network
    $z = \sigma(z) - \sigma(-z)$. The tree has width at most $2d$ and uses at most
    two hidden layers per level.
    Prepending the coordinate-wise absolute-value layer gives $\phi_{\infty}$.
\end{proof}

The maximum network also lets us compute the median of an odd number of inputs
without sorting them.

\begin{lemma}[Median network]\label{lem:median-network}
    For every $m \in \N_+$, there is a network
    \begin{align*}
        \phi_{\rm med}
         & \in \Fcal_{2m+1, 1}\bigl((2m+1)(2m+2), 2\ceil{\log_2(2m+1)} + 5, m+1\bigr)
    \end{align*}
    such that $\phi_{\rm med}(\by)$ is the median of $y_1, \dots, y_{2m+1}$
    for every $\by \in \R^{2m+1}$.
\end{lemma}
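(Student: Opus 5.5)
We realize the median with the top-$k$ sum identity \eqref{eq:top-k-sum}. With $n=2m+1$ and entries sorted as $y_{(1)}\ge\dots\ge y_{(n)}$, the median is $y_{(m+1)}=T_{m+1}(\by)-T_m(\by)$, where $T_k(\by)=\sum_{\ell\le k}y_{(\ell)}$. The identity gives $T_k(\by)=\min_i\bigl(ky_i+\sum_j\sigma(y_j-y_i)\bigr)$. First we verify it: for any $t$ and any $k$-set $T$,
\[kt+\sum_j\sigma(y_j-t)\ge kt+\sum_{j\in T}(y_j-t)=\sum_{j\in T}y_j.\]
For $t=y_{(k)}$ and $T$ the top $k$ indices, every positive part is attained exactly on $T$, so equality holds. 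Taking $t=y_i$ ranges over all candidates, including $y_{(k)}$, so the minimum equals $T_k$.

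The network is built as follows. The first hidden layer computes the $n^2$ units $\sigma(y_j-y_i)$, with weights $\pm1$, together with $2n$ units $\sigma(\pm y_i)$ that pass $y_i$ through via $y_i=\sigma(y_i)-\sigma(-y_i)$. This has width $n^2+2n$, which is at most $(2m+1)(2m+2)$ once we note that the diagonal terms $\sigma(y_i-y_i)=0$ can be dropped, leaving $n(n-1)+2n=n(n+1)$ units. The next affine map forms the $2n$ quantities
\[z^{(k)}_i=ky_i+\sum_j\sigma(y_j-y_i),\qquad k\in\{m,m+1\}.\]
These need weights up to $m+1$, which gives the magnitude bound $m+1$. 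We then compute $\min_i z^{(k)}_i=-\max_i(-z^{(k)}_i)$ for both $k$ in parallel, using two copies of $\phi_{\max}$ from \cref{lem:maximum-networks}, each of width $2n$ and depth $2\lceil\log_2 n\rceil+2$ with weights $1$. The output affine map takes the difference $T_{m+1}-T_m$.

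The remaining work is bookkeeping of depth and magnitudes under composition. We merge the affine map producing the $z^{(k)}$ into the first layer of the max networks. A direct product of weights could exceed the bound, but the max network's first layer has weights $\pm1$, so the merged weights stay at most $m+1$. Alternatively, we use the composition rule of \cref{lem:network-calculus}(ii), which inserts one $\sigma(\pm\cdot)$ layer without multiplying weights; this costs at most a constant number of extra layers. Counting one hidden layer for the pairwise ReLUs, possibly one identity layer from composition, the max tree, and a final padding layer fits within $2\lceil\log_2(2m+1)\rceil+5$. The widths are dominated by the first layer, $n(n+1)=(2m+1)(2m+2)$, since the later layers need only $4n$.

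The main obstacle is this accounting of width and depth against the exact constants in the statement. The correctness of the identity itself is elementary.
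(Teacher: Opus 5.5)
Your proposal is correct and follows the paper's proof essentially step for step. It uses the same top-$k$ identity, the same first hidden layer of $(2m+1)(2m)+2(2m+1)=(2m+1)(2m+2)$ units with weights up to $m+1$, and two parallel copies of $\phi_{\max}$ applied to $-\bc^{(m)}$ and $-\bc^{(m+1)}$, joined by the identity layers of \cref{lem:network-calculus}(ii) to give depth $1+(2\ceil{\log_2(2m+1)}+2)+2$. Your alternative of merging the affine maps also works, but only because each row of the first max layer combines two entries $z_a,z_b$ with $a\ne b$ that use disjoint first-layer units; the weight bound $\pm1$ alone would not prevent weights from accumulating.
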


\begin{proof}
    Let $y_{(1)} \ge \dots \ge y_{(2m+1)}$ be the entries of $\by$ in
    decreasing order. The median $y_{(m+1)}$ is the difference of the sums of
    the $m+1$ and the $m$ largest entries. For $k = 1, \dots, 2m+1$, these
    sums satisfy, by \eqref{eq:top-k-sum} with $m$ in place of $D$,
    \begin{align*}
        \sum_{l=1}^{k} y_{(l)}
         & = \min_{1 \le i \le 2m+1} c_i^{(k)},
        \qquad
        c_i^{(k)}
        = ky_i + \sum_{j=1}^{2m+1} \sigma(y_j - y_i).
    \end{align*}
     With
    $\bc^{(k)} = (c_1^{(k)}, \dots, c_{2m+1}^{(k)})$, this gives
    \begin{align*}
        y_{(m+1)}
         & = \min_{1 \le i \le 2m+1} c_i^{(m+1)}
        - \min_{1 \le i \le 2m+1} c_i^{(m)}
        = \phi_{\max}\bigl(-\bc^{(m)}\bigr)
        - \phi_{\max}\bigl(-\bc^{(m+1)}\bigr),
    \end{align*}
    where $\phi_{\max} \in \Fcal_{2m+1,1}(2(2m+1), 2\ceil{\log_2(2m+1)} + 2, 1)$
    is the network from \cref{lem:maximum-networks}.

    We build $\phi_{\rm med}$ from two parts and connect them and the output
    by identity layers $\bu \mapsto (\sigma(\bu), \sigma(-\bu))$ as in
    \cref{lem:network-calculus}. First, one hidden layer with the units
    $\sigma(y_j - y_i)$, $i \ne j$, and $\sigma(\pm y_i)$ computes
    $\bc^{(m)}$ and $\bc^{(m+1)}$ through
    \begin{align*}
        c_i^{(k)}
         & = k\bigl(\sigma(y_i) - \sigma(-y_i)\bigr)
        + \sum_{j \ne i} \sigma(y_j - y_i).
    \end{align*}
    It has width $(2m+1)(2m+2)$ and parameters bounded by $m+1$. Second, two
    parallel copies of $\phi_{\max}$ evaluate the two minima, and the output
    weights $(1, -1)$ form their difference.

    The two parts have depths $1$ and $2\ceil{\log_2(2m+1)} + 2$, and two
    identity layers connect them and the output. Since $m \ge 1$, the
    identity layers and the copies of $\phi_{\max}$ have width at most
    $4(2m+1) \le (2m+1)(2m+2)$, and all parameters are bounded by $m+1$.
    This gives the stated class.
\end{proof}

\begin{lemma}[Scalar amplification]\label{lem:amplification}
    Let $N\ge2$ and $A\ge1$. The map $x\mapsto Ax$ can be realized exactly
    by a ReLU network $\phi_{\mathrm{amp}} \in \Fcal_{1, 1}(2, 2+\ceil{\log_N A}, N)$.
\end{lemma}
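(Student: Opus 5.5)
We realize $x\mapsto Ax$ by repeated multiplication with factors at most $N$. The first step splits $x$ into its positive and negative parts, since only then does a chain of ReLU layers with nonnegative weights carry the signal through exactly. Concretely, the first hidden layer computes $(\sigma(x),\sigma(-x))$ with weights $\pm1$ and zero biases. This has width $2$ and parameters bounded by $1\le N$.

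Set $k=\ceil{\log_N A}$. If $A\le N$, we take $k=0$ and put the factor directly in the output layer. Otherwise, factor $A=N^{k-1}\cdot c$ with $c=A/N^{k-1}\in(1,N]$, using $N^{k-1}<A\le N^k$. We then append $k-1$ hidden layers of width $2$, each with diagonal weight matrix $N\cdot I_2$ and zero biases. By positive homogeneity of $\sigma$, after these layers the state is
\[
\bigl(N^{k-1}\sigma(x),\,N^{k-1}\sigma(-x)\bigr).
\]
Both components stay nonnegative, so applying $\sigma$ again changes nothing and no information is lost.

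The output layer has weights $(c,-c)$ and returns
\[
c\,N^{k-1}\bigl(\sigma(x)-\sigma(-x)\bigr)=Ax.
\]
Its weights satisfy $|c|\le N$. The total depth is $1+(k-1)\le 2+k$ hidden layers, and padding with identity layers of width $2$ is harmless, either by inserting diagonal weight-$1$ layers on the two nonnegative channels or by \cref{lem:network-calculus}(iii), which keeps width $\max\{2,2\}=2$. The width is $2$ throughout and all parameters are bounded by $N$, so $\phi_{\mathrm{amp}}\in\Fcal_{1,1}(2,2+\ceil{\log_N A},N)$.

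The only point needing care is the bookkeeping of the exponent. One has to check that $c\le N$, which holds because $k$ is the ceiling, and that the depth count matches the stated $2+\ceil{\log_N A}$ including the edge case $A\le N$ (or $A=1$), where the bound is loose. There is no real obstacle beyond this.
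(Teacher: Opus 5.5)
Your proof is correct and follows the same route as the paper: you split $x$ into $(\sigma(x),\sigma(-x))$, multiply both nonnegative channels by factors in $[1,N]$ over at most $\lceil\log_N A\rceil$ width-$2$ layers, subtract in the output layer, and pad to reach depth $2+\lceil\log_N A\rceil$. One cosmetic slip is that for $1<A\le N$ one has $\lceil\log_N A\rceil=1$ rather than $0$, but your general factorization $A=N^{k-1}c$ already covers that case with no intermediate layers, so only $A=1$ needs separate treatment.
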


\begin{proof}
    Write $A=a_1\cdots a_t$ with $t=\ceil{\log_N A}$ and
    $a_1,\dots,a_t\in[1,N]$. At every layer carry the positive and negative
    parts of the current value. Multiplying both parts by $a_j$ and subtracting
    them in the final affine layer realizes $Ax$.
\end{proof}

\begin{lemma}[Clipping network]\label{lem:clipping}
    Let $c \le d$. The map $\clip_{[c,d]}$ can be realized exactly
    by a ReLU network $\phi_{\rm clip} \in \Fcal_{1,1}(2,1,\max\{1,|c|,|d|\})$.
\end{lemma}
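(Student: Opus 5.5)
We realize the clipping map with a single hidden layer of two ReLU units. The starting point is the identity
\begin{align*}
    \clip_{[c,d]}(t) &= c + \sigma(t-c) - \sigma(t-d), \qquad t\in\R,
\end{align*}
which we check by cases. For $t\le c\le d$, both ReLU terms vanish, so the right-hand side is $c$. For $c\le t\le d$, only the first term is active, giving $c+(t-c)=t$. For $t\ge d$, both terms are active, giving $c+(t-c)-(t-d)=d$. In every case the value equals $\min\{\max\{t,c\},d\}$.

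This identity is a network of the form \eqref{eq:FNN} with depth $1$ and hidden width $2$. The first weight matrix is $W_0=(1,1)^\top$, and the bias vector is $\bv_0=(-c,-d)^\top$. The output layer uses $W_1=(1,-1)$ and bias $\bv_1=c$.

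All entries of these parameters lie in $\{1,-1,-c,-d,c\}$, so their magnitudes are bounded by $\max\{1,|c|,|d|\}$. Hence $\phi_{\rm clip}\in\Fcal_{1,1}(2,1,\max\{1,|c|,|d|\})$.

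No step poses a real obstacle. The only point needing care is the boundary case $c=d$, where the identity still holds and the map reduces to the constant $c$.
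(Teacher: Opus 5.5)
Your proposal is correct and takes the same approach as the paper, which proves the lemma with the single identity $\clip_{[c,d]}(t)=c+\sigma(t-c)-\sigma(t-d)$. Your case check and explicit parameters $W_0=(1,1)^\top$, $\bv_0=(-c,-d)^\top$, $W_1=(1,-1)$, $\bv_1=c$ spell out what the paper leaves implicit.
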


\begin{proof}
    Use the identity $\clip_{[c,d]}(t)=c+\sigma(t-c)-\sigma(t-d)$.
\end{proof}

\begin{lemma}[Discrete selection networks]
    \label{lem:indicator}
    \label{lem:ind-product}
    \label{lem:wsum}
    \envbreak
    \begin{enumerate}[(i)]
        \item There is $\phi \in \Fcal_{2,1}(2,3,1)$ such that
              \begin{align*}
                  \phi(i,j)
                   & = \ind_{i=j},
                  \qquad i,j \in \Z.
              \end{align*}
        \item For $C \in [1,\infty)$, there is
              $\phi \in \Fcal_{2,1}(4,2,C)$ such that
              \begin{align*}
                  \phi(y,\delta)
                   & = \delta y,
                  \qquad y \in [-C,C],\quad \delta \in \{0,1\},
              \end{align*}
              and $\phi(y,0)=0$ for every $y\in\R$.
        \item Let $n,m \in \N_+$, $C \in [1,\infty)$,
              $W \in \R^{n\times m}$ be fixed, and set
              $w^*=\max_{i,j}|w_{i,j}|$. There is
              \begin{align*}
                  \phi
                   & \in \Fcal_{m+1,1}
                  \left(4\max\{n,m\},3,\max\{n,mCw^*\}\right)
              \end{align*}
              such that
              \begin{align*}
                  \phi(i,\bx)
                   & = \be_i^\top W\bx
                  = \sum_{j=1}^m w_{i,j}x_j,
                  \qquad i=1,\dots,n,
              \end{align*}
              for every $\bx\in\R^m$ with $|x_j|\le C$
              whenever $w_{i,j}\ne0$.
    \end{enumerate}
\end{lemma}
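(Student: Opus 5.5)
For part (i), I will build $\ind_{i=j}$ from the integer difference $t=i-j$. On $\Z$ the function $\max\{0,1-|t|\}$ equals $\ind_{t=0}$. It can be written as
\[
\sigma\bigl(1-\sigma(t)-\sigma(-t)\bigr).
\]
The first hidden layer computes $\sigma(i-j)$ and $\sigma(j-i)$, using weights $\pm1$ and width $2$. The second hidden layer computes $\sigma(1-u_1-u_2)$, with weights $-1$ and bias $1$. An output weight of $1$ finishes. This uses depth at most $3$ after padding with identity layers $z\mapsto\sigma(z)-\sigma(-z)$ as in \cref{lem:network-calculus}, width $2$, and parameters bounded by $1$.

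For part (ii), I will use the standard gating identity
\[
\delta y=\sigma\bigl(y-C(1-\delta)\bigr)-\sigma\bigl(-y-C(1-\delta)\bigr).
\]
For $\delta=1$ this is $\sigma(y)-\sigma(-y)=y$. For $\delta=0$, the arguments are $y-C$ and $-y-C$, and at most one of them is positive unless $|y|>C$. So the restriction $|y|\le C$ gives $0$ there.

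The requirement $\phi(y,0)=0$ for all $y$ is the main obstacle. I will handle it by first clipping $y$ to $[-C,C]$ via \cref{lem:clipping}, which accounts for the second layer and width $4$. I will realize the clip as $\sigma(y+C)-\sigma(y-C)-C$, so its affine output is fed into the gate. Composing the affine maps adds the bias $C(1-\delta)$, and all weights stay at most $C$. With the clip in place, the gate vanishes identically when $\delta=0$ and equals $y$ when $\delta=1$ and $|y|\le C$.

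For part (iii), I will write
\[
\be_i^\top W\bx=\sum_{k=1}^n\ind_{i=k}\sum_j w_{k,j}x_j=\sum_{j=1}^m x_j\sum_{k}w_{k,j}\ind_{i=k}.
\]
I plan to first form the indicators $\delta_k=\ind_{i=k}$ for $k=1,\dots,n$. This is part (i) with the constant $k$, at width $2n$ and weights up to $n$ in the bias. In parallel I will carry the $x_j$ through identity layers. Then I apply part (ii) gating in the form
\[
\sum_k\delta_k(\cdot).
\]
The cleanest arrangement gates each $x_j$ by the weight $c_j=\sum_k w_{k,j}\delta_k$, but products of two variables are not available.

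Instead I will gate per $j$ with the scalar $w_{i,j}$ selected linearly: compute $z_j=\sigma\bigl(x_j w^*\text{-scaled}\ldots\bigr)$. More concretely, I will use
\[
\sum_k\sum_j w_{k,j}\bigl[\sigma(x_j-C(1-\delta_k))-\sigma(-x_j-C(1-\delta_k))\bigr].
\]
The unit count is $2nm$, which is too wide, so I will instead aggregate inside the ReLU per $j$, using the constraint that $x_j$ is bounded only where $w_{i,j}\neq0$. This width bookkeeping, getting $4\max\{n,m\}$ and depth $3$ with magnitude $\max\{n,mCw^*\}$, is where I expect the real care to be needed. The fallback is to accept a constant-factor wider gadget, since only $\wt\Fcal$ bounds are used later.
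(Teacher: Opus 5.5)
Part (i) is the paper's construction. Part (ii) takes a different route that works: clip, then gate. One bookkeeping caveat: if you carry $\delta$ through the first hidden layer as $\sigma(\delta)$, the unit $\sigma(\clip(y)-C(1-\delta))$ receives bias $-2C$, which exceeds the bound. Carry $\sigma(1-\delta)$ instead, and all biases stay at $\pm C$. The paper needs no clipping at all. It uses the single-hidden-layer identity $\phi(y,\delta)=\sigma(y)-\sigma(y-C\delta)-\sigma(-y)+\sigma(-y-C\delta)$, whose four terms cancel for $\delta=0$ and every $y\in\R$.

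\textbf{The genuine gap is part (iii).} You give no construction there, and the one you sketch fails in two ways.

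\emph{Width.} The double sum $\sum_k\sum_j w_{k,j}[\sigma(x_j-C(1-\delta_k))-\sigma(-x_j-C(1-\delta_k))]$ needs $2nm$ units. This exceeds $4\max\{n,m\}$ by a factor of order $\min\{n,m\}$, not by a constant. So your fallback does not prove the stated class.

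\emph{Correctness.} With the unclipped gate, the terms with $k\ne i$ need not vanish. The hypothesis bounds $x_j$ only where $w_{i,j}\ne0$, not where $w_{k,j}\ne0$.

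\emph{The missing idea is to gate rows, not entries.} You dismissed products because ``products of two variables are not available.'' But the product of $\ind_{i=k}\in\{0,1\}$ with a linear form in $\bx$ is exactly what (ii) provides. Take $y=(W\bx)_k$ and set
\begin{align*}
    \phi(i,\bx)=\sum_{k=1}^n\phi_{\rm gate}\bigl((W\bx)_k,\ind_{i=k}\bigr),
    \qquad C'=\max\{1,mCw^*\},
\end{align*}
where $C'$ is the gate constant.
\begin{itemize}
    \item For $k\ne i$, the gate vanishes for every real value of $(W\bx)_k$. This is precisely why (ii) demands $\phi(y,0)=0$ for all $y\in\R$: the rows not selected may contain unbounded $x_j$.
    \item For $k=i$, only the $x_j$ with $w_{i,j}\ne0$ contribute. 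Hence $|(W\bx)_i|\le mCw^*\le C'$ and the gate is exact.
\end{itemize}

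With the one-hidden-layer gate, the layout is as follows.
\begin{itemize}
    \item Layer 1 computes $\sigma(\pm(i-k))$ for $k=1,\dots,n$ and $\sigma(\pm x_j)$ for $j=1,\dots,m$. Its width is $2n+2m$ and its biases are at most $n$.
    \item Layer 2 forms $\ind_{i=k}$ and passes $\sigma(\pm x_j)$ on.
    \item Layer 3 holds the $4n$ gate units. The row sum $(W\bx)_k=\sum_j w_{k,j}(\sigma(x_j)-\sigma(-x_j))$ enters their pre-activations with weights $w_{k,j}$, and $\ind_{i=k}$ enters with weights $\pm C'$.
\end{itemize}
This gives width $4\max\{n,m\}$, depth $3$, and parameters bounded by $\max\{n,C',w^*\}\le\max\{n,mCw^*\}$.
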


\begin{proof}
    For part~(i), use
    \begin{align*}
        \phi(i,j)
         & = \sigma\left(1-\sigma(i-j)-\sigma(j-i)\right)
        = \sigma\left(1-|i-j|\right).
    \end{align*}
    For part~(ii), use
    \begin{align*}
        \phi(y,\delta)
         & = \sigma(y)-\sigma(y-C\delta)-\sigma(-y)+\sigma(-y-C\delta).
    \end{align*}
    For $\delta=0$, the four terms cancel for every $y\in\R$.

    For part~(iii), set $C'=\max\{1,mCw^*\}$, so that
    $|(W\bx)_i|\le C'$ for all $\bx$ as in the statement. Denote the networks from
    parts~(i) and~(ii) by
    $\phi_{\ind}$ and $\phi_{\rm gate}$, respectively. Then
    \begin{align*}
        \phi(i,\bx)
         & =\sum_{j=1}^n
        \phi_{\rm gate}\bigl((W\bx)_j,\phi_{\ind}(i,j)\bigr)
        =\sum_{j=1}^n(W\bx)_j\phi_{\ind}(i,j)
        =(W\bx)_i.
    \end{align*}
    The first hidden layer computes $\sigma(\pm(i-j))$ for
    $j=1,\dots,n$ and $\sigma(\pm x_k)$ for $k=1,\dots,m$. The second
    computes $\phi_{\ind}(i,j)$ and passes $\sigma(\pm x_k)$ on. The third
    computes the four units of $\phi_{\rm gate}\bigl((W\bx)_j,\phi_{\ind}(i,j)\bigr)$
    for every $j$, using
    $(W\bx)_j=\sum_k w_{j,k}\bigl(\sigma(x_k)-\sigma(-x_k)\bigr)$, and an
    output layer sums the results. The widths are at most $2n+2m$, $n+2m$, and
    $4n$, and the parameters are bounded by
    $\max\{n,C',w^*\}\le\max\{n,mCw^*\}$.
\end{proof}

\section{Proofs of the Propositions in Section~\ref{sec:proof-sketch-compact}}
\label{sec:proposition-proofs}

We prove the propositions of
\cref{sec:proof-sketch-compact} in the order in which they appear.

\subsection{Proof of Proposition~\ref{prop:pw-linear}}
\label{sec:piecewise-affine-proof}

The proof follows the outline in \cref{sec:piecewise-affine-interpolation}
and uses its slope-jump notation $[f']_t$. The following lemma states the two-layer construction with explicit
parameter bounds.

\begin{lemma}[Two-layer interpolation]
    \label{lem:pw-linear-two-layer}
    Let $N \ge 2$ and $Y > 0$. Suppose $0 \le x_1 < \cdots < x_{N^2} \le 1$
    and $y_1, \ldots, y_{N^2} \in [-Y, Y]$, and let
    $\delta = \min_{1 \le i < N^2}(x_{i+1}-x_i)$. Then the CPwL
    interpolant $f$ through $(x_i, y_i)$, with constant
    extension outside $[x_1, x_{N^2}]$, satisfies
    \begin{align*}
        f &\in \Fcal_{1,1}\bigl(3N, 2, \max\{1, 16NY\delta^{-2}\}\bigr).
    \end{align*}
\end{lemma}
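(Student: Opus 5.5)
We follow the slope-jump decomposition of \cref{sec:piecewise-affine-interpolation}. Index the knots as $x_{i,j}=x_{(i-1)N+j}$, $i,j=1,\dots,N$, so group $i$ consists of $x_{i,1}<\dots<x_{i,N}$. The boundary knots are $x_{i,1}$, $x_{i,N}$, and the auxiliary point $x_{i,N+1}:=x_{i+1,1}$ (for $i=N$, take $x_{N,N}+\delta$ or simply omit it). We define $f$ by its hinge representation $f(x)=y_1+\sum_k [f']_{x_k}\sigma(x-x_k)$, with the constant extension giving $[f']_{x_1}$ equal to the first slope and $[f']_{x_{N^2}}$ equal to minus the last slope. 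Each slope has modulus at most $2Y/\delta$, so every jump satisfies $|c_k|:=|[f']_{x_k}|\le 4Y/\delta$.

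For each interior position $j\in\{2,\dots,N-1\}$, define $g_j^+$ as the CPwL function with hinges only at the boundary knots. On $[x_{i,1},x_{i,N}]$ it is the affine function $c_{i,j}^+(x-x_{i,j})$, which vanishes at $x_{i,j}$ with slope $c_{i,j}^+=\max\{c_{(i-1)N+j},0\}$. It then returns linearly to $0$ at $x_{i,N+1}$, and it equals the linear interpolation from $0$ at $x_{i,N+1}$ to the value $c_{i+1,j}^+(x_{i+1,1}-x_{i+1,j})\le0$ at $x_{i+1,1}$. Finally, it is constant, nonpositive or zero, outside $[x_{1,1},x_{N,N+1}]$.

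To verify the decomposition, note that $\sigma(g_j^+)$ is zero outside the pieces $[x_{i,j},x_{i,N+1}]$ of each group. On such a piece it equals $c^+_{i,j}(x-x_{i,j})$ up to $x_{i,N}$ and then descends to zero at $x_{i,N+1}$. Hence $\sigma(g_j^+)$ has a slope jump $c^+_{i,j}$ at $x_{i,j}$, and all its other knots lie at boundary knots. The same holds for $g_j^-$ with negative parts. Therefore $h:=f-\sum_j(\sigma(g_j^+)-\sigma(g_j^-))$ is CPwL with knots only at boundary knots. In particular, $h$ is affine on each segment $[x_{i,1},x_{i,N}]$ and on each gap. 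The monotonicity requirement that $g_j^+\le0$ on the gaps $[x_{i,N+1},x_{i+1,1}]$ is exactly why the auxiliary knot is introduced; if $x_{i,N+1}=x_{i+1,1}$, the gap piece degenerates harmlessly.

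For the network, the first hidden layer computes $\sigma(x-b)$ for the at most $3N$ boundary knots $b$, using weights $1$ and biases in $[-1,0]$. The second hidden layer computes the $2(N-2)$ units $\sigma(g_j^\pm)$ together with $h$. The function $h$ is split as $\sigma(h)-\sigma(-h)$, or as $h=\sigma(h-h_{\min})+h_{\min}$ after first adding a constant, using 2 units. This gives width at most $3N$ and two hidden layers. The output layer sums these terms with weights $\pm1$.

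The main obstacle is bounding the second-layer weights, which are the slope jumps of $g_j^\pm$ and of $h$ at boundary knots. For $g_j^+$, the slopes are $c^+_{i,j}\le 4Y/\delta$ on the segments. The connecting slopes are values of size at most $c^+\cdot 1\le 4Y/\delta$ divided by a gap of length at least $\delta$, hence at most $4Y\delta^{-2}$. Jumps are differences of two slopes, so they are at most $8Y\delta^{-2}$, and the biases are bounded similarly.

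For $h$, the slopes on segments are $f$-slopes minus sums of at most $N$ terms $c^\pm$. On a gap, each of the $O(N)$ units $g_j$ contributes a slope of order $Y\delta^{-2}$. This gives $|h'|\lesssim NY\delta^{-2}$ and jumps at most $16NY\delta^{-2}$. Since $|h(x_1)|\le Y$, the bias is of the same order. The factor $N$ is tracked in exactly this count, so we would verify the arithmetic carefully to land on the constant $16$.
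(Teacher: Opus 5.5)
Your route is the paper's: the same grouping, the same functions $g_j^\pm$ pinned to the interior knots, the same two-hidden-layer realization, and the same parameter accounting. However, placing the auxiliary knot at $x_{i,N+1}:=x_{i+1,1}$ breaks the construction, and your claim that the degenerate gap is harmless is false.

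On $[x_{i+1,1},x_{i+1,N}]$ you require $g_j^+(x)=c^+_{i+1,j}(x-x_{i+1,j})$. Since $j\ge2$, this gives $g_j^+(x_{i+1,1})<0$ whenever $c^+_{i+1,j}>0$, so $g_j^+$ cannot also vanish at $x_{i,N+1}=x_{i+1,1}$. Suppose you drop the zero condition and interpolate linearly on $[x_{i,N},x_{i+1,1}]$ from a nonnegative to a nonpositive value. Then $g_j^+$ crosses zero at a point that depends on $j$, so $\sigma(g_j^\pm)$ has a knot outside the boundary set in every gap. Since $f$ is affine there, $h$ inherits up to $2(N-2)$ such knots per gap, and its first hidden layer would need order $N^2$ hinges instead of $3N$.

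The auxiliary knot exists precisely to force all $g_j^\pm$ back to zero at a common, $j$-independent point strictly between $x_{i,N}$ and $x_{i+1,1}$. The paper takes $x_{i,N+1}=x_{i,N}+\delta/2$ for $i\le N-1$ and none for $i=N$. Your alternative $x_{N,N}+\delta$ may exceed $1$, which can push a bias above the bound $\max\{1,16NY\delta^{-2}\}$ when $Y$ is small.

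With this fix, both $[x_{i,N},x_{i,N+1}]$ and $[x_{i,N+1},x_{i+1,1}]$ have length at least $\delta/2$. Hence $|(g_j^\pm)'|\le 8Y\delta^{-2}$, not the $4Y\delta^{-2}$ you derived from the degenerate layout.

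Your attribution of the large slopes of $h$ is also off. On the gaps $[x_{i,N+1},x_{i+1,1}]$ all $\sigma(g_j^\pm)$ vanish and $h=f$; the large slopes arise on the descent pieces $[x_{i,N},x_{i,N+1}]$. The count that closes the constant is $c^+_{i,j}c^-_{i,j}=0$, so at most $N-2$ of the functions $\sigma(g_j^\pm)$ are nonzero at any point. This gives
\begin{align*}
    |h'|\le 2Y/\delta+8(N-2)Y\delta^{-2}\le 8NY\delta^{-2},
\end{align*}
so every slope jump is at most $16NY\delta^{-2}$. The biases satisfy $h(x_1)=y_1$ and $|g_j^\pm(x_1)|\le 4Y/\delta$. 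Your count of ``at most $N$ terms'' would give jumps up to $16NY\delta^{-2}+4Y/\delta$, slightly overshooting the stated constant.
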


\begin{proof}
    We group the knots into $N$ groups of size $N$.
    Write
    \begin{align*}
        x_{i,j} &= x_{(i-1)N+j}, \quad i,j = 1,\dots,N,
        & x_{i,N+1} &= x_{i,N}+\delta/2, \quad i = 1,\dots,N-1.
    \end{align*}
    We call $x_{i,1}$, $x_{i,N}$, and $x_{i,N+1}$ boundary knots and the
    remaining $x_{i,j}$ interior knots. The set $\Bcal \subset [0,1]$ of
    boundary knots has at most $3N$ elements, all separated by at least
    $\delta/2$. We split the slope jumps of $f$ by sign as
    $a_{i,j} = [f']_{x_{i,j}} = a_{i,j}^+-a_{i,j}^-$, where
    $a_{i,j}^+ = \max\{a_{i,j},0\}$ and $a_{i,j}^- = \max\{-a_{i,j},0\}$.

    For $j = 2, \ldots, N-1$, let $g_j^+$ be the CPwL
    function with knots in $\Bcal$ that satisfies
    \begin{align*}
        g_j^+(x) &= a_{i,j}^+(x-x_{i,j}),
        & x &\in [x_{i,1}, x_{i,N}],\ i = 1,\dots,N,\\
        g_j^+(x_{i,N+1}) &= 0,
        & i &= 1,\dots,N-1,
    \end{align*}
    and is constant outside $[x_1, x_{N^2}]$. Define $g_j^-$ in the same
    way with $a_{i,j}^-$ in place of $a_{i,j}^+$. On $[x_{i,1}, x_{i,N+1}]$,
    $g_j^+$ either vanishes or changes sign only at $x_{i,j}$, from
    negative to positive with slope $a_{i,j}^+$. On $[x_{i,N+1}, x_{i+1,1}]$
    and to the left of $x_1$, it is nonpositive; see
    \cref{fig:piecewise-affine}(a). The
    ReLU therefore creates the slope jumps
    \begin{align*}
        [\sigma(g_{j'}^+)']_{x_{i,j}} &= a_{i,j}^+\ind_{j=j'},
        & [\sigma(g_{j'}^-)']_{x_{i,j}} &= a_{i,j}^-\ind_{j=j'},
        \qquad 2 \le j, j' \le N-1,
    \end{align*}
    and all other knots of $\sigma(g_j^+)$ and $\sigma(g_j^-)$ lie in $\Bcal$. The
    residual $h$ in
    \begin{align}
        \label{eq:piecewise-affine-decomposition}
        f(x)
        &= h(x)+\sum_{j=2}^{N-1}
            \Bigl(\sigma\bigl(g_j^+(x)\bigr)
            -\sigma\bigl(g_j^-(x)\bigr)\Bigr)
    \end{align}
    therefore has slope jump $a_{i,j}-a_{i,j}^++a_{i,j}^- = 0$ at every
    interior knot, so its active knots also lie in $\Bcal$.
    We conclude that each function $\varphi \in \{h, g_j^+, g_j^-\}$ is constant outside
    $[x_1, x_{N^2}]$ and has knots only in $\Bcal$, so it has representation
    \begin{align*}
        \varphi(x)
        &= \varphi(x_1)+\sum_{z \in \Bcal}[\varphi']_z\,\sigma(x-z).
    \end{align*}

    With $h = \sigma(h)-\sigma(-h)$, the identity
    \eqref{eq:piecewise-affine-decomposition} is therefore a network with two hidden layers. Its first hidden layer computes
    the hinges $\sigma(x-z)$, $z \in \Bcal$, with weights one and biases in
    $[-1,0]$. Its second hidden layer computes $\sigma(h)$, $\sigma(-h)$,
    and $\sigma(g_j^+)$ and $\sigma(g_j^-)$, with the slope jumps $[\varphi']_z$ as
    weights and the values $\varphi(x_1)$ as biases. Its output weights are
    $\pm1$. We have $h(x_1) = y_1$ and
    \begin{align*}
        |a_{i,j}|,\, |g_j^+(x_i)|,\, |g_j^-(x_i)| &\le 4Y/\delta,\\
        |(g_j^+)'|,\, |(g_j^-)'| &\le 8Y/\delta^2,\\
        |h'| &\le 2Y/\delta+8(N-2)Y/\delta^2 \le 8NY/\delta^2,
    \end{align*}
    where we used $|f'| \le 2Y/\delta$, that the knots in $\Bcal$ are at
    least $\delta/2$ apart, and that at most $N-2$ of the functions $\sigma(g_j^+)$ and $\sigma(g_j^-)$
    are nonzero at any point. Since $|[\varphi']_z| \le 2\sup|\varphi'|$,
    all parameters are bounded by $\max\{1, 16NY\delta^{-2}\}$.
\end{proof}

\begin{proof}[Proof of \cref{prop:pw-linear}]
    We precompose the network with $x \mapsto x/\max\{1,x_M\}$. This map
    is absorbed into the first affine map, does not
    increase the parameter bound, and allows us to assume $x_M \le 1$ and a
    separation of at least $(NL)^{-2p}$.

    A block of $N^2$ knots covers $N^2-1$ knot intervals, and
    $M-1 \le 2(N^2-1)L$. We can therefore choose knots
    $x_1 = t_0 < t_1 < \cdots < t_K = x_M$ with $K \le 2L$ such that each
    interval $[t_{\ell-1}, t_\ell]$ contains at most $N^2$ knots.
    Define the increments
    \begin{align*}
        f_\ell(x)
        &= f\bigl(\clip_{[t_{\ell-1},t_\ell]}(x)\bigr)-f(t_{\ell-1}),
        \qquad \ell = 1,\dots,K.
    \end{align*}
    The function $f_\ell$ is the interpolant through the points
    $(x_i, y_i-f(t_{\ell-1}))$ with $x_i \in [t_{\ell-1}, t_\ell]$, extended
    constantly outside $[t_{\ell-1}, t_\ell]$. These increments telescope:
    \begin{align}
        \label{eq:pw-telescoping}
        f(x) &= y_1+\sum_{\ell=1}^K f_\ell(x).
    \end{align}
    Padding a shorter block to $N^2$ knots
    by equally subdividing one knot interval leaves $f_\ell$ unchanged and
    the separation at least $\delta = (NL)^{-2p-2}$. Set
    $A = 32(NL)^{5p+5}$. The values of $f_\ell/A$ are bounded by
    $Y = 2(NL)^p/A$, and $16NY\delta^{-2} = 32N(NL)^{5p+4}/A \le 1$. Hence
    \cref{lem:pw-linear-two-layer} gives
    \begin{align*}
        f_\ell/A &\in \Fcal_{1,1}(3N,2,1),
        \qquad \ell = 1,\dots,K.
    \end{align*}

    Since all knots are nonnegative, $f_\ell(x) = f_\ell(\sigma(x))$. By
    \eqref{eq:pw-telescoping}, the running sums
    \begin{align*}
        s_0 &= \frac{y_1}{A},
        & s_\ell(x) &= s_{\ell-1}(x)+\frac{f_\ell(\sigma(x))}{A},
        \qquad \ell = 1,\dots,K,
    \end{align*}
    satisfy $s_K = f/A$. We stack the block networks sequentially. Each hidden layer carries
    $\sigma(x)$ with one unit and $s_{\ell-1}$ through $\sigma(s_{\ell-1})$
    and $\sigma(-s_{\ell-1})$. This gives
    \begin{align*}
        f/A &\in \Fcal_{1,1}(4N,2K,1).
    \end{align*}
    By \cref{lem:amplification}, multiplying by $A$ after one identity
    layer adds at most $3+\ceil{\log_N A}$ layers with parameters bounded
    by $N$. The total depth satisfies
    \begin{align*}
        2K+3+\ceil{\log_N A}
        &\le 4L+9+(5p+5)L
        \le (\ceil{5p}+18)L,
    \end{align*}
    where we used $K \le 2L$, $\log_N 32 \le 5$, and $1+\log_N L \le L$.
    Padding by identity layers gives the stated class.
\end{proof}

\subsection{Proof of Proposition~\ref{prop:digit-extraction}}
\label{sec:digit-extraction-proof}

For $\bb \in \{0, \dots, \zeta-2\}^n$, write
$t_{\bb} = \sum_{k=1}^n b_k\zeta^{-k}$. Every input
$\theta = \sum_{k \ge 1} b_k\zeta^{-k}$ with leading digits
$(b_1, \dots, b_n) = \bb$ satisfies
\begin{align*}
    \sum_{k \ge 1} b_{n+k}\zeta^{-k}
    &= \zeta^n(\theta - t_{\bb}).
\end{align*}
The network must therefore output $\bb$ and the affine function
$\zeta^n(\theta - t_{\bb})$.  We use linear interpolation to extract the leading digits.

Since $b_{n+1}\le\zeta-2$ and all digits are below $\zeta-1$, it holds
\begin{align*}
    \sum_{k > n} b_k\zeta^{-k}
    &\le (\zeta-2)\zeta^{-n-1}+(\zeta-1)\sum_{k > n+1} \zeta^{-k}
    = \zeta^{-n}-\zeta^{-n-1},
\end{align*}
so every input with leading digits $\bb$ lies in the interval
\begin{align*}
    I_{\bb} = [t_{\bb}, t_{\bb} + \zeta^{-n}-\zeta^{-n-1}].
\end{align*}
Let $\bb\ne(\zeta-2,\dots,\zeta-2)$
and $k$ be the largest index with
$b_k<\zeta-2$. The word $\bb'$ following $\bb$ in lexicographic order is defined by $b_\ell' = b_\ell$ for $\ell < k$, $b_k' = b_k + 1$, and $b_{\ell}' = 0$ for $ \ell > k$. Since
\begin{align*}
    t_{\bb'} - t_{\bb}
    &= \zeta^{-k} - (\zeta-2)\sum_{m=k+1}^{n} \zeta^{-m}
    \ge \zeta^{-k} - (\zeta-1)\sum_{m=k+1}^{n} \zeta^{-m}
    = \zeta^{-n},
\end{align*}
consecutive intervals $I_{\bb}$ are separated by gaps of length at least
$\zeta^{-n-1}$.

\begin{figure}[tp]
\centering
\begin{tikzpicture}[font=\small, x=18cm, y=1cm]
    \definecolor{figblue}{RGB}{0,90,170}
    \definecolor{figbrown}{RGB}{185,75,0}
    \definecolor{figgold}{RGB}{170,125,0}
    \definecolor{figpurple}{RGB}{110,60,140}
    \tikzset{lab/.style={font=\scriptsize}}
    \def\ra{3.0} \def\rb{1.55} \def\rc{0.1}
    \foreach \a/\w in {0/00, 0.11111/01, 0.33333/10, 0.44444/11} {
        \fill[figblue!10] (\a, -0.05) rectangle ({\a+0.05556}, 4.15);
        \node[lab, black!60] at ({\a+0.02778}, 4.4) {$\w$};
    }
    \foreach \r/\lb in {\ra/{b_1}, \rb/{b_2}, \rc/{\text{remainder}}} {
        \draw[black!45] (0, \r) -- (0.53, \r);
        \node[lab, anchor=east] at (-0.03, {\r+0.35}) {$\lb$};
    }
    \foreach \r in {\ra, \rb} {
        \draw[black!45] (-0.005, {\r+1}) -- (0.005, {\r+1});
        \node[lab, anchor=east, black!60] at (-0.005, {\r+1}) {$1$};
    }
    \draw[black!45] (-0.005, {\rc+0.5}) -- (0.005, {\rc+0.5});
    \node[lab, anchor=east, black!60] at (-0.005, {\rc+0.5}) {$\tfrac12$};
    \draw[figblue, very thick]
        (0, \ra) -- (0.16667, \ra) -- (0.33333, {\ra+1}) -- (0.53, {\ra+1});
    \draw[figblue, very thick]
        (0, \rb) -- (0.05556, \rb) -- (0.11111, {\rb+1}) -- (0.16667, {\rb+1})
        -- (0.33333, \rb) -- (0.38889, \rb) -- (0.44444, {\rb+1}) -- (0.53, {\rb+1});
    \draw[figblue, very thick]
        (0, \rc) -- (0.05556, {\rc+0.5}) -- (0.11111, \rc) -- (0.16667, {\rc+0.5})
        -- (0.33333, \rc) -- (0.38889, {\rc+0.5}) -- (0.44444, \rc) -- (0.5, {\rc+0.5})
        -- (0.53, {\rc+0.5});
    \foreach \a/\lb in {0/0, 0.11111/\tfrac19, 0.33333/\tfrac13, 0.44444/\tfrac49, 0.5/\tfrac12} {
        \draw[black!45] (\a, \rc) -- (\a, {\rc-0.06});
        \node[lab, anchor=north] at (\a, {\rc-0.06}) {$\lb$};
    }
    \node[lab, anchor=west] at (0.535, \rc) {$\theta$};
\end{tikzpicture}
\caption{The function $h$ in the proof of \cref{prop:digit-extraction}
    for $\zeta = 3$ and $n = 2$. Admissible inputs $\theta$ lie in the four intervals
    $I_{\bb}$ (shaded). On each interval, the first two coordinates of $h$ are constant and the remainder coordinate is affine. Across the gaps, all coordinates are affine.}
\label{fig:digit-extraction}
\end{figure}
Define the function $h\colon [0,1] \to \R^{n+1}$ by
\begin{align*}
    h(\theta)
    &= \bigl(b_1, \dots, b_n, \zeta^n(\theta - t_{\bb})\bigr),
    \qquad \theta \in I_{\bb},
\end{align*}
using coordinatewise affine interpolation on the gaps between intervals, and constant extension after the last interval; see \cref{fig:digit-extraction}.
When crossing a gap, the first $n$ coordinates of $h$ change by at most
$\zeta-2$ and the last by at most $1$. Since every gap has length
at least $\zeta^{-n-1}$, the coordinate-wise slopes of $h$ are bounded as
\begin{align*}
    \max_{1 \le k \le n+1} |h_k'|
    &\le (\zeta-2)\zeta^{n+1}
    \le \zeta^{n+2}.
\end{align*}

Finally, there are $(\zeta-1)^n$ words $\bb$, hence $(\zeta-1)^n$
intervals $I_{\bb}$, each with two endpoints, where $h$ can change slope. Since moreover $h(0)=0$,
\cref{lem:linear-spline} with $A=\zeta^{n+2}$ gives a network $\phi$ that
equals $h$ on $[0,1]$ and satisfies
\begin{align*}
    \phi
    &\in \Fcal_{1,n+1}\bigl(2(\zeta-1)^n, 1, 2\zeta^{n+2}\bigr)
    \subseteq \Fcal_{1,n+1}\bigl(2\zeta^n, 1, 2\zeta^{n+2}\bigr). \tag*{\qedsymbol}
\end{align*}

\subsection{Proof of Proposition~\ref{prop:point-fitting-diff}}
\label{sec:point-fitting-proof}

As outlined in \cref{sec:higher-order-point-fitting}, rounding the sequence $y_i$ to $\lambda \floor{y_i/\lambda}$ makes every scaled
highest-order difference
\begin{align*}
    d_j=\lambda^{-1}\Delta^h\bar y_j
\end{align*}
an integer in $[-H,H]$. We store the digits $d_j + H \in \{0, \dots, 2H\}$ of
a single number in base $2H+2$ and read them with \cref{prop:digit-extraction}.
The following lemma reconstructs the rounded sequence values by extracting the digits and evaluating the discrete Taylor formula recursively.

\begin{lemma}\label{lem:block-evaluator}
    Let $\lambda\in(0,1)$ and $h,N,L,q\in\N_+$, set $H=2^{h+2}$ and
    $\zeta=2H+2$, and suppose that $q\le\floor{\log_\zeta N}$ and
    $N\ge\zeta$. There is a network
    \begin{align*}
        \phi_{\rm dec}
         & \in\wt\Fcal_{h+2,1}(N,L,N)
    \end{align*}
    such that the following holds. Let $1\le T\le Lq$ and let
    $\bar y_1,\dots,\bar y_T$ be integer multiples of $\lambda$ satisfying the
    following bounds:
    \begin{align*}
        \max_{0\le r\le\min\{h-1,T-1\}}
        \max_{1\le j\le T-r}|\Delta^r\bar y_j|
         & \le H,
         & \max_{1\le j\le T-h}|\Delta^h\bar y_j|
         & \le H\lambda.
    \end{align*}
    Then there are $u_0,\dots,u_{h-1}\in[-H,H]$ and
    $\theta\in[0,1]$ such that
    \begin{align*}
        \phi_{\rm dec}(k,u_0,\dots,u_{h-1},\theta)
         & =\bar y_k,\qquad k=1,\dots,T.
    \end{align*}
    The constant hidden in $\wt\Fcal$ depends only on $h$.
\end{lemma}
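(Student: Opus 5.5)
We construct $\phi_{\rm dec}$ by unrolling the segmented Taylor--Newton recursion of \cref{sec:higher-order-point-fitting}. Set $a_j=\bar y_j/\lambda\in\Z$ and $d_j=\Delta^h a_j\in[-H,H]\cap\Z$ for $j\le T-h$, with $d_j:=0$ for $j>T-h$. Let $u_r=\Delta^r\bar y_1$ for $r\le\min\{h-1,T-1\}$, and extend the sequence by zero differences otherwise, so that $|u_r|\le H$. Store the digits as $\theta=\sum_{j\ge1}(d_j+H)\zeta^{-j}\in[0,1]$; they lie in $\{0,\dots,2H\}=\{0,\dots,\zeta-2\}$. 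We work with the real-valued state $\bu^{(t)}=(\Delta^0\bar y_{tq+1},\dots,\Delta^{h-1}\bar y_{tq+1})$, which equals $\lambda$ times the integer state of the sketch. The two recursions are then linear in $(\bu^{(t)},\lambda d_{tq+1},\dots,\lambda d_{tq+q})$ with binomial coefficients at most $\binom{q}{h}\lesssim q^{h}\le(\log_\zeta N)^h\le N$ for large $N$; small $N$ is absorbed into the constants.

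The network runs $L$ stages, where stage $t=0,\dots,L-1$ has the following structure:
\begin{enumerate}[(a)]
\item Apply \cref{prop:digit-extraction} with $n=q$ to the current remainder $\theta^{(t)}$. This yields $d_{tq+i}+H$ for $i=1,\dots,q$ and the next remainder $\theta^{(t+1)}$, using width $2\zeta^q\le2N$, one hidden layer, and weights $2\zeta^{q+2}\lesssim N$.
\item Form the candidate values $\bar y_{tq+i}$, $i=1,\dots,q$, and the updated state $\bu^{(t+1)}$ as fixed affine combinations. The weights are binomials, constants $\pm H\lambda$, and a factor $\lambda\le1$ on the digits.
\item Select the output using $k$. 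With \cref{lem:indicator}(i),(iii), compute $\sum_i\ind_{k=tq+i}\,\bar y_{tq+i}$ and add it to a running output. The indices $tq+i$ are at most $Lq$, so the selection is applied instead to $k-tq$, compared against the fixed labels $i\in\{1,\dots,q\}$. We carry $k-tq$ forward by subtracting $q$ at each stage, which keeps the parameters bounded by $\max\{q,qH\}\lesssim N$.
\end{enumerate}
Throughout, the quantities $k$, the state, the remainder, and the accumulator are carried via $\sigma(\pm\cdot)$ identity layers. Each stage has width $O(q+h)+2N=O(N)$ and depth $O(1)$, so the total depth is $O(L)$. Composition via \cref{lem:network-calculus} keeps all parameters $\lesssim N$ with constants depending only on $h$, which gives $\wt\Fcal_{h+2,1}(N,L,N)$.

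Correctness follows from the two Taylor--Newton identities displayed in \cref{sec:higher-order-point-fitting}. These hold exactly for any sequence, applied to the shifted sequences $(\bar y_{tq+j})_j$ and $(\Delta^r\bar y_{tq+j})_j$. Induction on $t$ shows that $\bu^{(t)}$ is exact, and hence the selected value is $\bar y_k$. Positions beyond $T$ only receive padded zero digits and never contaminate values with index $\le T$, since the identities use only $d_j$ with $j\le i-h$.

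The main obstacle is magnitude control inside step (c). The gate of \cref{lem:indicator}(ii) requires a bound $C$ on the candidate values, so we need $|\bar y_{tq+i}|\lesssim N$. We get this a priori from the recursion itself: a Newton-form expansion from the segment start gives $|\bar y_{tq+i}|\le\sum_r\binom{q}{r}|u_r^{(t)}|+\binom{q}{h}qH\lambda$. The state norms must also stay bounded uniformly in $t$, and this uses the hypothesis $|\Delta^r\bar y_j|\le H$ at every position, since the state consists of true differences of the sequence. Hence every intermediate quantity is $O(q^h)$, and the parameter bound $N$ holds independently of $L$.
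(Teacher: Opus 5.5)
Your construction is the paper's: the same digit stream from \cref{prop:digit-extraction} with $n=q$, the same segment formulas for the values and the state update with binomial weights of order $q^{h-1}\lesssim N$, and the same carried shifted index (the paper carries $\sigma(k-tq)$ instead of $k-tq$). The selection is also the same, gated through \cref{lem:indicator}. Everything up to your last paragraph is fine. That paragraph, however, argues something false, and fortunately unnecessary.

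The hypothesis $|\Delta^r\bar y_j|\le H$ is only available for $j\le T-r$. The network, however, always runs all $L$ segments, and beyond $T$ the sequence is the extension with vanishing $h$th differences, i.e.\ a polynomial extrapolation of degree $h-1$. Its states are not bounded uniformly in $t$. For example, take $h=2$, $T=2$, $\bar y_1=0$, and $\bar y_2=a\in[H/2,H]$ a multiple of $\lambda$. Then the extension is $\bar y_j=(j-1)a$, so $\bu^{(t)}=(tqa,a)$. The candidates in late segments are therefore of order $LqH$, which no bound $C\lesssim N$ independent of $L$ controls. So the claim that every intermediate quantity is $O(q^h)$ fails.

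What actually makes the construction work is the clause $\phi_{\rm gate}(y,0)=0$ for every $y\in\R$ in \cref{lem:indicator}(ii), which you did not use. Unselected candidates are annihilated whatever their size, and activation sizes never enter the parameter bounds. The only candidate passing an open gate is the one with $k-tq=i$. It equals $\bar y_k$, so $|\bar y_k|\le H$. The inputs with nonzero weight in its row are $u^{(t)}_r=\Delta^r\bar y_{tq+1}$ with $r<i$ and $d_{tq+j}$ with $tq+j\le k-h$. These are all genuine differences at positions at most $k\le T$, hence in $[-H,H]$. Thus $C=H$ suffices in \cref{lem:indicator}(iii). This is exactly the paper's argument, and replacing your last paragraph by it completes the proof.
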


\begin{figure}[t]
    \centering
    \begin{tikzpicture}[>=Stealth, font=\small,
    box/.style={draw, rounded corners=2pt, minimum height=0.6cm, minimum width=1.0cm,
                inner sep=3pt, align=center, fill=white},
    upd/.style={box, minimum width=5.4cm, minimum height=1.0cm},
    lab/.style={font=\scriptsize, align=center},
    wire/.style={->},
    bus/.style={preaction={draw=white, line width=3pt}}]

    \begin{scope}
        \node[anchor=south west, font=\small] at (0, 0.7)
            {(a) $\phi_{\rm dec}$: $L$ segment networks in sequence};
        \node[anchor=east] (in) at (4.4, 0) {$\bigl(0,\ k,\ \bu,\ \theta_0\bigr)$};
        \node[box] (b0) at (5.5, 0) {$\phi_{\rm seg}$};
        \node[box] (b1) at (7.6, 0) {$\phi_{\rm seg}$};
        \node (dd) at (9.2, 0) {$\cdots$};
        \node[box] (bL) at (10.8, 0) {$\phi_{\rm seg}$};
        \node[anchor=west] (out) at (11.9, 0) {$[\,\cdot\,]_1 = \bar y_k$};
        \draw[wire] (in) -- (b0);
        \draw[wire] (b0) -- (b1);
        \draw[wire] (b1) -- (dd);
        \draw[wire] (dd) -- (bL);
        \draw[wire] (bL) -- (out);
        \node[lab, above=0.02cm of b0] {$t = 0$};
        \node[lab, above=0.02cm of b1] {$t = 1$};
        \node[lab, above=0.02cm of bL] {$t = L - 1$};
        \node[lab, anchor=north] at (7.4, -0.45)
            {state entering segment $t$:
             $\bigl(\bar y_k\ind\{k \le tq\},\ \sigma(k - tq),\ \bu^{(t)},\ \theta_t\bigr)$};
    \end{scope}

    \begin{scope}[shift={(0, -7.8)}]
        \node[anchor=south west, font=\small] at (0, 6.1)
            {(b) $\phi_{\rm seg}$: segment $t$ holds positions $tq + 1, \dots, tq + q$};
        \coordinate (kin) at (0.9, 5.5);
        \coordinate (zin) at (0.9, 4.1);
        \coordinate (uin) at (0.9, 2.7);
        \coordinate (tin) at (0.9, 0.85);
        \node[anchor=east] at (kin) {$k$};
        \node[anchor=east] at (zin) {$z$};
        \node[anchor=east] at (uin) {$\bu^{(t)}$};
        \node[anchor=east] at (tin) {$\theta_t$};
        \node[box, minimum width=1.6cm] (K) at (6.9, 5.5) {$\sigma(k - q)$};
        \node[upd] (Z) at (6.9, 4.1) {$z + \ind\{1 \le k \le q\}\,\phi_Y(i_q(k), \bu^{(t)}, \bd^{(t)})$};
        \node[upd] (U) at (6.9, 2.7) {$\phi_U(\bu^{(t)}, \bd^{(t)})$};
        \node[upd] (S) at (6.9, 0.85) {$\phi_{\rm stream}(\theta_t)$};
        \draw[wire] (kin) -- (K.west);
        \draw[wire] (zin) -- (Z.west);
        \draw[wire] (uin) -- (U.west);
        \draw[wire] (tin) -- (S.west |- tin);
        \draw[wire] (K.east) -- ++(0.6, 0);
        \draw[wire] (Z.east) -- ++(0.6, 0);
        \draw[wire] (U.east) -- ++(0.6, 0);
        \draw[wire] (S.east |- tin) -- ++(0.6, 0);
        \fill (3.3, 5.5) circle (1.3pt);
        \draw[bus] (3.3, 5.5) -- (3.3, 4.45);
        \draw[wire] (3.3, 4.45) -- (Z.west |- 0, 4.45);
        \node[lab, anchor=east] at (3.2, 4.95) {$k$};
        \fill (1.4, 2.7) circle (1.3pt);
        \draw[wire, bus] (1.4, 2.7) -- (1.4, 3.9) -- (Z.west |- 0, 3.9);
        \node[lab, anchor=west] at (1.45, 3.25) {$\bu^{(t)}$};
        \draw[wire] (S.north -| 9.2, 0) -- (U.south -| 9.2, 0);
        \fill (9.2, 1.6) circle (1.3pt);
        \draw[wire, bus] (9.2, 1.6) -- (3.75, 1.6) -- (3.75, 3.7) -- (Z.west |- 0, 3.7);
        \node[lab, anchor=south west] at (3.95, 1.5) {$\bd^{(t)} = (d_{tq+1}, \dots, d_{tq+q})$};
        \node[lab, anchor=west] at (10.3, 5.5) {positions still ahead};
        \node[lab, anchor=west] at (10.3, 4.1) {$= z + \bar y_k \ind_{1 \le k \le q}$};
        \node[lab, anchor=west] at (10.3, 2.7) {$= \bu^{(t+1)}$};
        \node[lab, anchor=west] at (10.3, 0.85) {$\theta_{t+1}$};
    \end{scope}
\end{tikzpicture}
    \caption{The decoder $\phi_{\rm dec}$ of \cref{lem:block-evaluator}. (a) It consists of
        $L$ copies of one segment network acting on the state
        $(z, k, \bu, \theta)$, with each segment covering $q$ positions.
        (b) Inside a segment, $\phi_{\rm stream}$ from
        \cref{prop:digit-extraction} decodes the $q$ stored $h$-th differences
        $\bd$, the networks $\phi_Y$ and $\phi_U$ evaluate the discrete Taylor
        formulas $Y$ and $U$, and the indicator $\ind_{1 \le k \le q}$ decides whether the segment writes the requested value $\bar y_k$. }
    \label{fig:block-evaluator}
\end{figure}

\begin{proof}
    We evaluate the discrete Taylor formula in segments of $q$
    positions at a time; see \cref{fig:block-evaluator}. We first derive the
    segment recursion, then construct networks for its components, and
    compose them at the end.

    \medskip
    \noindent\textit{Step 1: Segment recursion}

    \medskip
    \noindent
    Extend $(\bar y_j)_{j\le T}$ to a
    sequence $(\bar y_j)_{j\ge1}$ by requiring $\Delta^r\bar y_1=0$ for
    $T\le r<h$ and $\Delta^h\bar y_j=0$ for $j>T-h$. This determines
    $\bar y_j$ for $j>T$ recursively from the following values. For $r<h$ and $j\in\N_+$, set
    \begin{align*}
        u_r
        &=\Delta^r\bar y_1,
        &
        d_j
        &=\lambda^{-1}\Delta^h\bar y_j.
    \end{align*}
    The decoder walks through $L$ segments of $q$ positions each. Segment $t$ holds the
    positions $tq+1,\dots,tq+q$, the differences
    $\bd^{(t)}=(d_{tq+1},\dots,d_{tq+q})$, and the state
    $\bu^{(t)}=(\Delta^r\bar y_{tq+1})_{r<h}$, with
    $\bu^{(0)}=\bu=(u_0,\dots,u_{h-1})$. Applied to the shifted sequence
    $(\bar y_{tq+j})_j$, \eqref{eq:discrete-taylor} gives
    $\bar y_{tq+i}=Y(i,\bu^{(t)},\bd^{(t)})$ and $\bu^{(t+1)}=U(\bu^{(t)},\bd^{(t)})$,
    where
    \begin{align*}
        Y(i,\bu,\bd)
         & =\sum_{r=0}^{h-1}\binom{i-1}{r}u_r
        +\lambda\sum_{j=1}^{i-h}\binom{i-j-1}{h-1}d_j,
         & i                                  & =1,\dots,q,
        \\
        U_r(\bu,\bd)
         & =
        \sum_{p=0}^{h-r-1}\binom{q}{p}u_{r+p}
        +\lambda\sum_{j=1}^{q-h+r+1}
        \binom{q-j}{h-r-1}d_j,
         & r                                  & =0,\dots,h-1.
    \end{align*}
    For $U$, this follows by applying \eqref{eq:discrete-taylor}
    with order $h-r$ to the sequence $(\Delta^r\bar y_{tq+j})_j$ at $j=q+1$.

    \medskip
    \noindent\textit{Step 2: Component networks}

    \medskip
    \noindent
    Four components are needed to implement the segment recursion:
    \begin{itemize}
        \item \emph{Digit stream.} Define
        \begin{align*}
            \theta_t
            &=\sum_{\ell=1}^\infty (d_{tq+\ell}+H)\,\zeta^{-\ell}
            \in[0,1),
            \qquad t\in\N_0.
        \end{align*}
        In particular, $\theta_0$ stores all differences $d_1,d_2,\dots$.
        \Cref{prop:digit-extraction}, with $H$ subtracted from its first $q$
        outputs, gives a network
        \begin{align*}
            \phi_{\rm stream}
             & \in\Fcal_{1,q+1}(2\zeta^q,1,2\zeta^{q+2}),
             & \phi_{\rm stream}(\theta_t)
             & =\bigl(\bd^{(t)},\theta_{t+1}\bigr),
            \qquad t\in\N_0.
        \end{align*}
        We have $\zeta^q\le N$, and $\zeta^2$ depends only on $h$.

        \item \emph{Taylor formulas.} By \cref{lem:wsum} with $C=H$, and noting that $q^hH\lesssim N$, the maps $Y$ and $U$ are realized by networks
        \begin{align*}
            \phi_Y
            &\in\wt\Fcal_{h+q+1,1}(q,1,N),
            &
            \phi_U
            &\in\wt\Fcal_{h+q,h}(q,1,N).
        \end{align*}

        \item \emph{Segment selection.} For $k\in\N_0$, define
        \begin{align*}
            i_q(k) & =1+\sigma(k-1)-\sigma(k-q)
            = \clip_{[1, q]}(k),              \\
            a_q(k) & =\sigma(k)-\sigma(k-1)
            -\sigma(k-q)+\sigma(k-q-1)=\ind_{1\le k\le q}.
        \end{align*}
        Thus
        $\phi_{\rm sel}(k)=(i_q(k),a_q(k))$ is realized by
        $\phi_{\rm sel}\in\wt\Fcal_{1,2}(1,1,q)$.

        \item \emph{Gate.} By \cref{lem:ind-product}, there is
        \begin{align*}
            \phi_{\rm gate}
             & \in\wt\Fcal_{2,1}(1,1,H),
             & \phi_{\rm gate}(y,\delta)
             & =\delta y,\qquad |y|\le H,\ \delta\in\{0,1\},
        \end{align*}
        and $\phi_{\rm gate}(y,0)=0$ for every $y\in\R$.
    \end{itemize}

    \medskip
    \noindent\textit{Step 3: Segment network and decoder}

    \medskip
    \noindent
    Writing $\phi_{\rm stream}(\theta)=(\bd,\theta')$ and
    $\phi_{\rm sel}(k)=(i,a)$, define
    \begin{align*}
        \phi_{\rm seg}(z,k,\bu,\theta)
         & =\bigl(z+ a\phi_Y(i,\bu,\bd),\sigma(k-q),
        \phi_U(\bu,\bd),\theta'\bigr),
    \end{align*}
    where $\phi_{\rm gate}$ implements the product $a\phi_Y(i,\bu,\bd)$. Since
    \begin{align*}
        a_q(\sigma(k-tq))
         & =\ind_{t=\floor{(k-1)/q}}, \qquad t=0,\dots,L-1,
    \end{align*}
    exactly one segment contributes to the first coordinate. Since $\phi_U$
    is exact, $\bu$ equals $\bu^{(t)}$ in every segment. In the segment
    containing $k$, let $i=k-tq$. Then $Y(i,\bu^{(t)},\bd^{(t)})$ only uses
    $u_r=\Delta^r\bar y_{tq+1}$ with $r<i$ and $d_j$ with $tq+j\le k-h$.
    These are differences of $\bar y$ at positions at most $k\le T$, so they
    lie in $[-H,H]$. Hence $\phi_Y$ is exact there, and the contribution is
    $\bar y_k$. In the
    other segments, $\bu$ may leave $[-H,H]^h$, but there $a=0$ and
    $\phi_{\rm gate}(y,0)=0$ for every $y$, so they do not contribute.
    All components have width and weights of order at most $N$
    by Step~2, so \cref{lem:network-calculus} yields
    $\phi_{\rm seg}\in\wt\Fcal_{h+3,h+3}(N,1,N)$.

    We define the final network $\phi_{\rm dec}$ as the $L$-fold composition of $\phi_{\rm seg}$. Set $\theta=\theta_0$. By Steps~1 and~3,
    \begin{align*}
        \phi_{\rm dec}(k,u_0,\dots,u_{h-1},\theta)
         & =\left[
            \phi_{\rm seg}^{\circ L}
            \bigl(0,k,\bu,\theta\bigr)
            \right]_1 = \bar y_k.
    \end{align*}
    This gives
    $\phi_{\rm dec}\in\wt\Fcal_{h+2,1}(N,L,N)$.
\end{proof}

\begin{figure}[t]
    \centering
    \begin{tikzpicture}[>=Stealth, font=\small,
    box/.style={draw, rounded corners=2pt, minimum height=0.6cm, minimum width=1.05cm,
                inner sep=3pt, align=center, fill=white},
    lab/.style={font=\scriptsize, align=center},
    wire/.style={->}]
    \definecolor{figblue}{RGB}{0,90,170}
    \definecolor{figbrown}{RGB}{185,75,0}
    \definecolor{figgold}{RGB}{170,125,0}
    \definecolor{figpurple}{RGB}{110,60,140}

    \begin{scope}[x=9.0cm, y=0.32cm]
        \node[anchor=south west, font=\small] at (-0.02, 5.6)
            {(a) block index $\phi_{\rm block}$ and position $\phi_{\rm pos}$ within the block ($J = 12$, $Lq = 4$, $M = 3$)};
        \draw[->] (-0.01, 0) -- (1.06, 0) node[anchor=west] {$x$};
        \foreach \i in {1,...,12} {
            \draw ({\i/12}, 0.25) -- ({\i/12}, -0.25);
            \node[font=\scriptsize, anchor=north] at ({\i/12}, -0.35) {$\i$};
        }
        \node[font=\scriptsize, anchor=north] at (0.0, -0.35) {$i = $};
        \foreach \v in {1,2,3,4} {
            \draw (-0.005, \v) -- (0.005, \v);
            \node[font=\scriptsize, anchor=east] at (-0.01, \v) {$\v$};
        }
        \draw[figblue, thick] (0, 1) -- (4/12, 1) -- (5/12, 2) -- (8/12, 2) -- (9/12, 3) -- (1, 3);
        \draw[figbrown, thick] (0, 0) -- (4/12, 4) -- (5/12, 1) -- (8/12, 4) -- (9/12, 1) -- (1, 4);
        \foreach \i/\m/\k in {1/1/1,2/1/2,3/1/3,4/1/4,5/2/1,6/2/2,7/2/3,8/2/4,9/3/1,10/3/2,11/3/3,12/3/4} {
            \fill[figblue] ({\i/12}, \m) circle (1.6pt);
            \fill[figbrown] ({\i/12}, \k) circle (1.6pt);
        }
        \node[anchor=west, font=\scriptsize, figblue] at (1.07, 3.9) {$\phi_{\rm block}(i/J) = m$ (block index)};
        \node[anchor=west, font=\scriptsize, figbrown] at (1.07, 2.6) {$\phi_{\rm pos}(i/J) = k$ (position in block)};
        \node[anchor=west, font=\scriptsize] at (1.07, 1.3) {$i = (m - 1)Lq + k$};
        \foreach \m/\a/\b in {1/1/4, 2/5/8, 3/9/12} {
            \draw[decorate, decoration={brace, mirror, amplitude=3pt}]
                ({\a/12-0.01}, -1.35) -- ({\b/12+0.01}, -1.35);
            \node[font=\scriptsize, anchor=north] at ({(\a+\b)/24}, -1.6) {block $m = \m$};
        }
    \end{scope}

    \begin{scope}[shift={(0, -2.3)}]
        \node[anchor=south west, font=\small] at (-0.02, 0.45)
            {(b) network $\phi(x) = \phi_{\rm dec}\bigl(\phi_{\rm pos}(x), \phi_{u,0}(\phi_{\rm block}(x)), \dots, \phi_{u,h-1}(\phi_{\rm block}(x)), \phi_\theta(\phi_{\rm block}(x))\bigr)$};
        \node (x) at (0.3, -1.4) {$x = \dfrac{i}{J}$};
        \node[box] (phim) at (2.3, -0.5) {$\phi_{\rm block}$};
        \node[box] (phik) at (2.3, -3.1) {$\phi_{\rm pos}$};
        \node[box] (u0) at (5.2, 0.05) {$\phi_{u,0}$};
        \node[box] (u1) at (5.2, -0.7) {$\phi_{u,1}$};
        \node (dots) at (5.2, -1.3) {$\vdots$};
        \node[box] (uh) at (5.2, -1.9) {$\phi_{u,h-1}$};
        \node[box] (pt) at (5.2, -2.65) {$\phi_\theta$};
        \node[box, minimum height=2.3cm, minimum width=1.5cm] (dec) at (8.6, -1.4) {$\phi_{\rm dec}$};
        \node (out) at (11.0, -1.4) {$\bar y_{m,k}$};
        \draw[wire] (x) -- (phim);
        \draw[wire] (x) -- (phik);
        \draw[wire] (phim) -- (phik) node[pos=0.6, right, lab] {$m$};
        \foreach \n in {u0,u1,uh,pt} {
            \draw[wire] (phim.east) -- (\n.west);
        }
        \draw[wire] (u0.east) -- (dec.west |- u0.east) node[pos=0.45, above, lab] {$u_{m,0}$};
        \draw[wire] (u1.east) -- (dec.west |- u1.east) node[pos=0.45, above, lab] {$u_{m,1}$};
        \draw[wire] (uh.east) -- (dec.west |- uh.east) node[pos=0.45, above, lab] {$u_{m,h-1}$};
        \draw[wire] (pt.east) -- (dec.west |- pt.east) node[pos=0.45, above, lab] {$\theta_m$};
        \draw[wire] (phik.east) -- ++(0.7, 0) |- (8.6, -4.85) -- (dec.south);
        \node[lab, anchor=south west] at (2.95, -3.0) {$k$};
        \draw[wire] (dec.east) -- (out) node[midway, above, lab] {};
        \node[lab, anchor=north, text width=3.8cm] at (5.2, -3.75) {block parameters};
        \node[lab, anchor=north, text width=3.2cm] at (8.67, -2.7) {shared  decoder};
    \end{scope}
\end{tikzpicture}
    \caption{The point-fitting network of \cref{prop:point-fitting-diff}. (a) The
        indices $i = 1, \dots, J$ are grouped into $M$ blocks of length $Lq$;
        the staircase $\phi_{\rm block}$ returns the block index and the sawtooth $\phi_{\rm pos}$
        the position within the block. (b) The block parameters $u_{m, 0}, \dots, u_{m, h-1}$, $\theta_m$ are retrieved from the block index and, together with $k$, fed into the
        decoder $\phi_{\rm dec}$ of \cref{lem:block-evaluator}.}
    \label{fig:point-fitting}
\end{figure}

    From here, we prove \cref{prop:point-fitting-diff} in two steps. We first split the sequence into blocks
    of length at most $Lq$ and construct networks recovering the block index
    and the position within the block from the input $i/J$. We then fit the
    values of all blocks with a single decoder whose parameters are
    interpolated across blocks. \Cref{fig:point-fitting} shows the layout of the network.

    \medskip
    \noindent\textit{Step 1: Block decomposition}

    \medskip
    \noindent
    Define
    \begin{align*}
        M=\ceil{J/(Lq)}\le3N^2L,
        \qquad
        T_m=\min\{Lq,J-(m-1)Lq\},
        \quad m=1,\dots,M.
    \end{align*}
    Every $i=1,\dots,J$ can be written uniquely as
    \begin{align*}
        i=(m-1)Lq+k,
        \qquad m=1,\dots,M,
        \quad k=1,\dots,T_m,
    \end{align*}
    and we write $y_{m,k}=y_{(m-1)Lq+k}$. Every block is a consecutive segment
    of $(y_i)$, so the assumed difference bounds hold within each block.

    Let $\phi_{\rm block}$ be the continuous staircase that is constant with value $m$
    between the sample points in the $m$-th block and affine between
    $mLq/J$ and $(mLq+1)/J$. It has at most $2M$ knots and satisfies

    \begin{align*}
        \phi_{\rm block}(i/J)=\left\lfloor\frac{i-1}{Lq}\right\rfloor+1,
        \qquad i=1,\dots,J.
    \end{align*}
    The knot separation is at least $1/J$ and the output
    is bounded by $M$. Since $M\le3N^2L$, $J\le3N^2L^2q$,
    and $q\le N$, these quantities satisfy the polynomial bounds of
    \cref{prop:pw-linear}. Applying that proposition with width and depth
    parameters $2N,2L$ allows for the $2M$ knots and gives
    $\phi_{\rm block}\in\wt\Fcal_{1,1}(N,L,N)$. Moreover,
    \begin{align*}
        \log_N J+\log_N(Lq) & \lesssim L,
    \end{align*}
    so \cref{lem:amplification} implements the two scalar multiplications in
    \begin{align*}
        \phi_{\rm pos}(x)=Jx-(\phi_{\rm block}(x)-1)Lq.
    \end{align*}
    Then $\phi_{\rm pos}\in\wt\Fcal_{1,1}(N,L,N)$, $\phi_{\rm block}(i/J)=m$, and $\phi_{\rm pos}(i/J)=k$
    when $i=(m-1)Lq+k$.

    \medskip
    \noindent\textit{Step 2: Fitting the blocks}

    \medskip
    \noindent
    Round each block separately by
        $\bar y_{m,k}=\lambda\floor{y_{m,k}/\lambda},$
    so that $|y_{m,k} - \bar y_{m,k}| \le \lambda$. By \eqref{eq:differences},
    rounding changes an $r$th difference by at most $2^r\lambda$.
    The assumed difference bounds and $\lambda < 1$ therefore give
    \begin{align*}
        |\Delta^r\bar y_{m,k}|
        &\le 1 + 2^r\lambda \le 2^{r+1}, \qquad r = 0, \dots, h-1, \\
        |\Delta^h\bar y_{m,k}|
        &\le (1 + 2^h)\lambda \le 2^{h+1}\lambda.
    \end{align*}
    Since $T_m\le Lq$ and $2^{h+1}\le H$, \cref{lem:block-evaluator} applies
    to each block. It gives a common decoder $\phi_{\rm dec}$ and values
    $u_{m,0},\dots,u_{m,h-1}$ in $[-H,H]$ and $\theta_m$ in $[0,1]$
    such that
    \begin{align}
        \label{eq:block-decoder-identity}
        \phi_{\rm dec}(k,u_{m,0},\dots,u_{m,h-1},\theta_m)
         & =\bar y_{m,k},\qquad k=1,\dots,T_m.
    \end{align}
    By \cref{prop:pw-linear}, there are networks
    \begin{align*}
        \phi_{u,0},\dots,\phi_{u,h-1},\phi_\theta
        \in\wt\Fcal_{1,1}(N,L,N)
    \end{align*}
    interpolating these $h+1$ quantities at
    $m=1,\ldots,M$.
    The desired network is
    \begin{align*}
        \phi(x)=\phi_{\rm dec}\bigl(\phi_{\rm pos}(x),
        \phi_{u,0}(\phi_{\rm block}(x)),\dots,\phi_{u,h-1}(\phi_{\rm block}(x)),
        \phi_\theta(\phi_{\rm block}(x))\bigr).
    \end{align*}
    For $i=(m-1)Lq+k$, Step~1 gives $\phi_{\rm block}(i/J)=m$ and $\phi_{\rm pos}(i/J)=k$, so
    \eqref{eq:block-decoder-identity} and the rounding bound yield
    \begin{align*}
        \max_{1\le i\le J}\left|\phi(i/J)-y_i\right|
         & =\max_m\max_k\left|\bar y_{m,k}-y_{m,k}\right|
        \le \lambda.
    \end{align*}
    By \cref{lem:network-calculus}, the composition belongs to
    $\wt\Fcal_{1,1}(N,L,N)$, which proves the claim. \qedsymbol

\subsection{Proof of Proposition~\ref{prop:smooth-flattening}}
\label{sec:smooth-flattening-proof}

Write $d=D$, $t=s-r$, and $\theta=t-h+1\in(0,1]$.

Fix a multi-index $\balpha$ and a grid resolution $K$. The coefficient values
$\partial^{\balpha}f(\bbeta/K)$ form an array indexed by
$\bbeta\in\{0,\dots,K-1\}^d$. For each fixed tuple
$(\beta_2,\dots,\beta_d)$, collect the $K$ coefficients obtained by varying
$\beta_1$. We call this ordered collection a row. Thus, there are $K^{d-1}$
rows, each containing $K$ coefficient values. We label the rows by the
base-$K$ index
\begin{align*}
\nu(\bbeta)=\sum_{i=2}^d\beta_iK^{i-2}.
\end{align*}
We place the start of row $\nu$ at index $2K\nu$ and define the flattened
index of $\bbeta$ by
\begin{align*}
\kappa(\bbeta)=\beta_1+2K\nu(\bbeta).
\end{align*}
The spacing between consecutive row starts reserves a block of indices for a
smooth transition; see \cref{fig:flattening}(a).

We first consider $d=2$ and $K\ge h+1$, which is the situation
illustrated in \cref{fig:flattening}. The cases $d=1$ and $d>2$ are
discussed at the end.
Note that fixing $\beta_2=\nu$ and varying $\beta_1$ produces the row
\begin{align*}
    \left\{
    \partial^{\balpha}f(\beta_1/K,\nu/K):
    \beta_1=0,\dots,K-1
    \right\},
    \qquad \nu=0,\dots,K-1.
\end{align*}
Let $\be_1=(1,0)^\top\in\R^2$ be the first standard basis vector and
define the corresponding row function by
\begin{align*}
    F_\nu(u)
     & =\partial^{\balpha}f(u,\nu/K),
    \qquad u\in[0,1].
\end{align*}
For $\ell=0,\dots,h-1$, differentiation gives
\begin{align*}
    F_\nu^{(\ell)}(u)
     & =\partial^{\balpha+\ell\be_1}f(u,\nu/K).
\end{align*}
As $f \in \Hcal_1^{s}([0,1]^d)$ and with the identities $t=s-r$ and
$\theta=t-h+1$, it follows that $\|F_\nu^{(\ell)}\|_\infty\le1$ and
\begin{align*}
    |F_\nu^{(h-1)}(u)-F_\nu^{(h-1)}(v)|
     & \le |u-v|^\theta,
    \qquad u,v\in[0,1].
\end{align*}

For $\nu=0,\dots,K-2$, let $H_\nu$ be the polynomial of degree at
most $2h-1$ satisfying
\begin{align*}
    H_\nu^{(\ell)}(0)
     & =F_\nu^{(\ell)}(1),
     & H_\nu^{(\ell)}(1)
     & =F_{\nu+1}^{(\ell)}(0),
    \qquad \ell=0,\dots,h-1.
\end{align*}
The Hermite interpolation map on $[0,1]$ is a linear map from the
$2h$ endpoint derivatives to the space of polynomials of degree at most
$2h-1$. Since this space is finite-dimensional and the endpoint
derivatives are bounded by one, $ \max_{0\le\ell\le h}\|H_\nu^{(\ell)}\|_\infty
    \lesssim 1$ uniformly in $\nu$.

Place the row functions on the even unit intervals and the bridges on the
intervening odd unit intervals by defining
\begin{align*}
    G(2\nu+u)
     & =F_\nu(u),
     &            & \nu=0,\dots,K-1, \\
    G(2\nu+1+u)
    & =H_\nu(u),
    &            & \nu=0,\dots,K-2,
\end{align*}
for $u\in[0,1]$. The endpoint conditions on $H_\nu$ make $G$ well
defined. In fact,
\begin{align*}
    G&\in C^{h-1}([0,2K-1]),
    &
    \max_{0\le\ell\le h-1}\|G^{(\ell)}\|_\infty
    &\lesssim 1.
\end{align*}
On every row interval, $G^{(h-1)}$ is $\theta$-H\"older. On every
bridge interval, the bound on $H_\nu^{(h)}$ makes it Lipschitz and hence
$\theta$-H\"older. Because $h/K<1$, two points at distance at most
$h/K$ cross at most one boundary between pieces. Inserting that boundary
point and using the matching derivatives therefore gives
\begin{align*}
    \sup_{\substack{u,v\in[0,2K-1] \\|u-v|\le h/K}}
    |G^{(h-1)}(u)-G^{(h-1)}(v)|
     & \lesssim K^{-\theta}.
\end{align*}

In the case $d=2$, we have $M-1=K(2K-1)$. Set
\begin{align*}
    y_i
     & =G(i/K),
    \qquad i=0,\dots,M-1.
\end{align*}
For every $\bbeta\in\{0,\dots,K-1\}^2$,
\begin{align*}
    \frac{\kappa(\bbeta)}{K}
    =2\beta_2+\frac{\beta_1}{K}.
\end{align*}
Consequently,
\begin{align*}
    y_{\kappa(\bbeta)}
    =F_{\beta_2}(\beta_1/K)
    =\partial^{\balpha}f(\bbeta/K),
\end{align*}
which proves \eqref{eq:smooth-flattening-interpolation}. The spacing
$2K$ between consecutive row starts leaves precisely the indices needed
to sample the intervening bridge.

To prove \eqref{eq:smooth-flattening-differences}, set $\delta=1/K$.
The case $\ell=0$ follows from $\|G\|_\infty\lesssim1$. For
$\ell=1,\dots,h-1$, iterating the fundamental theorem of calculus gives
the integral representation
\begin{align*}
    \Delta^\ell y_i
    =\int_0^\delta\!\cdots\!\int_0^\delta
    G^{(\ell)}(i\delta+t_1+\cdots+t_\ell)
    \,dt_1\cdots dt_\ell.
\end{align*}
Consequently,
\begin{align*}
    |\Delta^\ell y_i|
     & \le K^{-\ell}\|G^{(\ell)}\|_\infty
    \lesssim K^{-\ell}.
\end{align*}
Applying the same representation $h-1$ times and leaving the final
difference unexpanded yields
\begin{align*}
    \Delta^h y_i
    ={}&\int_0^\delta\!\cdots\!\int_0^\delta
    \Bigl[
    G^{(h-1)}(i\delta+\delta+t_1+\cdots+t_{h-1}) \\
    &\hspace{42mm}
    -G^{(h-1)}(i\delta+t_1+\cdots+t_{h-1})
    \Bigr]\,dt_1\cdots dt_{h-1},
\end{align*}
where the integral is absent when $h=1$. Hence
\begin{align*}
    |\Delta^h y_i|
     & \le K^{-(h-1)}
    \sup_{\substack{u,v\in[0,2K-1] \\|u-v|\le h/K}}
    |G^{(h-1)}(u)-G^{(h-1)}(v)|
    \lesssim K^{-(h-1+\theta)}
    =K^{-t}.
\end{align*}
Since $t\in(h-1,h]$, these are precisely the bounds in
\eqref{eq:smooth-flattening-differences}.

This proves the claim for $d=2$ and $K\ge h+1$. For $d>2$, enumerate the
tuples $(\beta_2,\dots,\beta_d)$ by
\begin{align*}
    \nu(\bbeta)=\sum_{i=2}^d\beta_iK^{i-2},
    \qquad \nu(\bbeta)\in\{0,\dots,R-1\},
    \qquad R=K^{d-1}.
\end{align*}
If $\tilde\bbeta(\nu)=(0,\beta_2(\nu),\dots,\beta_d(\nu))$ and
$\be_1=(1,0,\dots,0)^\top\in\R^d$ is the first standard basis vector,
replace the two-dimensional row functions above by
\begin{align*}
    F_\nu(u)
    =\partial^{\balpha}f\left(
    \frac{\tilde\bbeta(\nu)}{K}+u\be_1
    \right),
    \qquad \nu=0,\dots,R-1.
\end{align*}
The construction depends only on the number of rows and their uniform
smoothness bounds, so the preceding argument applies verbatim with $K$
rows replaced by $R$ rows. Moreover,
\begin{align*}
    M-1=K(2R-1)=2K^d-K,
\end{align*}
and $\kappa(\bbeta)/K=2\nu(\bbeta)+\beta_1/K$, proving both parts in
general dimension. For $d=1$, there is only one row: setting
$y_i=\partial^{\balpha}f(i/K)$ for $i=0,\dots,K$ gives the result directly
by the same finite-difference estimates, and $M=K+1$ as required.

It remains to consider $K<h+1$. In this case, set
$y_{\kappa(\bbeta)}=\partial^{\balpha}f(\bbeta/K)$ and set all remaining
entries to zero. Then \eqref{eq:smooth-flattening-interpolation} holds and,
because $|y_i|\le1$, \eqref{eq:differences} gives
$|\Delta^\ell y_i|\le2^\ell$. Since $K\le h$, this implies
\eqref{eq:smooth-flattening-differences} after adjusting the implicit
constant, which may depend on $s$. \qed

\subsection{Proof of Proposition~\ref{prop:taylor-coefficient-retrieval}}
\label{sec:coefficient-retrieval-proof}

Fix $\balpha\in\N_0^D$ with $\|\balpha\|_1\le m$ and define
\begin{align*}
    t_{\balpha}=s-\|\balpha\|_1,
    \qquad
    h_{\balpha}=\ceil{t_{\balpha}}
    =\ceil{s}-\|\balpha\|_1.
\end{align*}
By \cref{prop:smooth-flattening}, the array
$(\partial^{\balpha}f(\bbeta/K)/\balpha!)_{\bbeta}$ can be embedded into a
sequence $y_0,\ldots,y_{M-1}$ satisfying
\begin{align*}
    \max_i|\Delta^\ell y_i|
    &\lesssim K^{-\min\{\ell,t_{\balpha}\}},
    \qquad \ell=0,\ldots,h_{\balpha}.
\end{align*}
After
division by a constant depending only on $s$, the
differences below order $h_{\balpha}$ are bounded by one and those of
order $h_{\balpha}$ are bounded by $K^{-t_{\balpha}}$.
The encoding base $\zeta_{\balpha}=2^{h_{\balpha}+3}+2$ satisfies
$\zeta_{\balpha}\le\zeta\le N$, so
$q\le\floor{\log_{\zeta_{\balpha}}N}$.
Applying \cref{prop:point-fitting-diff} with
\begin{align*}
    J=M,
    \qquad h=h_{\balpha},
    \qquad \lambda=K^{-t_{\balpha}}
\end{align*}
and rescaling the output gives a
network $\phi_{{\rm fit},\balpha}\in\wt\Fcal_{1,1}(N,L,N)$ satisfying
\begin{align}
    \label{eq:scalar-coefficient-retrieval}
    \left|\phi_{{\rm fit},\balpha}
    \left(\frac{\kappa(\bbeta)+1}{M}\right)
    -\frac{\partial^{\balpha}f(\bbeta/K)}{\balpha!}\right|
    \lesssim K^{-t_{\balpha}}.
\end{align}
For grid points $\bx=\bbeta/K$, the normalized flattened index is the
affine function
\begin{align*}
    \rho_K(\bx)
    =\frac{Kx_1+2\sum_{j=2}^D K^j x_j+1}{M}
    =\frac{\kappa(\bbeta)+1}{M}.
\end{align*}
Its coefficients are bounded by two uniformly in $K$. Consequently,
$\phi_{{\rm coef},\balpha}=\phi_{{\rm fit},\balpha}\circ\rho_K$ satisfies $\phi_{{\rm coef},\balpha}(\bbeta/K)=\phi_{{\rm fit},\balpha}((\kappa(\bbeta)+1)/M)$, and parallelizing over all
$\|\balpha\|_1\le m$ gives
\begin{align*}
    \phi_{\rm coef}=(\phi_{{\rm coef},\balpha})_{\|\balpha\|_1\le m} \in\wt\Fcal_{D,\binom{D+m}{m}}(N,L,N).
\end{align*}
The bound
\eqref{eq:simultaneous-coefficient-retrieval} follows from
\eqref{eq:scalar-coefficient-retrieval}. \qed

\subsection{Proof of Proposition~\ref{lem:poly-approx}}
\label{sec:monomial-proof}

\begin{lemma}[Approximation of $x^2$]
    \label{lem:square-approx}
    For every $N\ge2$ and $L\in\N_+$, setting
    $q=\lfloor\log_2N\rfloor$, there is
        $\phi_{\rm sq}
        \in\Fcal_{1,1}(2N,2L,4N)$
    satisfying
    \begin{align*}
        \sup_{x\in[0,1]}
        \bigl|\phi_{\rm sq}(x)\bigr| \le 1 \qquad \text{and} \qquad
        \sup_{x\in[0,1]}
        \bigl|\phi_{\rm sq}(x)-x^2\bigr|
        \le N^{-L}.
    \end{align*}
\end{lemma}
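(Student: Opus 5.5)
The plan is to realize the square-function construction \eqref{eq:square-interpolant}, $S_r(x)=x-\sum_{j=1}^r4^{-j}\tau_j(x)$, with $r$ of order $L\log_2 N$, but to use width to produce several sawtooth levels per layer pair. With a single sawtooth per layer we would only reach $r\asymp L$ levels, and so error $4^{-L}$ rather than $N^{-L}$. Since $S_r$ is the linear interpolant of $x^2$ on the grid $2^{-r}\Z\cap[0,1]$, we have $0\le S_r(x)-x^2\le 4^{-r-1}$ and $0\le S_r\le 1$ on $[0,1]$. So it suffices to take $r=qL$ with $q=\lfloor\log_2N\rfloor$: then $4^{-r-1}\le 2^{-2qL}\le N^{-L}$, because $2^{2q}\ge 2^{2(\log_2N-1)}=N^2/4\ge N$ fails only for $N<4$. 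Those small cases I would handle by noting that for $N\in\{2,3\}$ we have $q=1$ and $4^{-L-1}\le 2^{-L}\le$ hmm, $3^{-L}$ needs $4^{-L-1}\le 3^{-L}$, which holds. Hence $r=qL$ suffices in general.

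The key step is a block that maps $\tau_{jq}(x)$ to $\tau_{jq+1},\dots,\tau_{(j+1)q}$ in two hidden layers of width $O(N)$. I would use that $\tau_q$ is itself a CPwL function on $[0,1]$ with $2^q\le N$ pieces, slopes $\pm2^q\le N$, and value $0$ at $0$. By \cref{lem:linear-spline}, the vector $(\tau_1,\dots,\tau_q)$ as a function of $u\in[0,1]$ is realized by one hidden layer of width $2^q+1\le N+1$. The slopes are at most $2^q\le N$, so the output weights are at most $2N$ after combining jumps; I would aim to check the bound $4N$ here.

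Carrying the running partial sum $s_j=x-\sum_{i\le jq}4^{-i}\tau_i(x)$ then takes one extra unit, since $s_j\ge0$ on $[0,1]$ (it is $S_{jq}$). The value $\tau_{jq}$ is likewise nonnegative and passes through the ReLU unchanged. Each block is thus one hidden layer computing the hinges of $u=\tau_{jq}$, followed by an affine map producing $\tau_{(j+1)q}(x)$ and the updated sum $s_{j+1}$, whose coefficients $4^{-i}\cdot(\text{jumps})$ are bounded by $2N$. I would realize that affine output as the input to the next hinge layer, so there is no separate composition overhead. Alternatively, I would go through \cref{lem:network-calculus}(ii) and spend two layers per block, which gives total depth $2L$ and width $\le 2N$ with bounded parameters, matching $\Fcal_{1,1}(2N,2L,4N)$.

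The main obstacle is the careful bookkeeping of parameter magnitudes and width. Merging the spline output weights (up to $2\cdot2^q$) with the carried identity channels must stay within $4N$. The biases of the hinges are $-k2^{-q}\in[-1,0]$, and the width must include the $2^q+1$ hinge units plus two carried channels, which is $\le 2N$ since $N\ge2$. The final output is $s_L=S_{qL}$, which lies in $[0,1]$ on $[0,1]$, giving both stated bounds.
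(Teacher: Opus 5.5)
Your proposal is correct and follows the paper's proof essentially step for step. You realize $S_{qL}$ through $L$ blocks, each using one hinge layer on the dyadic grid of mesh $2^{-q}$ (via \cref{lem:linear-spline}) to advance the sawtooth by $q$ levels while carrying the nonnegative partial sum, and you conclude from $0\le S_{qL}-x^2\le 4^{-qL-1}\le N^{-L}$.

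The only slip is in your width count. The hinge basis has $2^q$ units, not $2^q+1$, and $\tau_{jq}$ need not be carried alongside its own hinges, so the width is $2^q+1\le 2N$. Your count $2^q+3$ would exceed $2N$ at $N=2$.
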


\begin{proof}[Proof of \cref{lem:square-approx}]
    Recall that the function $S_r$ in
    \eqref{eq:square-interpolant} is a linear interpolant of $x^2$ on $2^r+1$ equally spaced points in $[0,1]$. Linear interpolation of a quadratic on intervals of
    length $2^{-r}$ gives
    \begin{align}
        \label{eq:square-interpolation-error}
        0\le S_r(x)-x^2\le 4^{-(r+1)},
        \qquad x\in[0,1].
    \end{align}
    Let $q=\lfloor\log_2N\rfloor$. The map
    $x\mapsto(S_q(x),\tau_q(x))$ is CPwL on a dyadic grid with $2^q\le N$ pieces. The slopes of
    $S_q$ have magnitude at most two, and the slopes of $\tau_q$ have magnitude
    $2^q$. \Cref{lem:linear-spline} with $d'=2$ and $A=2^q$ therefore realizes
    this map by a network in $\Fcal_{1,2}(2^q,1,2^{q+1})\subseteq\Fcal_{1,2}(N,1,2N)$.

    To construct a network of appropriate layout, we group the $qL$ refinements into $L$ blocks. Write
    \begin{align}
        \label{eq:square-grouped-sum}
        S_{qL}(x)
        &=x-\sum_{\ell=0}^{L-1}\sum_{i=1}^q
            4^{-(\ell q+i)}\tau_{\ell q+i}(x)
        =x-\sum_{\ell=0}^{L-1}4^{-\ell q}
            \sum_{i=1}^q4^{-i}
            \tau_i\bigl(\tau_{\ell q}(x)\bigr),
    \end{align}
    where the second identity uses
    $\tau_{m+n}=\tau_m\circ\tau_n$. To realize this  starting from
    $(a_0,y_0)=(x,x)$ and iterating over
    $\ell=0,\ldots,L-1$, perform the update
    \begin{align*}
        a_{\ell+1}
         &=a_\ell-4^{-\ell q}\{y_\ell-S_q(y_\ell)\},
        &
        y_{\ell+1}
         &=\tau_q(y_\ell).
    \end{align*}
    Since $y-S_q(y)=\sum_{i=1}^q4^{-i}\tau_i(y)$, \eqref{eq:square-grouped-sum} shows inductively that
    $y_\ell=\tau_{\ell q}(x)$ and $a_\ell=S_{\ell q}(x)$. Both variables
    lie in $[0,1]$. Each block can thus use one hidden layer for the common
    hinge basis and one to pass the updated state through ReLU. Carrying
    $a_\ell$ requires one additional unit, so the width is at most
    $2^q+1\le2N$; all parameters are bounded by $4N$. The $L$ blocks hence
    realize
    $\phi_{\rm sq}=S_{qL}\in\Fcal_{1,1}(2N,2L,4N)$. Since
    $2\lfloor\log_2N\rfloor\ge\log_2N$ for $N\ge2$,
    \eqref{eq:square-interpolation-error} implies
    \begin{align}
        \label{eq:square-network-error}
        \sup_{x\in[0,1]}
        |\phi_{\rm sq}(x)-x^2|
        \le4^{-qL}\le N^{-L},
    \end{align}
    completing the proof.
\end{proof}

\begin{proof}[Proof of \cref{lem:poly-approx}]
    Let $\phi_{\rm sq}$ be the network from
    \cref{lem:square-approx}.
    For $u,v\in[0,1]$, set
    \begin{align*}
        \phi_{\rm mult}(u,v)
        =\clip_{[0,1]}\left(
            2\phi_{\rm sq}\left(\frac{u+v}{2}\right)
            -\frac12\phi_{\rm sq}(u)
            -\frac12\phi_{\rm sq}(v)
        \right).
    \end{align*}
    Since
    $uv\in[0,1]$, clipping does not increase the error, and \eqref{eq:square-network-error} gives
    \begin{align}
        \label{eq:multiplication-network-error}
        |\phi_{\rm mult}(u,v)-uv|
         \le3N^{-L}.
    \end{align}
    Three parallel square networks followed by the clipping network from \cref{lem:clipping} give
    \begin{align*}
        \phi_{\rm mult}\in\Fcal_{2,1}(6N,2L+1,4N).
    \end{align*}

    Finally, let $r=\|\balpha\|_1$ and list coordinates
    $z_1,\ldots,z_r$ so that coordinate $x_j$ occurs $\alpha_j$ times. For $r\ge2$, define successively
    \begin{align*}
        p_1=z_1,
        \qquad
        p_j=\phi_{\rm mult}(p_{j-1},z_j),
        \quad j=2,\ldots,r.
    \end{align*}
    Every $p_j$ takes values in $[0,1]$. Using
    \eqref{eq:multiplication-network-error} and $z_j\le1$ inductively,
    \begin{align*}
        \left|p_j-\prod_{i=1}^jz_i\right|
        &\le3N^{-L}
          +\left|p_{j-1}-\prod_{i=1}^{j-1}z_i\right|
        \le3(j-1)N^{-L}.
    \end{align*}
    We carry the $D$ input coordinates through identity units
    while composing the multiplication modules. This gives width at most
    $6N+D\le(D+6)N$. Inserting one identity layer between consecutive
    modules keeps the parameter bound at $4N$, while the depth is at most
        $(r-1)(2L+2)\le4kL,$
    so
    $p_r\in\Fcal_{D,1}((D+6)N,4kL,4N)$. Taking
    $\phi_{{\rm mon},\balpha}=p_r$ proves the claim for $r\ge2$. For $r\in\{0,1\}$, the constant one and a coordinate projection are exact and can be padded to the same class.
\end{proof}

\subsection{Proof of Proposition~\ref{prop:localization}}
\label{sec:localization-proof}

The construction is a bounded-weight variant of
\citet[Proposition~3.1]{shen2022optimal}. We first realize staircases on
coarser grids directly and then nest two of them.

For a grid size $K_0\le N^2L$ and $\delta_0\in(0,1/(3K_0)]$, let
$S_{K_0,\delta_0}$ be the continuous function with value
$k/K_0$ on intervals $[k/K_0,(k+1)/K_0-\delta_0\ind_{k\le K_0-2}]$ for
$k=0,\dots,K_0-1$ that is affine between the intervals and constant outside $[0,1]$. This function has
at most $2K_0$ knots, separated by at least $\delta_0$, and takes values
in $[0,1]$. If $\delta_0^{-1}$ is bounded by a fixed polynomial in $N$
and $L$, then \cref{prop:pw-linear}, applied with parameters $2N, 2L$,
gives
\begin{align}
    \label{eq:direct-staircase}
    S_{K_0,\delta_0}
    &\in\wt\Fcal_{1,1}(N,L,N).
\end{align}

Now let $K=K_1K_2$. The assumptions on $\delta$
give $K_1\delta \le 1/(3K_2)$ and $\delta^{-1} \le K^r \le(N^2L)^{2r}$, so
\eqref{eq:direct-staircase} applies to $S_{K_1,\delta}$ and
$S_{K_2,K_1\delta}$. Define
\begin{align*}
    S_{K,\delta}(x)
    &= S_{K_1,\delta}(x)
    +\frac{1}{K_1}S_{K_2,K_1\delta}
    \Bigl(K_1\bigl(x-S_{K_1,\delta}(x)\bigr)\Bigr).
\end{align*}
If $K_2=1$, the second staircase vanishes and
$S_{K,\delta}=S_{K_1,\delta}$. Otherwise, let $x\in[k/K,(k+1)/K-\delta\ind_{k\le K-2}]$ and write
$k = k_1K_2 + k_2$ with $0 \le k_2 < K_2$. Then $x$ lies on the segment
of $S_{K_1,\delta}$ with value $k_1/K_1$, and $K_1(x - k_1/K_1)$ lies on
the segment of $S_{K_2,K_1\delta}$ with value $k_2/K_2$. Hence,
\begin{align*}
    S_{K,\delta}(x) = k_1/K_1 + k_2/K = k/K.
\end{align*}
By \cref{lem:amplification},
the affine maps and scalar multiplications in this expression can be
implemented with depth $O(L)$, width $O(N)$, and weight bound $O(N)$.
Combining these networks with the two staircase networks using
\cref{lem:network-calculus} shows that $S_{K,\delta} \in \wt\Fcal_{1,1}(N,L,N)$. \qed

\subsection{Proof of Proposition~\ref{prop:median}}
\label{sec:median-proof}

Let $\wt\phi(\bx)$ be the median of $\phi(\bz_{-D}),\dots,\phi(\bz_D)$, with $\bz_i$ from \eqref{eq:diagonal-shifts}. The bound $|\wt\phi(\bx)-f(\bx)|\le\eps+\omega_f(D\delta)$ was shown in \cref{sec:median-smoothing}.

It remains to realize $\wt\phi$ by a network. We build it from three
parts and connect consecutive parts by
identity layers.
First, on $[0,1]^D$,
\begin{align*}
    z_{i,j}
    &= \sigma(x_j + i\delta) - \sigma(x_j + i\delta - 1),
\end{align*}
where the second term is omitted for $i \le 0$. With at most
two units for each of the $D(2D+1)$ pairs $(i, j)$, this is a network
with one hidden layer of width at most $2D(2D+1)$ and parameters in
$[-1, 1]$. Second, $2D+1$ copies of $\phi$, applied in parallel to
$\bz_{-D}, \dots, \bz_D$, compute
$y_1 = \phi(\bz_{-D}), \dots, y_{2D+1} = \phi(\bz_D)$. Finally, the network
\begin{align*}
    \phi_{\rm med}
    &\in \Fcal_{2D+1,1}\bigl((2D+1)(2D+2), 2\ceil{\log_2(2D+1)} + 5, D+1\bigr)
\end{align*}
from \cref{lem:median-network} with $m = D$ computes their median
$\wt\phi(\bx)$.

The three parts have depths $1$, $L$, and $2\ceil{\log_2(2D+1)} + 5$, and
two identity layers connect them. Since $N \ge 2D+2$ and $B \ge D+1$, all
widths are at most $(2D+1)N$ and all parameters are bounded by $B$. This
gives the stated class. \qed

\section{Proofs of the Main Results}
\label{sec:main-proofs}

\subsection{Proof of Theorem~\ref{thm:main-approx}}
\label{sec:proof-main-approx}

Fix $f\in\Hcal_1^s([0,1]^D)$ and set $\zeta=2^{\ceil{s}+3}+2$. Define
\begin{align*}
    m&=\ceil{s}-1,
    &q&=\floor{\log_\zeta N},
    &K&=\floor{(N^2L^2q)^{1/D}},
    &M&=2K^D-K+1.
\end{align*}
We may without loss of generality assume that $N\ge\zeta$ after inflating the width by a constant factor depending only on $s$. If $K<2D+1$, then
$N^2L^2q<(2D+1)^D$ and $\log N\le2q\log\zeta$, so the zero network satisfies
\begin{align*}
    \|f - 0\|_\infty
    &\le1
    \le(2\log\zeta)^{s/D}(2D+1)^s\,\epsilon_{N,L}.
\end{align*}
We may therefore also assume $K\ge 2D+1$. Then $q\ge1$ and $K\ge2$, so
\begin{align*}
    M\le2N^2L^2q+1\le3N^2L^2q.
\end{align*}
We now construct the final network in \cref{fig:main-network}.

\medskip
\noindent\textit{Step 1: Parameters and coefficient retrieval}

\medskip
\noindent
\cref{prop:taylor-coefficient-retrieval} gives
\begin{align*}
    \phi_{\rm coef}
    &=\bigl(\phi_{{\rm coef},\balpha}\bigr)_{\|\balpha\|_1\le m}
      \in\wt\Fcal_{D,\binom{D+m}{m}}(N,L,N),
\end{align*}
where, with
$c_{\balpha,\bbeta}=\partial^{\balpha}f(\bbeta/K)/\balpha!$,
\begin{align}
    \label{eq:proof-coefficient-error}
    \left|\phi_{{\rm coef},\balpha}(\bbeta/K)-c_{\balpha,\bbeta}\right|
    \lesssim K^{-(s-\|\balpha\|_1)}.
\end{align}
Since $|c_{\balpha,\bbeta}|\le1$, composing each
$\phi_{{\rm coef},\balpha}$ with $\clip_{[-1,1]}$ (\cref{lem:clipping}) does not increase this
error and adds one hidden layer. We may therefore assume
$|\phi_{{\rm coef},\balpha}|\le1$. By \cref{prop:taylor-coefficient-retrieval}, the constant in
\eqref{eq:proof-coefficient-error} depends only on $s$, and the width of
$\phi_{\rm coef}$ grows by a factor $O(D^m)$.

\medskip
\noindent\textit{Step 2: Localization}

\medskip
\noindent
Let
\begin{align*}
    \delta:=K^{-\max\{2,\ceil{s}\}} \le1/((2D+1)K).
\end{align*}
We apply \cref{prop:localization} with $r=\max\{2,\ceil{s}\}$,  and $K_1=N^2L, K_2= Lq$ if $D=1$ or with $K_1=K, K_2=1$ if $D\ge2$.
This gives a network
$S_{K,\delta}\in\wt\Fcal_{1,1}(N,L,N)$ with $S_{K,\delta}(x) = k/K$ for
$x\in[k/K,(k+1)/K-\delta\ind_{k\le K-2}]$. Define
\begin{align*}
    \Lambda(\bx)
    &=\bigl(S_{K,\delta}(x_1),\ldots,
            S_{K,\delta}(x_D)\bigr),
    \qquad \bx\in[0,1]^D.
\end{align*}
Its $D$ parallel staircases increase the width by the factor $D$.
For $\bx\notin\Omega_{K,\delta}$, let
$\beta_j(\bx)=\min\{\floor{Kx_j},K-1\}$ and
$\bu(\bx)=\bx-\Lambda(\bx)$. Then
\begin{align*}
    \Lambda(\bx)
    &=\frac{\bbeta(\bx)}{K},
    \qquad
    \bu(\bx)\in[0,K^{-1}]^D.
\end{align*}

\medskip
\noindent\textit{Step 3: Product and monomial networks}

\medskip
\noindent
Observe that since $q\le N$ and $L\le N^L$,
\begin{align*}
    K^s
    &\le(N^2L^2q)^{s/D}
    \le N^{5sL/D}.
\end{align*}
Thus, for
$\|\balpha\|_1\le m$,
\cref{lem:poly-approx} with degree $\max\{1, m\}$ and depth
 $\ceil{5s/D} L$ gives a network $\phi_{{\rm mon},\balpha}$ of width $(D+6)N$ satisfying
\begin{align*}
    0\le \phi_{{\rm mon},\balpha}\le1,
    \qquad
    \sup_{\bu\in[0,1]^D}
    |\phi_{{\rm mon},\balpha}(\bu)-\bu^{\balpha}|
    \lesssim N^{-\ceil{5s/D}L} \le K^{-s}.
\end{align*}
Let $\phi_{\rm mult}$ be the multiplication network from the proof
of \cref{lem:poly-approx} with depth parameter $\ceil{5s/D}L$. The identity
$ab=4((a+2)/4)b-2b$ shows that
\begin{align*}
    \phi_{\rm prod}(a,b)
    &=4\phi_{\rm mult}\left(\frac{a+2}{4},b\right)-2b
\end{align*}
belongs to $\wt\Fcal_{2,1}(N,L,N)$ and satisfies
\begin{align*}
    \sup_{(a,b)\in[-2,2]\times[0,1]}
    |\phi_{\rm prod}(a,b)-ab|
    \lesssim N^{-\ceil{5s/D}L} \le K^{-s}.
\end{align*}

\medskip
\noindent\textit{Step 4: Local Taylor approximation}

\medskip
\noindent
Now define
\begin{align}
    \label{eq:proof-local-taylor-network}
    \phi(\bx)
    =\sum_{\|\balpha\|_1\le m}
    \phi_{\rm prod}\left(
        \phi_{{\rm coef},\balpha}(\Lambda(\bx)),
        \phi_{{\rm mon},\balpha}(\bu(\bx))
    \right).
\end{align}
Since there are $\binom{D+m}{m}\le(D+1)^m$ summands,
\cref{lem:network-calculus} gives
\begin{align*}
    \phi&\in\wt\Fcal_{D,1}(N,L,N).
\end{align*}
Fix $\bx\notin\Omega_{K,\delta}$, abbreviate
$\bbeta=\bbeta(\bx)$ and $\bu=\bu(\bx)$, and recall that
$|\phi_{{\rm coef},\balpha}(\bbeta/K)|\le1$. Moreover,
$|\bu^{\balpha}|\le K^{-\|\balpha\|_1}$. For every
$\|\balpha\|_1\le m$,
\begin{align*}
    &\quad \, \left|
    \phi_{\rm prod}\bigl(\phi_{{\rm coef},\balpha}(\bbeta/K),\phi_{{\rm mon},\balpha}(\bu)\bigr)
    -c_{\balpha,\bbeta}\bu^{\balpha}
    \right|\\
    &\le
    \left|\phi_{\rm prod}\bigl(\phi_{{\rm coef},\balpha}(\bbeta/K),\phi_{{\rm mon},\balpha}(\bu)\bigr)
    -\phi_{{\rm coef},\balpha}(\bbeta/K)\phi_{{\rm mon},\balpha}(\bu)\right|\\
    &\qquad+
    |\phi_{{\rm coef},\balpha}(\bbeta/K)|
    |\phi_{{\rm mon},\balpha}(\bu)-\bu^{\balpha}|
    +|\phi_{{\rm coef},\balpha}(\bbeta/K)-c_{\balpha,\bbeta}|
    |\bu^{\balpha}|\\
    &\lesssim
    K^{-s}+K^{-(s-\|\balpha\|_1)}K^{-\|\balpha\|_1} \\
    &=K^{-s}.
\end{align*}
Taylor's theorem gives
\begin{align*}
    \left|f(\bx)-
    \sum_{\|\balpha\|_1\le m}
    c_{\balpha,\bbeta}\bu^{\balpha}\right|
    \le\frac{D^m}{m!}K^{-s} \lesssim K^{-s}.
\end{align*}
Combining these two bounds in \eqref{eq:proof-local-taylor-network},
\begin{align}
    \label{eq:proof-away-boundaries}
    \sup_{\bx\in[0,1]^D\setminus\Omega_{K,\delta}}
    |\phi(\bx)-f(\bx)|
    \lesssim K^{-s}.
\end{align}

\medskip
\noindent\textit{Step 5: Smoothing}

\medskip
\noindent
Since $f$ is $\min\{s, 1\}$-Hölder continuous with constant $D$, its modulus of continuity satisfies
\begin{align*}
    \omega_f(D\delta)
    &\le D (D\delta)^{\min\{s,1\}}\le D^2K^{-s}.
\end{align*}
Applying \cref{prop:median} to
\eqref{eq:proof-away-boundaries} gives
$\wt\phi\in\wt\Fcal_{D,1}(N,L,N)$ such that
\begin{align*}
    \|\wt\phi-f\|_\infty
    &\lesssim K^{-s}
    \le2^s(2\log\zeta)^{s/D}(N^2L^2\log N)^{-s/D}.
\end{align*}
This increases the width by the factor $2D+1$ and adds
$O(\log D)$ layers.
By \cref{lem:magnitude-width} with $B'=1$,
$\wt\Fcal_{D,1}(N,L,N) \subseteq \wt\Fcal_{D,1}(N,L,1)$.
All other constants depend only on $s$ and polynomially on $D$ by
\cref{lem:network-calculus,lem:magnitude-width}. \qed

\subsection{Proof of Theorem~\ref{thm:lower-bound} and Corollary~\ref{cor:network-entropy}}
\label{sec:entropy-proof}

The optimality results rely on the following entropy comparison.
\begin{lemma}[Approximation and entropy]\label{lem:approx-entropy}
    Let $\Hcal$ and $\Gcal$ be subsets of a metric space with metric $d$ and
    let $\epsilon > 0$. If
    $\sup_{f \in \Hcal}\inf_{g \in \Gcal}d(f, g) \le \epsilon$, then
    \begin{align*}
        \mathcal N(\epsilon, \Gcal, d)
         & \ge \mathcal N(3\epsilon, \Hcal, d).
    \end{align*}
\end{lemma}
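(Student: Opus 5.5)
The plan is to compare covers via the triangle inequality: approximate each element of $\Hcal$ by an element of $\Gcal$, then by a cover element of $\Gcal$, and convert the resulting $2\epsilon$-cover of $\Hcal$ (with centers not necessarily in $\Hcal$) into a $3\epsilon$-cover. The only subtlety is that the infimum in the hypothesis need not be attained, so we must absorb a slack. The convention of the paper is that cover elements $g_i$ may lie anywhere in the ambient space $V$ (the definition only requires $\{g_1,\dots,g_M\}\subset V$). With that convention the argument is direct; the main obstacle is precisely handling the non-attained infimum, and this is where the factor $3$ rather than $2$ provides room.

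Concretely, if $\mathcal N(\epsilon,\Gcal,d)=\infty$ there is nothing to prove, so I will let $\{g_1,\dots,g_M\}$ be an $\epsilon$-cover of $\Gcal$ with $M=\mathcal N(\epsilon,\Gcal,d)$. Fix $f\in\Hcal$ and any $\eta>0$. By hypothesis there is $g\in\Gcal$ with $d(f,g)\le\epsilon+\eta$, and there is an index $i$ with $d(g,g_i)\le\epsilon$. Then
\begin{align*}
    d(f,g_i)\le d(f,g)+d(g,g_i)\le 2\epsilon+\eta.
\end{align*}
Choosing $\eta=\epsilon$ gives $d(f,g_i)\le3\epsilon$. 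Hence $\{g_1,\dots,g_M\}$ is a $3\epsilon$-cover of $\Hcal$, and therefore $\mathcal N(3\epsilon,\Hcal,d)\le M=\mathcal N(\epsilon,\Gcal,d)$.

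If instead one insisted on centers inside the covered set, I would modify the construction as follows. Keep only those indices $i$ whose ball $\{h: d(h,g_i)\le2\epsilon+\eta\}$ meets $\Hcal$, and pick a representative $f_i\in\Hcal$ in each. The triangle inequality then gives radius $2(2\epsilon+\eta)$, which exceeds $3\epsilon$. I do not expect this variant to be needed, since the paper's definition allows arbitrary centers in $V$.
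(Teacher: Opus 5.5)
Your proof is correct and is essentially the paper's own argument. The paper likewise uses $\inf_{g\in\Gcal} d(f,g)\le\epsilon<2\epsilon$ to pick $g\in\Gcal$ with $d(f,g)<2\epsilon$, then a cover element within $\epsilon$ of $g$, so the $\epsilon$-cover of $\Gcal$ (with centers allowed anywhere in the ambient space) is itself a $3\epsilon$-cover of $\Hcal$; your aside about centers restricted to the covered set is not needed under this convention.
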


\begin{proof}
    Let $\Ccal$ be an $\epsilon$-cover of $\Gcal$. Given $f \in \Hcal$, the
    definition of the infimum provides $g \in \Gcal$ with $d(f, g) < 2\epsilon$,
    and $\Ccal$ provides $\widetilde g$ with $d(g, \widetilde g) \le \epsilon$.
    The triangle inequality gives $d(f, \widetilde g) < 3\epsilon$, so $\Ccal$
    is a $3\epsilon$-cover of $\Hcal$.
\end{proof}

The lemma is combined with two established entropy bounds for the H\"older ball and ReLU networks, respectively. First, let $\epsilon\in(0,1/4)$ and $\Gcal$ be a class that approximates $\Hcal_1^s([0,1]^D)$ uniformly to accuracy $\epsilon$. Then \cref{lem:approx-entropy} gives
    \begin{align}
        \label{eq:approx-forces-entropy}
        \log \mathcal N\bigl(\epsilon, \Gcal, \|\cdot\|_\infty\bigr)
         & \ge \log \mathcal N\bigl(3\epsilon, \Hcal_1^s([0,1]^D),
        \|\cdot\|_\infty\bigr)
        \gtrsim \epsilon^{-D/s},
    \end{align}
    by the standard entropy estimate for Hölder balls
    \citep[e.g.,][Theorem~27.14]{polyanskiyWu2025}.
    Second, the covering-number bound of \citet[Theorem~2.1]{ou2024covering} gives, for all $\epsilon\in(0,1)$ and $B\ge1$,
    \begin{align}
        \log \mathcal N\bigl(\epsilon, \Fcal_{D,1}(N,L,B), \|\cdot\|_\infty\bigr)
        &\lesssim N^2L^2\log(NB) + N^2L\log(1/\epsilon). \label{eq:network-entropy-bound}
    \end{align}

\begin{proof}[Proof of \cref{thm:lower-bound}]
    Define
    \begin{align*}
        e_{N,L,B} := \sup_{f \in \Hcal_1^s([0, 1]^D)}
        \inf_{\phi \in \Fcal_{D,1}(N,L,B)}\|f - \phi\|_\infty.
    \end{align*}
    We may assume that $e_{N,L,B} \le 1/4$ because otherwise the claim is immediate for any $c\le1/4$.
    The inequalities \eqref{eq:approx-forces-entropy} and  \eqref{eq:network-entropy-bound} give
    \begin{align*}
       e_{N,L,B}^{-D/s} \lesssim \log \mathcal N(e_{N,L,B}, \Fcal_{D,1}(N,L,B), \|\cdot\|_\infty)  \lesssim N^2L^2\log(NB) + N^2L\log(1/e_{N,L,B}).
    \end{align*}
    If the second term on the right dominates the first, then $\eta=e_{N,L,B}^{-D/s}$ satisfies $\eta \lesssim N^2L\log \eta$, which implies
    \begin{align*}
        e_{N,L,B}^{-D/s}
        \lesssim N^2L\log(N^2L)
        \lesssim N^2L^2 \log N.
    \end{align*}
    Otherwise, the first term dominates and we have
    \begin{align*}
        e_{N,L,B} \gtrsim (N^2L^2\log(NB))^{-s/D}.
    \end{align*}
    In both cases, $1 \le B \le N^b$ implies
    \begin{align*}
        e_{N,L,B} \gtrsim (N^2L^2\log(NB))^{-s/D} \gtrsim (N^2L^2\log N)^{-s/D},
    \end{align*}
    as claimed.
\end{proof}

\begin{proof}[Proof of \cref{cor:network-entropy}]
    Let $A$ and $\Fcal^{N,L}$ be those from \cref{thm:main-approx}. The class $\Fcal^{N,L}$ is one of the classes the
    infimum in \eqref{eq:compact-optimal-entropy} ranges over, so
    \begin{align*}
        \inf_{\Gcal} \log \mathcal N(\epsilon_{N,L}, \Gcal,
        \|\cdot\|_\infty) \le  \log \mathcal N(\epsilon_{N,L}, \Fcal^{N,L},
        \|\cdot\|_\infty),
    \end{align*}
    and \eqref{eq:network-entropy-bound} with $\log L \le L$ and $\log\log N \lesssim \log N$ gives
    \begin{align*}
        \log \mathcal N\bigl(\epsilon_{N,L}, \Fcal^{N,L}, \|\cdot\|_\infty\bigr)
         & \lesssim N^2L^2\log N + N^2L\log\bigl(N^2L^2\log N\bigr)
          \lesssim N^2L^2\log N.
    \end{align*}
    For the lower bound, $C \ge 1$ and \eqref{eq:approx-forces-entropy} give
    \begin{align*}
        \log \mathcal N\bigl(\epsilon_{N,L}, \Gcal, \|\cdot\|_\infty\bigr)
         & \ge \log \mathcal N\bigl(C\epsilon_{N,L}, \Gcal, \|\cdot\|_\infty\bigr)
         \gtrsim \epsilon_{N,L}^{-D/s}
        = N^2L^2\log N.
    \end{align*}
    Both quantities in
    \eqref{eq:compact-optimal-entropy} are therefore of order $N^2L^2\log N$.
\end{proof}

\subsection{Proof of Theorem~\ref{thm:regression}}
\label{sec:regression-proof}

By \cref{thm:main-approx} and \eqref{eq:compact-network-balance}, for every
$f_0\in\Hcal_1^s([0,1]^D)$ there is an $f_n\in\Fcal^{N_n,L_n}$ satisfying
\begin{align*}
\|f_n-f_0\|_{\infty,[0,1]^D}\le Cr_n.
\end{align*}
Since $f_0$ takes values in $[-1,1]$,
\begin{align*}
\inf_{f\in\Gcal_n}\|f-f_0\|_{L_2(P)}\le \|\clip_{[-1,1]}\circ f_n-f_0\|_{\infty,[0,1]^D}
\le \|f_n-f_0\|_{\infty,[0,1]^D}
\le Cr_n.
\end{align*}
Because clipping is nonexpansive in the uniform norm, any cover of
$\Fcal^{N_n,L_n}$ induces a cover of $\Gcal_n$. The covering-number
bound of \citet[Theorem~2.1]{ou2024covering} gives
\begin{align*}
\log\mathcal N\bigl(C^2r_n^2,\Gcal_n,\|\cdot\|_\infty\bigr)
&\lesssim N_n^2L_n
    \log\left(
    \frac{(AN_n+1)^{AL_n}}{C^2r_n^2}
    \right)
\lesssim N_n^2L_n^2\log N_n
\lesssim nr_n^2,
\end{align*}
where $A$ is the constant from \cref{thm:main-approx} and
depends only on $s$ and $D$. The second inequality follows from
\eqref{eq:compact-network-balance}, which also implies
$\log(1/r_n)\lesssim L_n\log N_n$.

Suppose that $n$ is large enough that $Cr_n<1/2$. Since the optimization error $r_n^2$ in \eqref{eq:compact-approx-erm} is at most $C^2r_n^2$, \citet[Theorem~4.1]{ou2024covering} applied with
$\|f_0\|_\infty\le1$ and
$\sup_{f\in\Gcal_n}\|f\|_\infty \le 1$ implies
\begin{align*}
\E_{f_0}\bigl[\|\widehat f_n-f_0\|_{L_2(P)}^2\bigr]
&\lesssim C^2r_n^2+
    \frac{1+\log\mathcal N(C^2 r_n^2,\Gcal_n,\|\cdot\|_\infty)}{n}
\lesssim r_n^2,
\end{align*}
where the constants depend only on $s$, $D$, and $\gamma$. For the finitely many $n$ with $C r_n\ge1/2$, the claim holds trivially because $\|\widehat f_n-f_0\|_\infty\le2$. \qed

\vskip 0.2in
\bibliography{hcm.bib}
\end{document}